\def\eqref#1{equation~\ref{#1}}
\def\1{\bm{1}}
\DeclareMathAlphabet{\mathsfit}{\encodingdefault}{\sfdefault}{m}{sl}
\SetMathAlphabet{\mathsfit}{bold}{\encodingdefault}{\sfdefault}{bx}{n}
\newcommand{\E}{\mathbb{E}}
\newtheorem{theorem}{Theorem}[section]
\newcommand{\ie}{i.e.\@\xspace}
\newcommand{\eg}{e.g.\@\xspace}
\renewcommand*\backref[1]{\ifx#1\relax \else (Cited on pg. #1) \fi}
\newcommand{\astfootnote}[1]{%
\let\oldthefootnote=\thefootnote%
\setcounter{footnote}{0}%
\renewcommand{\thefootnote}{\fnsymbol{footnote}}%
\footnote{#1}%
\let\thefootnote=\oldthefootnote%
}
\let\oldhypertarget\hypertarget
\renewcommand{\hypertarget}[2]{%
  \oldhypertarget{#1}{#2}%
    \protected@write\@mainaux{}{%
        \string\expandafter\string\gdef
          \string\csname\string\detokenize{#1}\string\endcsname{#2}%
    }%
  }
\newcommand{\myhyperlink}[1]{%
  \hyperlink{#1}{\csname #1\endcsname}%
  }
\newcommand{\minus}{\scalebox{0.75}[1.0]{$-$}}
\newcommand{\obs}{\text{obs}}
\newcommand{\mis}{\text{mis}}
\newcommand{\x}{{\bm{x}}}
\newcommand{\xm}{\x_\mis}
\newcommand{\tXm}{{\tilde{\bm{X}}_\mis}}
\newcommand{\txm}{\tilde{\x}_\mis}
\newcommand{\hxm}{\hat{\x}_\mis}
\newcommand{\xo}{\x_\obs}
\newcommand{\z}{{\bm{z}}}
\newcommand{\tz}{{\tilde \z}}
\newcommand{\hz}{{\hat \z}}
\newcommand{\tq}{\tilde q}
\newcommand{\tqe}{\tq_\epsilon}
\newcommand{\impdistr}[1][{}]{{f^{#1}(\xm \mid \xo)}}
\newcommand{\histnom}{\mathcal{H}}
\newcommand{\hist}{\histnom_\mis}
\newcommand{\tw}{\tilde w}
\newcommand{\bw}{\bar w}
\newcommand{\Expect}[2]{\E_{#1} \left[ #2 \right]}
\DeclarePairedDelimiterX{\infdivx}[2]{(}{)}{%
  #1\;\delimsize|\delimsize|\;#2%
}
\newcommand{\dif}{\mathop{}\!\mathrm{d}}
\newcommand{\oset}[3][0ex]{%
  \mathrel{\mathop{#3}\limits^{
    \vbox to#1{\kern-2\ex@
    \hbox{$\scriptstyle#2$}\vss}}}}
\newcommand*{\indep}{%
  \mathbin{%
    \mathpalette{\@indep}{}%
  }%
}
\newcommand*{\nindep}{%
  \mathbin{%
    \mathpalette{\@indep}{\not}%
  }%
}
\newcommand*{\@indep}[2]{%
  \sbox0{$#1\perp\m@th$}%
  \sbox2{$#1=$}%
  \sbox4{$#1\vcenter{}$}%
  \rlap{\copy0}%
  \dimen@=\dimexpr\ht2-\ht4-.2pt\relax
  \kern\dimen@
  {#2}%
  \kern\dimen@
  \copy0 %
} 
\title{Conditional Sampling of Variational Autoencoders via \\Iterated Approximate Ancestral Sampling} 
\author{\name Vaidotas Simkus \email vaidotas.simkus@ed.ac.uk\\
        \name Michael U.\ Gutmann \email michael.gutmann@ed.ac.uk \\
        \addr School of Informatics\\
        University of Edinburgh}
\begin{document}

\maketitle

\begin{abstract}
    Conditional sampling of variational autoencoders (VAEs) is needed in various applications, such as missing data imputation, but is computationally intractable.
    A principled choice for asymptotically exact conditional sampling is Metropolis-within-Gibbs (MWG). 
    However, we observe that the tendency of VAEs to learn a structured latent space, a commonly desired property, can cause the MWG sampler to get ``stuck'' far from the target distribution.
    This paper mitigates the limitations of MWG: we systematically outline the pitfalls in the context of VAEs, propose two original methods that address these pitfalls, and demonstrate an improved performance of the proposed methods on a set of sampling tasks.
\end{abstract}

\section{Introduction}

Conditional sampling of modern deep probabilistic models is an important but generally intractable problem. 
Variational autoencoders \citep[VAEs,][]{Kingma2013, Rezende2014a} are a family of deep probabilistic models that \emph{capture the complexity of real-world data distributions via a structured latent space}. 
The impressive modelling capability and the usefulness of the structured latent space make VAEs a model of choice in a broad range of domains from healthcare \citep{hanVariationalAutoencoderUnsupervised2019} and chemistry \citep{gomez-bombarelliAutomaticChemicalDesign2018} to images \citep{childVeryDeepVAEs2021} and audio \citep{VandenOord2017}.
Ancestral sampling can be used for efficient unconditional sampling of VAEs, but many downstream tasks, for example, prediction or missing data imputation \citep[\eg][Chapter~5.1.1]{Goodfellow2016}, instead require \emph{conditional sampling}. %
However, \emph{for VAEs, this is intractable}, and hence approximate methods are needed.

A canonical approximate method is Markov chain Monte Carlo \citep[MCMC, \eg][Chapter~27.4]{Barber2017} but the general lack of knowledge about the learnt VAE may make tuning, for example, picking a good proposal distribution, and hence successfully using MCMC samplers challenging. 
To make sampling easier, an approach called Metropolis-within-Gibbs \citep[MWG,][]{Mattei2018} re-uses the encoder, an auxiliary component from the training of the VAE, to construct a suitable proposal distribution in a Metropolis--Hastings-type algorithm \citep{metropolisEquationStateCalculations1953,hastingsMonteCarloSampling1970}.
The simplicity of MWG and its asymptotic convergence guarantees make it a compelling choice for conditional sampling of VAEs. 

While a structured latent space is often a desirable property of VAEs, enabling the modelling of complex distributions, we \emph{notice that this latent structure can cause the Markov chains of MWG to get ``stuck''} hence impeding conditional sampling.
In this paper we
\begin{itemize}
    \item Detail the potential pitfalls of Metropolis-within-Gibbs in the context of VAEs (\cref{sec:pitfalls}).
    \item Propose a modification of MWG, called adaptive collapsed-Metropolis-within-Gibbs (AC-MWG, \cref{sec:ac-mwg}), that mitigates the outlined pitfalls and prove its convergence. 
    \item Introduce an alternative sampling method, called latent-adaptive importance resampling (LAIR, \cref{sec:lair}), which demonstrates an improved sampling performance in our experiments.
    \item Evaluate the samplers on a set of conditional sampling tasks: (semi-)synthetic, where sampling from the ground truth conditional distributions is computationally tractable, and real-world missing data imputation tasks, where the ground truth distribution is not available. %
\end{itemize}

With the proposed methods we address the conditional sampling problem of VAEs, a key challenge to downstream application of this flexible family of models. 
Our methods build and improve upon the limitations of MWG enabling more accurate use of VAEs in important tasks like missing data imputation.

\section{Background: Conditional sampling of VAEs}
\label{sec:background-conditional-sampling}
We here describe the conditional sampling problem and the existing Gibbs-like methods that have been used to draw conditional samples.

\subsection{Problem and assumptions}
\label{sec:background-problem-assumptions}

Given a pre-trained variational autoencoder, whose generative model we denote as $p(\x, \z) = p(\x \mid \z) p(\z)$, where $\x = (\xo, \xm)$ are the visible and $\z$ are the latent variables, we would like to sample:%
\begin{align}
    p(\xm \mid \xo) = \frac{\int p(\xo, \xm, \z) \dif \z}{p(\xo)} = \int p(\xm \mid \xo, \z) p(\z \mid \xo) \dif \z. \label{eq:missing-given-observed-conditional}
\end{align}
The variables $\xm$ and $\xo$ are respectively the target/missing and conditioning/observed variables. 
This choice of notation is motivated by the correspondence between conditional sampling and probabilistic imputation of missing data \citep[][]{Rubin1987a, rubinMultipleImputation181996}.\footnote{\Cref{eq:missing-given-observed-conditional} corresponds directly to missing data imputation with missing-at-random (MAR) missingness pattern.}
Unlike unconditional generation, \emph{ancestral sampling of $p(\xm \mid \xo)$ is generally intractable since the posterior distribution $p(\z \mid \xo)$ is not accessible} and hence approximations are required.

In the rest of the paper we assume that the generative model is such that computation of $p(\xo \mid \z)$ and sampling of $p(\xm \mid \xo, \z)$ is tractable. This is typically the case for most VAE architectures due to conditional independence assumptions (\ie $x_j \indep \x_{\smallsetminus j} \mid \z$ for all $\forall j$) or the use of a Gaussian family for the decoder distribution $p(\x \mid \z)$. 
Moreover, we assume that the encoder distribution, or the amortised variational posterior, $q(\z \mid \x)$ \citep{Gershman2014} which approximates the model posterior $p(\z \mid \x)$, is available.\footnote{The variational posterior is typically available after fitting the VAE on complete data using standard variational Bayes \citep{Rezende2014a, Kingma2014}, or can be fitted afterwards using a real or generated complete data set.}

\subsection{Pseudo-Gibbs \texorpdfstring{\citep{Rezende2014a}}{(Rezende et al., 2014)}}
\label{sec:pseudo-gibbs}

\citet[][Appendix~F]{Rezende2014a} have proposed a procedure related to Gibbs sampling \citep{Geman1984}, also called pseudo-Gibbs \citep{Heckerman2000, Mattei2018}, that due to its generality and simplicity has been regularly used for missing data imputation with VAEs \citep[\eg][]{Rezende2014a, liLearningGenerateMemory2016, liApproximateInferenceAmortised2017, rezendeUnsupervisedLearning3D2018, boquetMissingDataTraffic2019}. 
Starting with some random imputations $\xm^0$ the procedure iteratively samples latents $\z^t \sim q(\z \mid \xo, \xm^{t-1})$ and imputations $\xm^t \sim p(\xm \mid \xo, \z^{t})$.\footnote{Superscript $t$ represents the sampler iteration.} This iterative procedure generates a Markov chain that subject to some conditions on the closeness of the variational posterior $q(\z \mid \xo, \xm)$ and the intractable model posterior $p(\z \mid \xo, \xm)$ converges asymptotically in $t$ to a distribution that approximately follows $p(\xm, \z \mid \xo)$ \citep[][Proposition~F.1]{Rezende2014a}. 
The sampler corresponds to an exact Gibbs sampler if $q(\z \mid \xo, \xm) = p(\z \mid \xo, \xm)$.

However, the equality $q(\z \mid \xo, \xm) = p(\z \mid \xo, \xm)$ generally does not hold due to at least one of the following issues: insufficient flexibility of the variational distributional family, amortisation gap, or inference generalisation gap \citep{cremerInferenceSuboptimalityVariational2018, zhangGeneralizationGapAmortized2021}. Hence, pseudo-Gibbs sampling may produce sub-optimal samples even in the asymptotic limit or completely fail to converge due to an incompatibility of $q(\z \mid \xo, \xm)$ and $p(\xm \mid \xo, \z)$.

\subsection{Metropolis-within-Gibbs \texorpdfstring{\citep{Mattei2018}}{(Mattei \& Frellsen, 2018)}}
\label{sec:metropolis-within-gibbs}

\citet[][Section~3.2]{Mattei2018} have proposed a simple modification of the pseudo-Gibbs sampler that can asymptotically in $t$ generate exact samples from $p(\xm \mid \xo)$.
The method incorporates a Metropolis--Hastings accept-reject step \citep{metropolisEquationStateCalculations1953,hastingsMonteCarloSampling1970} to correct for the mismatch between $q(\z \mid \xo, \xm)$ and $p(\z \mid \xo, \xm)$ followed by sampling from $p(\xm \mid \xo, \z)$, hence yielding a sampler in the Metropolis-within-Gibbs (MWG) family \citep[][Section~4.4]{gelmanInferenceIterativeSimulation1992}.
Specifically, at each iteration $t$ it generates the proposal sample $\tz \sim q(\z \mid \xo, \xm^{t-1})$ and accepts it as $\z^t=\tz$ with probability
\begin{align}
    \rho^t(\tz, \z^{t-1}; \xm^{t-1}) = \min\left\{1, \frac{p(\xo, \xm^{t-1} \mid \tz)p(\tz)}{p(\xo, \xm^{t-1} \mid \z^{t-1})p(\z^{t-1})} \frac{q(\z^{t-1} \mid \xo, \xm^{t-1})}{q(\tz \mid \xo, \xm^{t-1})}\right\}. \label{eq:mwg-acceptance-probability}
\end{align}
If the proposal $\tz$ is rejected, the latent sample from the previous iteration is used, so that $\z^t=\z^{t-1}$. Given $\z^t$, a new imputation $\xm^t$ is then sampled as in standard Gibbs sampling: $\xm^t \sim p(\xm \mid \xo, \z^t)$.
By incorporating the Metropolis--Hastings acceptance step, the pseudo-Gibbs sampler is transformed into an asymptotically exact MCMC sampler with  $p(\xm, \z \mid \xo)$ as stationary distribution even if $q(\z \mid \xo, \xm) \not = p(\z \mid \xo, \xm)$. 

Importantly, as noted by the authors, the asymptotic exactness of MWG comes, compared to the pseudo-Gibbs sampler, at little additional computational cost in each iteration: the quantities required for computing $\rho^t$ are also computed in the pseudo-Gibbs sampler, except for the often cheap prior evaluations $p(\z)$.

In summary, MWG has several desirable properties which make it an attractive choice for conditional sampling of VAEs: (i) it provides theoretical guarantees of convergence to the correct conditional distribution, (ii) it is simple to implement, and (iii) its per-iteration computational cost is relatively small, \ie one standard evaluation of a VAE, and is comparable to the cost of pseudo-Gibbs.
However, as we will see next, MWG is not free of important pitfalls.

\section{Pitfalls of Gibbs-like samplers for VAEs}
\label{sec:pitfalls}

\begin{figure}[tbp]
    \begin{center}
    \includegraphics[width=1\linewidth]{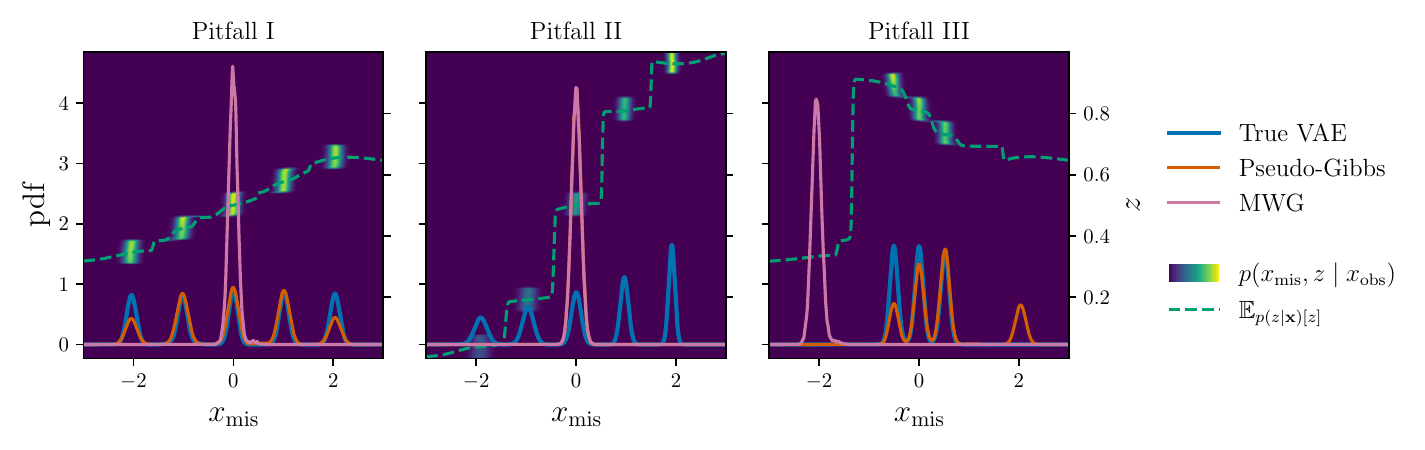}
    \end{center}
    \caption{\emph{Pitfalls of Gibbs-like samplers for VAE models.} (The figure is best viewed in colour.) Each panel corresponds to a distinct sampling problem, where the the observed variable $x_\obs \in \{x_0, x_1\}$ is, from left to right, $x_1=0$, $x_0=0$, and $x_1=1$. The line plots show the ground truth density $p(x_\mis \mid x_\obs)$ (blue) and the density of the samples obtained from the two Gibbs-like methods, pseudo-Gibbs (orange) and MWG (pink). 
    The contour plot shows the conditional joint density $p(x_\mis, z \mid x_\obs)$ of the VAE model over the missing variable $x_\mis$ (bottom axis) and the latent $z$ (right axis), and the dashed green curve shows the expected value of $z$ given $x_0$ and $x_1$. Both samplers were initialised with the same state and run for 50k iterations.
    \emph{Left:} MWG fails to mix between nearby modes (in the space of $z$; right axis) due to high rejection probability in \cref{eq:mwg-acceptance-probability}. 
    \emph{Center:} both pseudo-Gibbs and MWG fail to find modes that are far apart (in the space of $z$; right axis) due to narrow proposal distribution. (We note that MWG and pseudo-Gibbs lines overlap in this plot.)
    \emph{Right:} poor initialisation may leave MWG ``stuck'' far from the target distribution.
    \Cref{apx:synthetic-2d-vae-figures} contains an additional view of the pitfalls.
    }
    \label{fig:2dgridmog4_mwg_failures}
\end{figure}

Although the Gibbs-like samplers from \cref{sec:pseudo-gibbs,sec:metropolis-within-gibbs} are often used to conditionally sample from a VAE model, the structure of the latent space can cause poor non-asymptotic sampling behaviour. 
We here detail, in a form of three pitfalls, how this structure can affect the aforementioned samplers.
While the reported pitfalls are related to the known limitations of the classical Gibbs \citep{Geman1984} and Metropolis-within-Gibbs samplers \citep{gelmanInferenceIterativeSimulation1992}, %
we here work out their significance in the context of VAEs.
In \cref{fig:2dgridmog4_mwg_failures} we exemplify these pitfalls in an archetypical scenario using a synthetic 2-dimensional VAE model (for details about the model see \cref{apx:exp-details-synthetic-vae}).%
\footnote{We note that the variational distribution $q(\z \mid \x)$ in this section is constructed to be slightly wider than the model conditional $p(\z \mid \x)$ to differentiate the different modes of failure.} 
The proposed methods in the following section, AC-MWG (\cref{sec:ac-mwg}) and LAIR (\cref{sec:lair}), provide remedies for the reported pitfalls. 

\paragraph[Pitfall I. Strong relationship between the latents and the visibles can cause poor mixing.]{Pitfall \hypertarget{pitfalls:relationship-between-latents-and-visibles}{I}.
Strong relationship between the latents and the visibles can cause poor mixing.} 
We often train VAEs to learn a structured latent space that captures the complexity of the data.
This is typically achieved by using a decoder with a simple, often conditionally-independent, distribution. 
For example, to fit a binarised MNIST data set well with a Bernoulli decoder distribution $p(\x \mid \z) = \prod_d \mathrm{Bernoulli}(x_d \mid \z)$, the digits in the image space must be well-represented in the latent space and the variance of the decoder must be nearly 0, otherwise the model would produce noisy samples due to random ``flips'' of the pixels. 
Hence, in VAEs with simple decoders the complexity of modelling the visibles $\x$ is often converted to learning a complex structure in the latent space along with a near-deterministic mapping between the latents $\z$ and the visibles $\x$ as given by the decoder $p(\x \mid \z)$.
But this strong, near-deterministic, relationship can substantially inhibit the convergence and mixing properties of a sampler like Metropolis-within-Gibbs. 
This is because the proposed samples $\tz \sim q(\z)$ will be rejected with a high probability if the conditional distribution $p(\xm \mid \xo, \tz) \propto p(\xo, \xm \mid \tz)$ places little density/mass on the \emph{previous} value of $\xm = \xm^{t-1}$, as a small value of $p(\xo, \xm^{t-1} \mid \tz)$ will make the Metropolis--Hastings acceptance probability in \cref{eq:mwg-acceptance-probability} small. 
This small acceptance probability leads to Markov chains that get ``stuck'' in a mode and prevents the sampler from moving to nearby modes that are close in the latent space.
We illustrate this pitfall in \cref{fig:2dgridmog4_mwg_failures} (left). 
In this example, MWG (pink) fails to mix between the modes that are close in the space of latents. This failure occurs despite the proposal distribution generating samples from the neighbouring modes because such proposed samples are rejected by the Metropolis--Hastings step.
On the other hand, pseudo-Gibbs (orange) can mix between the modes since it does not use the Metropolis--Hastings step. %
\paragraph[Pitfall II. The encoder distribution generates proposals that are insufficiently exploratory.]{Pitfall \hypertarget{pitfalls:latent-multimodality}{II}. The encoder distribution generates proposals that are insufficiently exploratory.}
A further complication of the structured latent space is illustrated in \cref{fig:2dgridmog4_mwg_failures} (center). Here, the modes of the target distribution are sparsely dispersed in the latent space. 
In this example, we see that both MWG (pink) and pseudo-Gibbs (orange) fail to find distant modes.
This is because the proposal distribution, as given by the encoder that approximates the model posterior $p(\z \mid \x)$, is too ``narrow'' to propose values from the alternative modes. 
For example, given the upper-half of an MNIST image of number ``8'', it may not be possible to tell if the completed image should be an ``8'' or a ``9'', representing two modes of imputations. If the latent space representation of ``8'' and ``9'' are sufficiently far, then an encoder conditioned on a current imputation state, for example, $\xo\cup\xm^{t-1} \equiv \text{``9''}$, is unlikely to propose a $\tz$ that would decode into $\txm$ in the alternative mode, that is, $\xo\cup \txm \equiv \text{``8''}$.
On the other hand, even if the proposal distribution were wide enough to propose jumps to distant modes, MWG would still reject such proposals with high probability due to pitfall~\myhyperlink{pitfalls:relationship-between-latents-and-visibles} and thus prevent effective exploration.

\paragraph[Pitfalls III. Poor initialisation can cause sampling of the wrong mode.]{Pitfall \hypertarget{pitfalls:poor-initialisation}{III}. Poor initialisation can cause sampling of the wrong mode.}
As noted by \citet{Mattei2018} MWG for VAEs is extremely sensitive to initialisation, and to alleviate this they suggest initialising by first sampling using pseudo-Gibbs before switching to MWG. 
But, deciding when to stop the ``warm-up'' is not easy, and poor initialisation can make MWG get stuck. %
Moreover, initialisation via an (approximate) MAP using stochastic gradient ascent may also suffer from the multimodality issues described above. 
In \cref{fig:2dgridmog4_mwg_failures} (right) we demonstrate a case where MWG (pink) fails due to a poor initialisation. 

The limitations of Gibbs-like samplers described in pitfalls~\myhyperlink{pitfalls:relationship-between-latents-and-visibles}\nobreakdash-\myhyperlink{pitfalls:poor-initialisation} motivate our development of improved samplers. Interestingly, despite pseudo-Gibbs being theoretically inferior to MWG, we have seen in this section that pseudo-Gibbs can under some conditions perform better than MWG (\cref{fig:2dgridmog4_mwg_failures}). In the following sections we propose two different methods that, like pseudo-Gibbs and MWG, utilise the encoder of the VAE to propose transitions in the latent space, whilst mitigating pitfalls~\myhyperlink{pitfalls:relationship-between-latents-and-visibles}\nobreakdash-\myhyperlink{pitfalls:poor-initialisation} and having stronger theoretical guarantees than the simple pseudo-Gibbs method.

\section{Remedies}
\label{sec:remedies}

The Metropolis-within-Gibbs (MWG) sampler for conditional sampling of VAEs has several desirable properties (see \cref{sec:metropolis-within-gibbs}).
However, as discussed in the previous section, the Gibbs-like sampler can have poor non-asymptotic performance. %
In this section we propose two methods for conditional sampling of VAEs inspired by MWG that also mitigate its potential pitfalls (\cref{sec:pitfalls}). 
The key idea of the proposed methods is akin to ancestral sampling of \cref{eq:missing-given-observed-conditional}; first, the methods approximately sample the intractable posterior over the latents $p(\z \mid \xo)$, improve this approximation iteratively, and then sample from the decoder distribution $p(\xm \mid \xo, \z)$ conditional on the produced latent samples. 
In \cref{sec:ac-mwg} we propose a few simple modifications to the MWG sampler and demonstrate on a synthetic example how this mitigates the pitfalls of MWG\@. In \cref{sec:lair} we propose an alternative method based on adaptive importance sampling and likewise demonstrate on a synthetic example how it mitigates the pitfalls of MWG\@.
Detailed evaluation of the proposed methods is provided in \cref{sec:evaluation}
and the code to reproduce the experiments is available at \url{https://github.com/vsimkus/vae-conditional-sampling}.

\subsection{Adaptive collapsed-Metropolis-within-Gibbs}
\label{sec:ac-mwg}

We propose several modifications to the MWG sampler from \cref{sec:metropolis-within-gibbs} to mitigate the pitfalls outlined in \cref{sec:pitfalls}. The proposed sampler is summarised in \cref{alg:ac-mwg}.

First, to improve exploration and reduce the effects of poor initialisation (see pitfalls \myhyperlink{pitfalls:latent-multimodality} and \myhyperlink{pitfalls:poor-initialisation}) we introduce a prior--variational mixture proposal\footnote{Our mixture proposal is related to the small-world proposal of \citet{guanMarkovChainMonte2006}, which has been shown to improve performance in complicated heterogeneous and multimodal distributions.}  
\begin{align}
    \tqe (\z \mid \xo, \xm) = (1 - \epsilon) q(\z \mid \xo, \xm) + \epsilon p(\z), \label{eq:ac-mwg-mixture-proposal}
\end{align}
where $q(\z \mid \xo, \xm)$ is the variational encoder distribution, $p(\z)$ is the prior distribution of the VAE, and $\epsilon \in (0, 1)$ is the probability to sample from the prior.
Clearly this modification alone would not resolve the pitfalls of MWG, since proposals $\tz$ sampled from the prior $p(\z)$ would be rejected with high probability at the Metropolis--Hastings step due to disagreement with the current imputation $\xm^{t-1}$ in \cref{eq:mwg-acceptance-probability}.

Hence, we next propose changing the target distribution of the Metropolis--Hastings step from $p(\z \mid \xo, \xm)$ to $p(\z \mid \xo)$, such that a good proposal $\tz$ would not be rejected due to a disagreement with an imputation $\xm^{t-1}$ (see pitfall~\myhyperlink{pitfalls:relationship-between-latents-and-visibles}). The modified Metropolis--Hastings acceptance probability is defined as
\begin{align}
    \rho^t(\tz^t, \z^{t-1}; \txm) = \min\left\{1, \frac{p(\xo \mid \tz^t)p(\tz^t)}{p(\xo \mid \z^{t-1})p(\z^{t-1})} \frac{\tqe(\z^{t-1} \mid \xo, \txm)}{\tqe(\tz^t \mid \xo, \txm)}\right\}. \label{eq:ac-mwg-acceptance-probability}
\end{align}
Marginalising the missing variables $\xm$ out of the likelihood $p(\xo, \xm \mid \tz^t)$ corresponds to reducing the conditioning (or collapsing) in Gibbs samplers which is a common approach to improve mixing and convergence \citep{vandykPartiallyCollapsedGibbs2008,vandykMetropolisHastingsPartiallyCollapsed2015}. 
In our case, if the optimal proposal distribution $p(\z \mid \xo)$ were known, the sampler would become a standard ancestral sampler and would be maximally efficient, \ie it would draw an independent sample at each iteration. 
Moreover, rather than using the imputation $\xm^{t-1}$ from the previous iteration to condition the proposal distribution, as in MWG, we are here going to re-sample a random imputation $\txm$ from an available set of historical imputations $\hist^{t-1}$ that is updated adaptively with iterations $t$.

\begin{algorithm}[tpb]
    \caption{Adaptive collapsed-Metropolis-within-Gibbs}
    \label{alg:ac-mwg}
    \begin{algorithmic}[1]
    \Require VAE model $p(\x, \z)$, variational posterior $q(\z \mid \xm, \xo)$, mixture prob.\ $\epsilon$, and data-point $\xo$
    \State $\hist^0 = \varnothing$ \Comment{Initialise imputation history}
    \State $(\z^0, \xm^0) \sim p(\z)p(\xm \mid \xo, \z)$ \Comment{Sample the initial values}
        \For{$t=1$ \textbf{to} $T$}
        \State $\txm \sim \mathrm{Uniform}(\hist^{t-1})$ \Comment{Choose random $\xm$ from the history} \label{alg-line:acmwg-choose-historical-xm}
        \State $\tz \sim \tqe(\z \mid \xo, \txm)$ \Comment{Sample proposal value $\tz$}\label{alg-line:acmwg-sample-proposal}
        \State $\rho^t = \rho^t(\tz, \z^{t-1}; \txm)$ \Comment{Calculate acceptance probability using \cref{eq:ac-mwg-acceptance-probability}}
        \If{$u < \rho^t$, with $u \sim \mathrm{Uniform}(0, 1)$} \Comment{Accept $\tz$ with probability $\rho^t$} \label{alg-line:acmwg-accept}
            \State $\z^t = \tz$
            \State $\hist^t = \{ \xm^\tau \}_{\tau=0}^{t-1}$ \label{alg-line:acmwg-accept-hist-update}
        \Else  \Comment{Reject $\tz$ with probability $\rho^t$} \label{alg-line:acmwg-reject}
            \State $\z^t = \z^{t-1}$
            \State $\hist^t = \hist^{t-1}$ \label{alg-line:acmwg-reject-hist-update}
        \EndIf
        \State $\xm^{t} \sim p(\xm \mid \xo, \z^{t})$ \Comment{Sample $\xm$}
        \EndFor
        \Ensure $\{(\xm^0, \z^0), \ldots, (\xm^T, \z^T)\}$ \Comment{Return all samples}
    \end{algorithmic}
\end{algorithm}

We now combine the proposed changes in \cref{eq:ac-mwg-mixture-proposal,eq:ac-mwg-acceptance-probability} to introduce the algorithm called adaptive collapsed-Metropolis-within-Gibbs (AC-MWG), which can be seen as an instance of the class of adaptive independent Metropolis--Hastings algorithms \citep{holdenAdaptiveIndependentMetropolis2009}.
Assume we start with an initial latent state $\z^0$ and an imputation history $\hist^0 = \{\hxm^0\}$, such that $\z^0$ and $\hxm^0$ are mutually independent (for example, $\z^0$ and $\hxm^0$ are generated via independent short runs of pseudo-Gibbs, see \cref{sec:pseudo-gibbs}, or LAIR, see \cref{sec:lair}).
Then a single iteration $t$ of the sampler is as follows:
\begin{enumerate}
    \item \textbf{Proposal sampling.} First, a historical sample $\txm$ is re-sampled uniformly at random from the available imputation history $\hist^{t-1}$.\footnote{In this paper, we re-sample $\txm$ from all the past samples in the available history $\hist^{t-1}$, however other strategies might be devised to improve the computational and convergence properties of the algorithm \citep[see \eg][]{holdenAdaptiveIndependentMetropolis2009,martinoAdaptiveIndependentSticky2018}. For example, by using a shorter window of past samples instead of the full length of the history.} 
    We then use the proposal distribution from \cref{eq:ac-mwg-mixture-proposal} to sample a single proposal $\tz$.
    \item \textbf{Metropolis--Hastings acceptance.} The proposed sample $\tz$ is then either accepted as $\z^t = \tz$ with probability $\rho^t(\tz, \z^{t-1}; \txm)$ in \cref{eq:ac-mwg-acceptance-probability} or rejected leaving $\z^t = \z^{t-1}$.
    \item \textbf{Imputation sampling.} The imputation $\xm^t$ is updated by sampling the conditional $\xm^t \sim p(\xm \mid \xo, \z^t)$.
    \item \textbf{Adaptation (history update).}  The available history $\hist^t$ is updated as follows: if a new $\tz$ has been accepted then all imputations $\{\xm^\tau\}_{\tau=0}^{t-1}$ up to step $t-1$ are made available at the next iteration, \ie $\hist^t = \{\xm^\tau \}_{\tau=0}^{t-1}$, otherwise it is left unchanged $\hist^t = \hist^{t-1}$. \label{item:ac-mwg-adaptation-step}
\end{enumerate}
Step~\ref{item:ac-mwg-adaptation-step} of the sampler constructs the available history $\hist^{t-1}$ for the next iteration such that it does not contain imputations that depend on the current state $\z^{t-1}$, which ensures that the proposed values $\tz$ are independent of $\z^{t-1}$ and thus guarantees that the stationary distribution of the independent Metropolis--Hastings remains correct as the history $\hist^{t-1}$ changes \citep{robertsCouplingErgodicityAdaptive2007,holdenAdaptiveIndependentMetropolis2009}.
However, the dependence on the sample history $\hist^{t-1}$ makes AC-MWG non-Markovian, and hence convergence needs to be verified. Adapting proofs by \citet{holdenAdaptiveIndependentMetropolis2009}, we prove in \cref{apx:ac-mwg-proofs} that the Markov chain of AC-MWG correctly converges to the stationary distribution $p(\z, \xm \mid \xo)$ with probability arbitrarily close to 1 as the number of iterations $T$ grows.

Finally, we note that the per-iteration computational cost of AC-MWG and MWG (\cref{sec:metropolis-within-gibbs}) are nearly the same. The differences are: re-sampling $\txm$ from the history $\hist^{t-1}$, which should be negligible compared to the cost of evaluating the model, and marginalising the missing variables from the likelihood $p(\xo \mid \z) = \int p(\xo, \xm \mid \z) \dif \xm$, which is often free if the standard conditional independence assumption holds. 

\subsubsection{Verification of AC-MWG on synthetic VAE}
\label{sec:ac-mwg-2d-vae}

\begin{figure}[tbp]
    \begin{center}
    \includegraphics[width=1\linewidth]{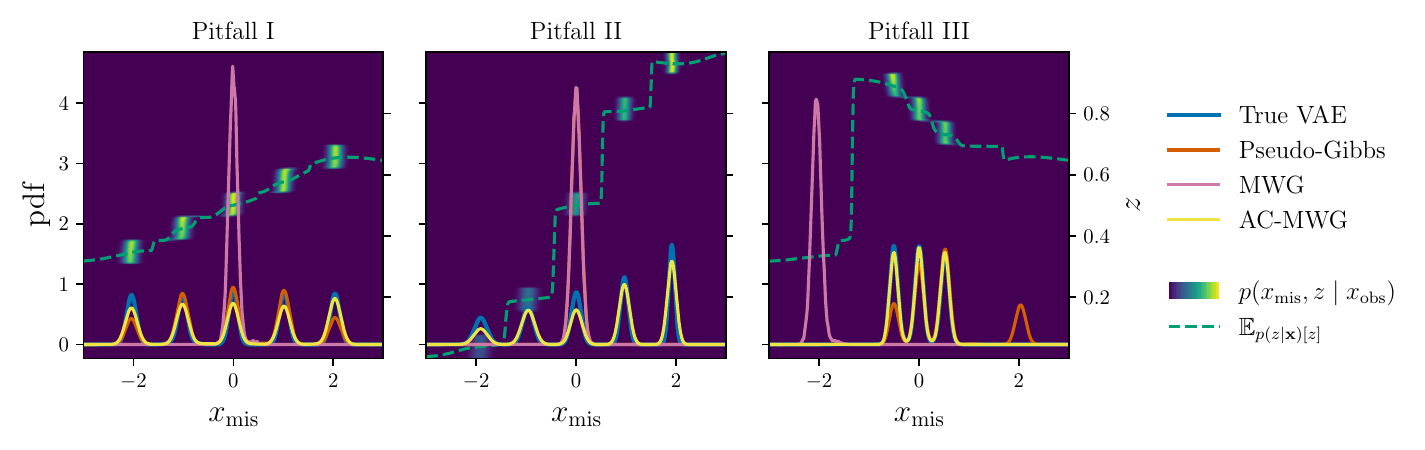}
    \end{center}
    \caption{
    \emph{The proposed AC-MWG sampler (yellow) with $\epsilon=0.01$ on 2D VAE sampling problems, same as in \cref{fig:2dgridmog4_mwg_failures}.} 
    (The figure is best viewed in colour.) 
    AC-MWG (yellow) samples the target distribution (blue) more accurately than MWG (pink) and pseudo-Gibbs (orange).
    All three samplers were initialised with the same state and run for 50k iterations.
    }
    \label{fig:2dgridmog4_ac_mwg}
\end{figure}

We verify the proposed AC-MWG method on the synthetic VAE example in \cref{sec:pitfalls} (see additional details in \cref{apx:exp-details-synthetic-vae}). The results are shown in \cref{fig:2dgridmog4_ac_mwg} (see also additional figures in \cref{apx:synthetic-2d-vae-figures}). With the proposed modifications, AC-MWG samples the target distribution more accurately by exploring modes that are close in the latent space (left) due to the modified acceptance probability in \cref{eq:ac-mwg-acceptance-probability}, as well as distant modes (center) due to the modified proposal distribution in \cref{eq:ac-mwg-mixture-proposal}. The modified method is also less sensitive to poor initialisation (right).
Moreover, we perform ablation studies in \cref{apx:synthetic-2d-vae-ablations,apx:mog-mnist-ablations} to further validate that both modifications, the mixture proposal in \cref{eq:ac-mwg-mixture-proposal} and the collapsed-Gibbs target in \cref{eq:ac-mwg-acceptance-probability}, are key to the performance of the method. 

\subsection{Latent-adaptive importance resampling}
\label{sec:lair}

Instead of MCMC, we can sample from \cref{eq:missing-given-observed-conditional} via importance resampling (IR, see \cref{apx:background-importance-resampling} for details on standard importance resampling and \citealp{chopinIntroductionSequentialMonte2020}, for a comprehensive introduction). However, like MCMC, the efficiency of IR significantly depends on the choice of the proposal distribution.
Our goal in this section is to design an \emph{adaptive} importance resampling method that efficiently samples $p(\xm \mid \xo)$ of a joint VAE model $p(\x)$, and we achieve this by constructing an adaptive proposal distribution $q^t(\z \mid \xo)$ using the encoder distribution $q(\z \mid \xo, \xm)$. The proposed method is summarised in \cref{alg:lair}.

As for AC-MWG, we aim to promote exploration and reduce the effects of poor initialisation (see pitfalls \myhyperlink{pitfalls:latent-multimodality} and \myhyperlink{pitfalls:poor-initialisation}). We thus start with the prior--variational mixture proposal $\tqe(\z \mid \xo, \xm)$ from \cref{eq:ac-mwg-mixture-proposal}
and use it to construct the following \emph{adaptive} mixture proposal distribution $q^t(\z \mid \xo) $,
\begin{align*}
    q^t(\z \mid \xo) &= \Expect{\impdistr[t]}{\tqe(\z \mid \xo, \xm)} \quad\text{with} \quad \impdistr[t] = \frac{1}{K} \sum_{k=1}^K \delta_{\xm^{(t-1, k)}}(\xm),
\end{align*}
where $\impdistr[t]$ is an imputation distribution represented as a mixture of Dirac masses at $K$ particles $\{\xm^{(t-1, k)}\}_{k=1}^K$, which we will use to adapt the proposal distribution at each iteration $t$. 
We further rewrite the proposal by inserting the definition of $\impdistr[t]$ and $\tqe(\z \mid \xo, \xm)$, and re-parametrise it by setting  $\epsilon = \frac{R}{K+R}$, where $R$ is a non-negative integer, to obtain 
\begin{align}
    q^t(\z \mid \xo) &= \frac{1}{K+R} \left(\sum_{k=1}^K q(\z \mid \xo, \xm^{(t-1,k)}) + \sum_{r=1}^R p(\z) \right). \label{eq:irwg-mixture-proposal-with-prior-reparametrised}
\end{align}
The above proposal can be interpreted to have a total of $K+R$ components of which $K$ components depend on the imputation particles $\{\xm^{(t-1, k)}\}_{k=1}^K$, which encourage exploitation, and $R$ are ``replenishing'' prior components $p(\z)$, which encourage exploration and mitigate particle collapse.
Moreover, we sample the proposal distribution using stratified sampling (\citealp[]{Robert2004}; \citealp[Section~9.12]{owenMonteCarloTheory2013}; \citealp[Appendix~A]{elviraGeneralizedMultipleImportance2019}), a well-known variance-reduction technique that draws one sample from each of the $K+R$ components. %

\begin{algorithm}[tpb]
    \caption{Latent-adaptive importance resampling}%
    \label{alg:lair}
    \begin{algorithmic}[1]
    \Require VAE model $p(\x, \z)$, variational posterior $q(\z \mid \x)$, data-point $\xo$, number of imputation particles $K$, number of iterations $T$
        \State $\xm^{(0, 1)}, \ldots, \xm^{(0, K)} \sim \Expect{p(\z)}{p(\xm \mid \xo, \z)}$ \Comment{Sample the initial imputation particle values}
        \For{$t=1$ \textbf{to} $T$}
        \State $\tz^{(t, k)} \sim q(\z \mid \xo, \xm^{(t-1,k)})$ for $\forall k \in \{1, \ldots, K\}$ \Comment{Draw a sample for each particle.}
        \State $\tz^{(t, K+r)} \sim p(\z)$ for $\forall r \in\{1, \ldots, R\}$ \Comment{Draw R prior proposals.}
        \State $w(\tz^{(t, k)})$ = $\frac{p(\xo, \tz^{(t, k)})}{q^t(\tz^{(t, k)} \mid \xo)}$ for $\forall k \in \{1, \ldots, K+R\}$ \Comment{Unnormalised importance weights.}
        \State $\tw(\tz^{(t, k)})$ = $\frac{w(\tz^{(t, k)})}{\sum_{j=1}^{K+R} w(\tz^{(t, j)})}$ for $\forall k \in \{1, \ldots, K+R\}$ \Comment{Normalise importance weights.}
        \State $\z^{(t, 1)}, \ldots, \z^{(t, K)} \sim \mathrm{Multinomial}\!\left(\{\tz^{(t,k)}, \tw(\tz^{(t, k)})\}_{k=1}^{K+R}\right)$ \Comment{Resample $\z^{(t, k)}$ from the proposed set.}
        \State $\xm^{(t, k)} \sim p(\xm \mid \xo, \z^{(t, k)})$ for $\forall k \in \{1, \ldots, K\}$.\label{alg-line:irwg-update-imp-particles} \Comment{Update imputation particles.} 
        \EndFor
        \State $\bw(\tz^{(t, k)}) = \frac{w(\tz^{(t, k)})}{\sum_{\tau=1}^T \sum_{j=1}^{K+R} w(\tz^{(\tau, j)})}$ for $\forall k \in \{1, \ldots, K+R\}$ and $\forall t \in \{1, \ldots, T\}$ \label{alg-line:irwg-final-renormalise-weights}
        \Comment{Re-norm.\ all proposals.}
        \State $\z^i \sim \mathrm{Multinomial}\!\left(\{\tz^{(t,k)}, \bw(\tz^{(t, k)})\}_{t=1,k=1}^{(T,K+R)} \right)$ for $\forall i \in \{1, \ldots, T \cdot K\}$ \label{alg-line:irwg-resample-final-proposals}
        \Comment{Resample proposals from all iter.}
        \State $\xm^i \sim p(\xm \mid \xo, \z^i)$ for $\forall i \in \{1, \ldots, T \cdot K\}$.\label{alg-line:irwg-final-sample-imputations} 
        \Comment{Sample imputations}
        \Ensure $\{\xm^i\}_{i=1}^{T \cdot K}$ 
    \end{algorithmic}
\end{algorithm}

Using the mixture proposal distribution in \cref{eq:irwg-mixture-proposal-with-prior-reparametrised} we now introduce the new algorithm that we call latent-adaptive importance resampling (LAIR), which belongs to the class of adaptive importance sampling algorithms (AIS) of \citet[][Section~4]{elviraAdvancesImportanceSampling2022}.
The algorithm starts with $K$ imputation particles $\{\xm^{(0, k)}\}_{k=1}^K$ that may come from a simple distribution such as the empirical marginals, another multiple imputation method, or simply the unconditional marginal of the VAE $p(\xm)$.
An iteration $t$ of the algorithm then performs the following three steps:
\newcounter{lairEnumerateCounter}
\begin{enumerate}
    \item \textbf{Proposal sampling.} Sample the proposal distribution $q^t(\z \mid \xo)$ in \cref{eq:irwg-mixture-proposal-with-prior-reparametrised} using stratified sampling%
    . That is, for each particle $\xm^{(t-1,k)}$ draw a sample $\tz^{(t, k)}$ from the proposal $q(\z \mid \xo, \xm^{(t-1,k)})$ and draw $R$ proposals $\tz^{(t, K+r)}$ from the prior $p(\z)$, for a total of $K+R$ proposals. \label{method-sampling-step}
    \item \textbf{Weighting.} Compute the unnormalised importance weights $w(\tz^{(t, k)})$.%
    \footnote{\label{footnote:lair-alternative-weights}We here use deterministic-mixture MIS (DM-MIS) weights but alternative weighting schemes can also be used that enable a more fine-grained control of the cost-variance trade-off, see \citet[][Section~7.2]{elviraGeneralizedMultipleImportance2019}.}%
    \footnote{By marginalising the variables $\xm$ in the numerator of the weights we address pitfall~\myhyperlink{pitfalls:relationship-between-latents-and-visibles}, similar to \cref{eq:ac-mwg-acceptance-probability} of AC-MWG.}
    \begin{align}
        w(\tz^{(t, k)}) = \frac{p(\xo, \tz^{(t, k)})}{q^t(\tz^{(t, k)} \mid \xo)}. \label{eq:irwg-importance-weights}
    \end{align}
   \item \textbf{Adaptation.}
   \begin{enumerate}[label=\arabic{enumi}.\Roman*.]
       \item Resample a set $\{\z^{(t, k)}\}_{k=1}^K$ with replacement from the proposal set $\{\tz^{(t, k)}\}_{k=1}^{K+R}$ proportionally to the weights $w(\tz^{(t, k)})$.\footnote{Alternative resampling schemes may also be used, see~\citet[][Section~9.4]{chopinIntroductionSequentialMonte2020}.} \label{method-resampling-step}
       \item Update the imputation particles $\{\xm^{(t, k)}\}_{k=1}^K$ by sampling $p(\xm \mid \xo, \z)$ conditional on each $\z \in \{\z^{(t,k)}\}_{k=1}^K$ from step~\ref{method-resampling-step} \label{method-imputation-step}
   \end{enumerate}
   \setcounter{lairEnumerateCounter}{\value{enumi}}
\end{enumerate}
Each iteration $t$ at step~\ref{method-imputation-step} (accordingly, \cref{alg-line:irwg-update-imp-particles} of \cref{alg:lair}) produces (approximate) samples $\{\xm^{(t, k)}\}_{k=1}^K$ from the target distribution $p(\xm \mid \xo)$, since any iteration $t$ of the algorithm corresponds to standard importance resampling and hence inherits its properties \citep{cappePopulationMonteCarlo2004}, see \cref{apx:background-importance-resampling}.
In particular, the sampler monotonically approaches the target distribution as the number of proposed samples $K+R$ tends to infinity. 
Hence, the algorithm may be used in settings where the target distribution changes across iterations $t$, for instance, when fitting a model from incomplete data via a Monte Carlo EM \citep{Wei1990, simkusVariationalGibbsInference2023}. 

However, unlike MCMC methods, a finite set of samples at any iteration $t$ are not generally guaranteed to convergence to the target distribution as $t$ grows large \citep{cappePopulationMonteCarlo2004,doucConvergenceAdaptiveMixtures2007}.
In particular, for finite sample sizes, $K+R \ll \infty$, the sampler bias is of the order $\mathcal{O}(\frac{1}{K+R})$ \citep{owenMonteCarloTheory2013, paananenImplicitlyAdaptiveImportance2021} at any iteration $t$ and depends on the disparity between the proposal and the target distributions.
To improve the approximation, after the algorithm completes all $T$ iterations, we can use samples from all iterations $t \in \{1, \ldots, T\}$ to construct a more accurate estimator \citep{cappePopulationMonteCarlo2004}.
\begin{enumerate}%
    \setcounter{enumi}{\value{lairEnumerateCounter}}
    \item \textbf{Draw final samples after completing all $T$ iterations.} \label{irwg-method-final-resampling}
    \begin{enumerate}[label=\arabic{enumi}.\Roman*.]
        \item Re-normalise the weights of $\tz^{(t,k)}$ over all iterations $t \in \{0, \ldots, T\}$ and all $k \in \{1, \ldots, K+R\}$ to obtain $\bw(\tz^{(t, k}) = \frac{w(\tz^{(t, k)})}{\sum_{\tau=1}^T \sum_{j=1}^{K+R} w(\tz^{(\tau, j)})}$.
        \item Resample $T \cdot K$ samples $\z^i$ with replacement from the set $\{\tz^{(t,k)}\}_{t=1,k=1}^{(T,K+R)}$ using the weights $\bw(\tz^{(t, k})$ from the previous step. \label{irwg-final-imp-samping}
        \item Sample imputations $\{\xm^i\}_{i=1}^{T \cdot K}$ via ancestral sampling by sampling $p(\xm \mid \xo, \z)$ conditional on each $\z \in \{\z^i\}_{i=1}^{T \cdot K}$ from step~\ref{irwg-final-imp-samping}
    \end{enumerate}
\end{enumerate}
The advantage of resampling from the re-weighted full sequence of samples is that the bias of the self-normalised importance sampler goes down with $T$ (in addition to $K+R$) and hence more accurate samples can be obtained.
In particular, the sampler now monotonically approaches the target distribution as the \emph{total} number of proposed samples approaches infinity, $T(K+R) \rightarrow \infty$, and the bias is of the order $\mathcal{O}(\frac{1}{T(K+R)})$.

We note that the per-iteration computational cost of LAIR is comparable to running $K+R$ parallel chains of MWG, with the exception of: marginalising the missing variables from the likelihood, as in AC-MWG, which may often be cheap, and evaluating the denominator of the importance weights $w(\tz)$ in \cref{eq:irwg-importance-weights}, which requires that each of the $K+R$ proposed samples $\tz$ must be evaluated with the densities of all $K+R$ components in the mixture proposal in \cref{eq:irwg-mixture-proposal-with-prior-reparametrised}, hence needing $(K+R)^2$ evaluations. However, since the components of the proposal distribution in \cref{eq:irwg-mixture-proposal-with-prior-reparametrised} are typically all simple distributions, such as a diagonal Gaussians, the computational cost is often negligible for moderate number of proposals $K+R$. 
Moreover, the cost may be reduced by trading-off for a higher-variance of the estimator, see footnote~\ref{footnote:lair-alternative-weights}.
Finally, the computational cost of the final resampling in step~\ref{irwg-method-final-resampling} (accordingly, \cref{alg-line:irwg-final-renormalise-weights,alg-line:irwg-resample-final-proposals,alg-line:irwg-final-sample-imputations} in \cref{alg:lair}) is negligible since all the required quantities have already been computed in the past iterations.

\subsubsection{Verification of LAIR on synthetic VAE}
\label{sec:lair-2d-vae}

\begin{figure}[tbp]
    \begin{center}
    \includegraphics[width=1\linewidth]{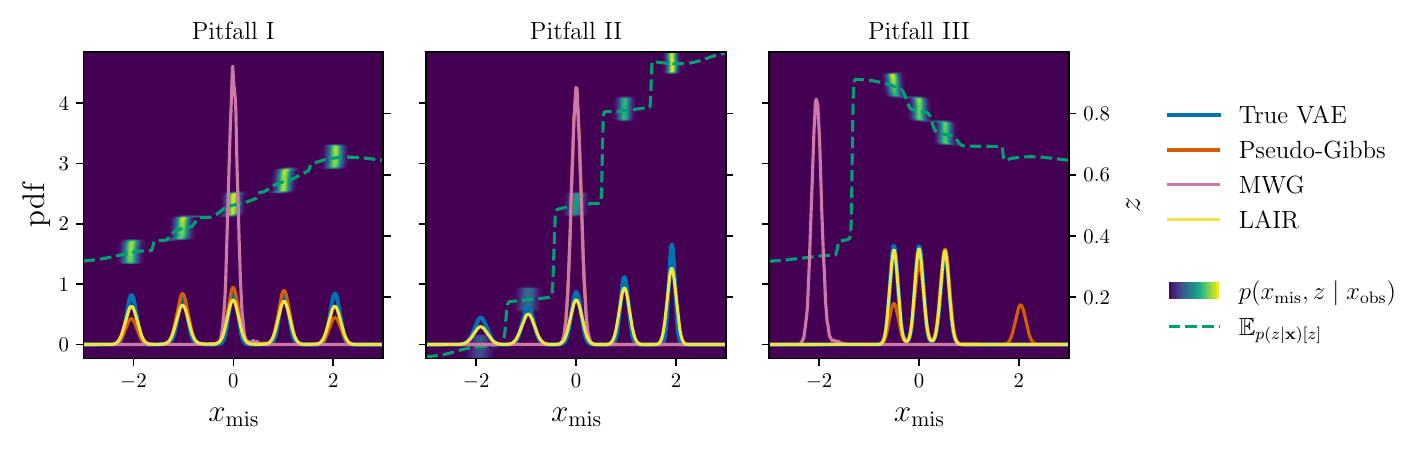}
    \end{center}
    \caption{
    \emph{The proposed LAIR sampler (yellow) with $K=19$ particles and $R=1$ replenishing components on 2D VAE sampling problems, same as in \cref{fig:2dgridmog4_mwg_failures}.} 
    (The figure is best viewed in colour.)
    LAIR (yellow) samples the target distribution (blue) more accurately than MWG (pink) and pseudo-Gibbs (orange).
    MWG and pseudo-Gibbs were run for 50k iterations, and LAIR was run for 2.5k iterations to match the number of generative model evaluations.
    }
    \label{fig:2dgridmog4_lair}
\end{figure}

We now verify the proposed method, LAIR, on the synthetic VAE example in \cref{sec:pitfalls} (see additional details in \cref{apx:exp-details-synthetic-vae}). The results are demonstrated in \cref{fig:2dgridmog4_lair} (see also additional figures in \cref{apx:synthetic-2d-vae-figures}), where we have used $K=19$ particles and $R=1$ replenishing components (corresponding to $\epsilon=0.05$).
We can see that the method mitigates the three main pitfalls: poor mixing (left), poor exploration (center), and is less sensible to poor initialisation (right). 
Moreover, in ablation studies performed in \cref{apx:synthetic-2d-vae-ablations,apx:mog-mnist-ablations} we further investigate the sensitivity of the method to choices of $\epsilon = \frac{R}{K+R}$ and find that the method performs well as long as $0 < \epsilon < 1$.

\section{Evaluation}
\label{sec:evaluation}

In \cref{sec:ac-mwg,sec:lair} we have introduced our methods, AC-MWG and LAIR, for conditional sampling of VAEs which mitigate the potential pitfalls of Gibbs-like samplers (\cref{sec:pitfalls}) as verified in \cref{sec:ac-mwg-2d-vae,sec:lair-2d-vae}.
As motivated in \cref{sec:background-problem-assumptions}, conditional sampling is a fundamental tool for multiple imputation of missing data \citep{Rubin1987a, rubinMultipleImputation181996}, where the goal is to generate plausible values of the missing variables with correct uncertainty representation.
We here evaluate the newly proposed methods for missing data imputation. 
We assume that we have a pre-trained VAE model, trained on complete data, and aim to generate imputations of the missing variables at test time. 

\subsection{Mixture-of-Gaussians MNIST}
\label{sec:results-mog-mnist}

Evaluating the quality of imputations from data alone is a difficult task since the imputations represent guesses of unobserved values from an unknown conditional distribution (\citealp[]{abayomiDiagnosticsMultivariateImputations2008}; \citealp[Section~2.5]{VanBuuren2018}). 
Hence, to accurately evaluate the proposed methods, in this section we first fit a mixture-of-Gaussians (MoG) model to the MNIST data set, which we then use as the ground truth to simulate a semi-synthetic data set that is subsequently fitted by a VAE model (see \cref{apx:exp-details-mog-mnist} for more details). 
Using an intermediate MoG model enables us to tractably sample the reference conditional distribution (which would otherwise be unknown) when evaluating the accuracy of the conditional VAE samples obtained using the proposed and existing methods.

\begin{figure}[tbp]
    \begin{center}
    \includegraphics[width=1.\linewidth]{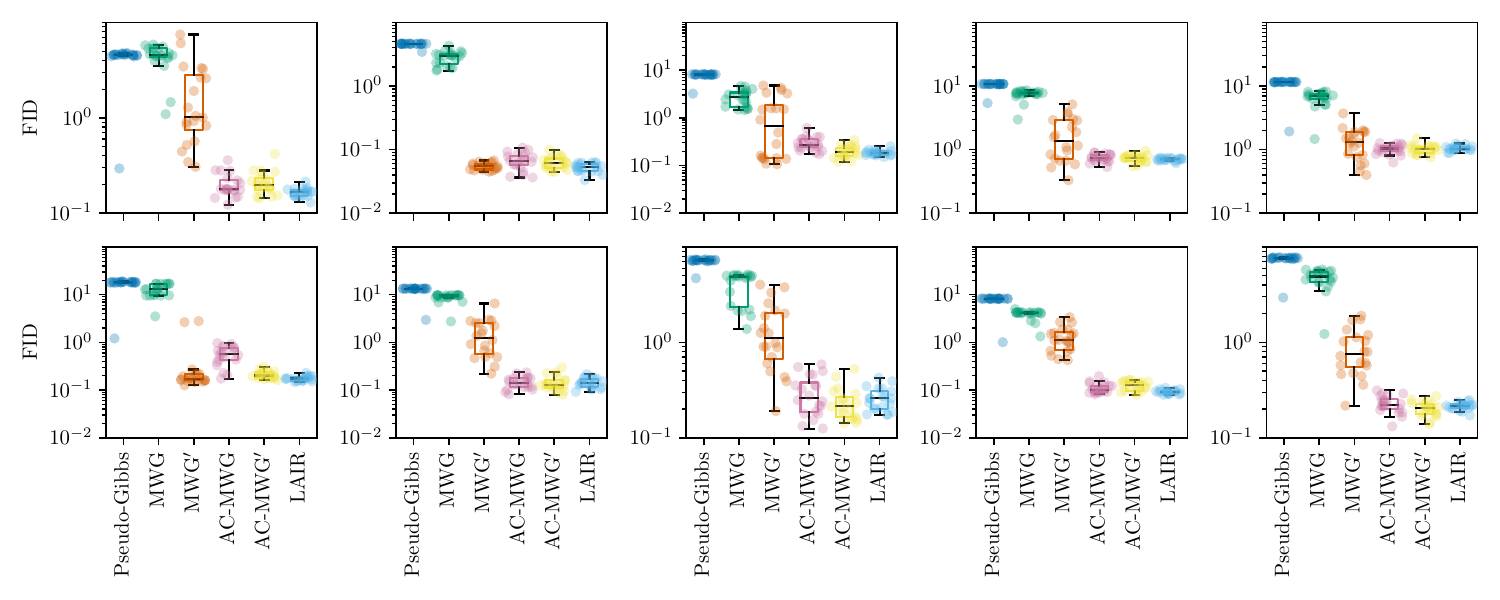}
    \end{center}
    \caption{Fr\'echet inception distance (FID) between samples from the ground truth conditional $p(\z \mid \xo)$, and samples from the imputation methods. Each panel in the figure corresponds to a different conditional sampling problem $p(\xm \mid \xo)$. Each evaluation is repeated 20 times, and the box-plot represents the inter-quartile range, including the median, and the whiskers show the overall range of the results.}
    \label{fig:mnist_gmm_fid_encoder}
\end{figure}

In \cref{fig:mnist_gmm_fid_encoder} we demonstrate the performance of the methods on 10 sampling problems (see \cref{apx:mog-mnist-figures} for additional figures and metrics). We measure the performance using the Fr\'echet inception distance \citep[FID,][]{heuselGANsTrainedTwo2017}, where for the inception features we use the final layer outputs of the encoder network. 
The figures show that the proposed methods, AC-MWG (pink) and LAIR (yellow), significantly outperform the performance of the Gibbs-like samplers from \cref{sec:pseudo-gibbs,sec:metropolis-within-gibbs} (blue and green). 
In \cref{apx:mog-mnist-ablations} we further perform an ablation study for the proposed methods, where: we validate that both the mixture proposal in \cref{eq:ac-mwg-mixture-proposal} and the collapsed-Gibbs target in \cref{eq:ac-mwg-acceptance-probability} are key to the good performance of AC-MWG; and we find that LAIR can perform well for a number of values of $\epsilon = \frac{R}{K+R}$, as long as $0 < \epsilon < 1$.

The results for MWG (green) use pseudo-Gibbs warm-up, as suggested by the authors \citet{Mattei2018}, to mitigate the effects of poor initialisation. 
We further investigated two different warm-up methods for MWG: an approximate MAP initialisation using stochastic gradient ascent on the log-likelihood, and LAIR.
Both schemes improved over the base MWG but we found that the initialisation using LAIR generally performed better (see \cref{fig:apx:mnist_gmm_fid_encoder_appendix} in the appendix).
MWG with LAIR initialisation is denoted in \cref{fig:mnist_gmm_fid_encoder} as $\text{MWG}'$ (orange). 
We observe that with better initialisation the performance of MWG can be significantly improved, hence confirming the sensitivity of MWG to poor initialisation as discussed in \cref{sec:pitfalls}.
However, with few exceptions $\text{MWG}'$ (orange) still generally performs worse than the proposed methods (pink and yellow), hence suggesting that the poor performance of MWG can be in part explained by the poor mixing of the sampler as discussed in \cref{sec:pitfalls}, that is addressed by the proposed methods.

\subsection{Real-world UCI data sets}
\label{sec:results-uci}

\begin{figure}[tbp]
    \begin{center}
    \includegraphics[width=1.\linewidth]{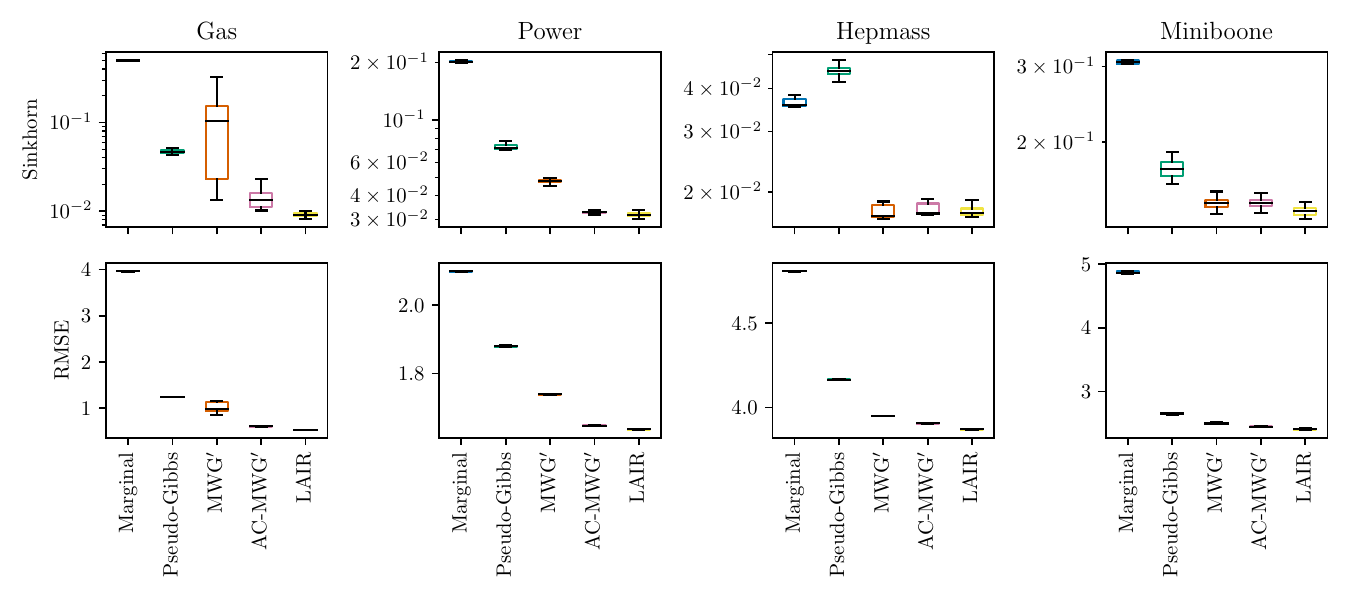}
    \end{center}
    \caption{\emph{Sampling performance on four real-world UCI data sets.} \emph{Top:} Sinkhorn distance of the imputed data sets evaluated on a 50k data-point subset of test data (except for Miniboone where the full test data set was used). \emph{Bottom:} Average RMSE of the imputations on the whole test data set. In both rows imputations from the final iteration of each algorithm are used and uncertainty is shown over different runs.}
    \label{fig:uci_sinkhorn_and_rmse}
\end{figure}

We now evaluate the proposed methods on real-world data sets from the UCI repository \citep{duaUCIMachineLearning2017,papamakariosMaskedAutoregressiveFlow2017}.
We train a VAE model with ResNet architecture on complete training data and evaluate the sampling accuracy of the existing and proposed methods on incomplete test data with 50\% missingness (see \cref{apx:exp-details-uci} for more details). 
We also include a simple baseline where imputations are sampled from the marginal distribution $p(\xm)$ of the VAE. 
Moreover, in line with the observations from \cref{sec:results-mog-mnist} for MWG and AC-MWG we use LAIR initialisation as we have found it to considerably improve the performance of both methods. We here assess the performance using two metrics: Sinkhorn distance \citep{cuturiSinkhornDistancesLightspeed2013} between the imputed and ground truth data sets (computed using \texttt{geomloss} package by \citealp{feydyInterpolatingOptimalTransport2019}), and average RMSE of the imputations (for additional metrics, see \cref{apx:uci-figures}). 

The results are shown in \cref{fig:uci_sinkhorn_and_rmse}. First, the figure shows that all methods outperform marginal imputations (blue), with one exception of pseudo-Gibbs (green) on Hepmass data, where the Sinkhorn distance is slightly higher than the baseline. 
Second, as before, pseudo-Gibbs (green) is typically improved-upon by MWG (orange). The only exception is the Gas data with the Sinkhorn distance as metric (first row, first column) where the performance shows high variability. Other metrics (second row, first column, and \cref{apx:uci-figures}) do not display this behaviour. %
Third, we see that the proposed methods, AC-MWG (pink) and LAIR (yellow), show better or comparable performance to the existing methods in terms of Sinkhorn distance (top row), and always improve on the existing methods in terms of the point-wise RMSE (bottom row).
In summary, the results in this section match our findings from \cref{sec:results-mog-mnist}, and hence further highlight the importance of mitigating the pitfalls in \cref{sec:pitfalls} when dealing with real-world tasks.

\subsection{Omniglot data set}
\label{sec:results-omniglot}

\begin{figure}[tbp]
    \begin{center}
    \includegraphics[width=1.\linewidth]{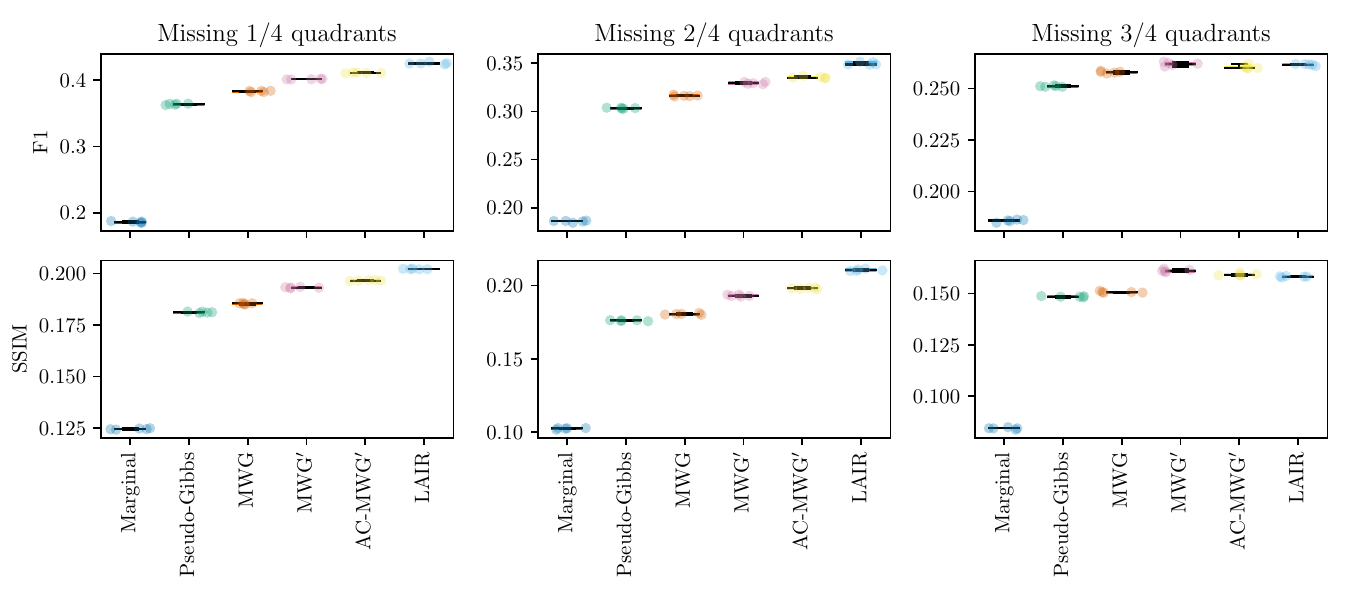}
    \end{center} 
    \caption{\emph{Imputation accuracy on the binarised Omniglot test set with 1-3 randomly missing quadrants.} The top and bottom rows show F1 and average SSIM scores (higher is better for both metrics) respectively between the imputed and the ground truth values.
    In both rows imputations from the final iteration of each algorithm are used and uncertainty is shown over different runs.}
    \label{fig:omniglot_f1_and_ssim}
\end{figure}

In this section we evaluate the methods for conditional sampling of a VAE model trained on fully-observed binarised Omniglot data of handwritten characters \citep{lakeHumanlevelConceptLearning2015}.
For the VAE model we use a convolutional ResNet encoder and decoder networks with 50 latent dimensions (see \cref{apx:exp-details-omniglot} for more details). 
We then evaluate the existing and proposed methods for conditional imputation of test set images that miss 1, 2, and 3 random quadrants. 
Similar to the previous section, we include a simple baseline where imputations are sampled from the marginal distribution $p(\xm)$ of the VAE. 
The accuracy of the imputations on the binarised Omniglot is assessed using F1 score \citep{Mattei2018} and structural similarity index measure \citep[SSIM,][]{wangImageQualityAssessment2004} between the ground truth and imputed values.

The results are shown in \cref{fig:omniglot_f1_and_ssim}. We first note that all conditional sampling methods perform better than marginal imputations (deep blue). Furthermore, we see that the metrics for the existing methods imply the ranking pseudo-Gibbs (green) $<$ MWG (orange) $<$ MWG$'$ (pink), as before. 
Finally, we observe that the proposed methods, AC-MWG (yellow) and LAIR (light blue), further improve the accuracy of the imputations over the existing methods.

\section{Discussion}

Conditional sampling is a key challenge for downstream applications of VAEs and
imprecise or inefficient samplers can cause unreliable results. %
We have examined the potential pitfalls of using Gibbs-like samplers, such as MWG, %
to conditionally sample from unconditional VAE models.
While the outlined pitfalls are related to the well-known limitations of standard Gibbs sampler, we work out their significance in the context of VAEs.
Pitfalls \myhyperlink{pitfalls:relationship-between-latents-and-visibles} and \myhyperlink{pitfalls:latent-multimodality} outline two reasons for poor mixing of MWG: strong relationship between the latents $\z$ and visibles $\x$, and lack of exploration when the variational encoder distribution is used as proposal. Pitfall~\myhyperlink{pitfalls:poor-initialisation} highlights the importance of good initialisation for the performance of the sampler.

We introduced two samplers for conditional sampling of VAEs that address the pitfalls and show improved performance when compared to MWG and other baselines. 
The proposed methods, adaptive collapsed-Metropolis-within-Gibbs (AC-MWG) and latent-adaptive importance resampling (LAIR), mitigate pitfall~\myhyperlink{pitfalls:relationship-between-latents-and-visibles} by marginalising the missing variables $\xm$ when (approximately) sampling the latents $\z$, and then sample the missing values $\xm \sim p(\xm \mid \xo, \z)$. 
Therefore, in contrast to Gibbs sampling, the two methods can be seen as approximate ancestral sampling methods with asymptotic exactness guarantees.
To mitigate pitfall~\myhyperlink{pitfalls:latent-multimodality} we have constructed proposal distributions from a mixture composed of the variational encoder distribution and the prior, which balances exploitation and exploration.
Finally, we have found that poor initialisation (pitfall~\myhyperlink{pitfalls:poor-initialisation}) affects LAIR much less than the MCMC methods due to its ability to use information from multiple points in the latent space, and hence using LAIR to initialise MWG and AC-MWG can further improve their respective performances.

Depending on the task, computational budget, and accuracy requirements one may choose to use either AC-MWG or LAIR for conditional sampling of VAEs. 
For example, in tasks where the target distribution is changing between iterations, such as learning a VAE model from incomplete data \citep{simkusVariationalGibbsInference2023}, LAIR could be more efficient than AC-MWG; this is because LAIR produces valid (although potentially biased) samples from the target distribution at any iteration, while AC-MWG requires a ``burn-in'' period until the sampler converges to the target distribution. %
On the other hand, on a strict computational budget AC-MWG might be preferred over LAIR: while the cost of AC-MWG is comparable to MWG (and hence also pseudo-Gibbs), each iteration of LAIR involves equivalent computations on $K+R$ particles and hence the computational cost and memory requirements is about $K+R$ times the cost of MWG. 
Finally, the convergence properties of the two methods are distinct: AC-MWG converges asymptotically in number of iterations, whereas the convergence in LAIR additionally scales in the number of particles $K+R$ and therefore parallelisation may be used to improve the speed of convergence at the cost of additional memory usage.

We have focused on conditional sampling of VAE models with moderate-dimensional latent spaces. 
To this end, we have addressed the ``exploration--exploitation'' dilemma by constructing the proposal distribution from the prior and variational encoder distributions. 
But, what works well in moderate dimensions might not work well in high dimensions, 
a direct consequence of the infamous ``curse of dimensionality''.
This means that exploring the posterior by sampling the prior distribution might become impractical in higher dimensions. 
To scale the methods, alternative exploration strategies could be constructed by replacing the mixture proposal in \cref{eq:ac-mwg-mixture-proposal} with, for example, a mixture composed of annealed versions of the variational encoder distribution.
Moreover, since the proposed methods belong to the large and general families of adaptive MCMC \citep{haarioAdaptiveMetropolisAlgorithm2001,warnesNormalKernelCoupler2001,robertsCouplingErgodicityAdaptive2007,holdenAdaptiveIndependentMetropolis2009,liangAdvancedMarkovChain2010} and adaptive importance sampling \citep[AIS,][]{cappePopulationMonteCarlo2004,bugalloAdaptiveImportanceSampling2017}, 
our work opens up additional opportunities to further improve the conditional sampling of VAEs.

\newpage
\bibliography{references}
\bibliographystyle{tmlr}

\newpage

\appendix

\section{AC-MWG proofs}
\label{apx:ac-mwg-proofs}

Informally, showing convergence of MCMC samplers generally boils down to answering two questions: (\hypertarget{apx:ac-mwg-proofs-question-ergodicity}{i}) does the Markov chain (asymptotically) reach the unique stationary distribution, and (\hypertarget{apx:ac-mwg-proofs-question-stationarity}{ii}) does the sampler remain in the stationary distribution after reaching it.%
\footnote{The proofs in this section will consider a single observed data-point $\xo$, and hence nearly all quantities would depend on it. To ease the notation we will therefore suppress the conditioning on $\xo$ in all quantities, except for the proposal distribution $\tqe$ in \cref{eq:ac-mwg-mixture-proposal} to keep it consistent with \cref{alg:ac-mwg}.}

First, we will focus on the latter question: does the AC-MWG sampler remain in the stationary distribution once it has been reached? 
Let $p^t$ denote the distribution after $t$ iterations, and $\pi(\z, \xm) = p(\z, \xm \mid \xo)$ denote the target distribution. 
The following theorem formalises the answer to the question.\footnote{The theorem is analogous to Theorem~1 by \citet{holdenAdaptiveIndependentMetropolis2009} but we extend their proof to the component-wise setting of AC-MWG that involves an additional sampling step $\xm^t \sim p(\xm \mid \xo, \z^t)$, and where the history is maintained on $\xm$.}

\begin{theorem}
\label{theorem:ac-mwgm-stationary-distribution-invariance}
The limiting distribution of the AC-MWG sampler conditioned on the history $\hist^{t-1}$ is invariant, that is $p^{t-1}(\z^{t-1}, \xm^{t-1} \mid \hist^{t-1}) = \pi(\z^{t-1}, \xm^{t-1})$ implies $p^{t}(\z^{t}, \xm^{t} \mid \hist^{t}) = \pi(\z^{t}, \xm^{t})$.
\end{theorem}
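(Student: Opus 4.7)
The plan is to reduce the joint invariance statement to an invariance statement for the marginal of $\z^t$ under the Metropolis--Hastings sub-step, then apply the standard detailed balance argument for independent Metropolis--Hastings conditional on the auxiliary variable $\txm$, and finally handle the history update via a filtration-style argument adapted from \citet{holdenAdaptiveIndependentMetropolis2009}. First, I would factor the target as $\pi(\z, \xm) = p(\z \mid \xo)\, p(\xm \mid \xo, \z)$. Because the last step of each AC-MWG iteration draws $\xm^t \sim p(\xm \mid \xo, \z^t)$ by construction, the joint invariance follows automatically once the marginal invariance $p^t(\z^t \mid \hist^t) = p(\z \mid \xo)$ is established; thus the $\xm$ coordinate is essentially ``free.''

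Second, I would expose the MH sub-step as standard independent Metropolis--Hastings with target $p(\z \mid \xo) \propto p(\xo \mid \z)\, p(\z)$ and proposal $\tqe(\tz \mid \xo, \txm)$ for each \emph{fixed} value of the auxiliary $\txm$. With this identification, the acceptance probability in \cref{eq:ac-mwg-acceptance-probability} is exactly the standard IMH ratio, so detailed balance preserves $p(\z \mid \xo)$ for every fixed $\txm$. Because $\txm$ is drawn from $\mathrm{Uniform}(\hist^{t-1})$ and is conditionally independent of $\z^{t-1}$ given $\hist^{t-1}$, mixing the MH kernel over $\txm$ still preserves $p(\z \mid \xo)$, so that the inductive hypothesis $\z^{t-1} \mid \hist^{t-1} \sim p(\z \mid \xo)$ yields $\z^t \mid \hist^{t-1} \sim p(\z \mid \xo)$.

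Third, I would upgrade the conditioning from $\hist^{t-1}$ to $\hist^t$. In the rejection branch $\hist^t = \hist^{t-1}$ and there is nothing to do. In the acceptance branch $\hist^t = \{\xm^\tau\}_{\tau=0}^{t-1}$ strictly enlarges $\hist^{t-1}$, but crucially only by imputations produced \emph{strictly before} the current MH step. Each newly exposed $\xm^\tau$ was generated ancestrally from $p(\xm \mid \xo, \z^\tau)$ and is therefore conditionally independent of the currently proposed $\tz$ given the past latents; combined with the inductive hypothesis and the MH invariance from the previous paragraph, this implies $\z^t \mid \hist^t \sim p(\z \mid \xo)$, after which drawing $\xm^t \sim p(\xm \mid \xo, \z^t)$ delivers the desired joint invariance $p^t(\z^t, \xm^t \mid \hist^t) = \pi(\z^t, \xm^t)$.

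The main obstacle is precisely this last step: verifying that enlarging the conditioning set from $\hist^{t-1}$ to $\hist^t$ in the acceptance branch does not disturb the marginal of $\z^t$. A naive argument fails because $\xm^{t-1}$ depends on $\z^{t-1}$, which in turn influences $\z^t$ through the MH acceptance, so one cannot simply invoke unconditional independence. I would therefore adapt the filtration bookkeeping of \citet{holdenAdaptiveIndependentMetropolis2009} to our component-wise setting by making explicit use of (i) the ancestral factorisation $p(\z, \xm \mid \xo) = p(\z \mid \xo)\, p(\xm \mid \xo, \z)$ built into the VAE generative model, (ii) the design choice that $\hist^t$ contains only imputations generated before the current accepted $\z^t$, and (iii) the fact that the history is maintained on $\xm$ rather than on $\z$, so that conditioning on the extra elements of $\hist^t \setminus \hist^{t-1}$ only refines the past and does not introduce spurious dependence on the present latent.
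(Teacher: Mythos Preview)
Your first two steps are correct and match the paper's approach: both adapt \citet{holdenAdaptiveIndependentMetropolis2009}, reduce to the $\z$-marginal via the ancestral factorisation $\pi(\z,\xm)=p(\z\mid\xo)\,p(\xm\mid\xo,\z)$, and invoke detailed balance for the Metropolis--Hastings sub-step conditional on each fixed $\txm$. The gap is in your third step. You propose to upgrade the conditioning from $\hist^{t-1}$ to $\hist^{t}$ by case analysis on accept versus reject, arguing that in the acceptance branch the newly exposed $\xm^\tau$'s are conditionally independent of the accepted $\tz$. But being in the acceptance branch is itself a selection event depending jointly on $\tz$ and $\z^{t-1}$ through $\rho^t$, and the newly exposed $\xm^\tau$'s were drawn from $p(\xm\mid\xo,\z^{t-1})$; conditioning on acceptance therefore couples $\z^t=\tz$ to those $\xm^\tau$'s, so the branch-wise independence claim fails. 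In particular $\z^t\mid\hist^{t-1},\text{accept}$ is not distributed as $p(\z\mid\xo)$ in general, and the argument as written does not close.

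The paper avoids branch-wise case analysis. It introduces the auxiliary $w^t=\hist^t\setminus\hist^{t-1}$ through a \emph{single} conditional law $\pi(w^t\mid\hz)$ with $\hz=\z^{t-1}$ on accept and $\hz=\tz$ on reject, writes out the full transition kernel with both branches present, and integrates it against $\pi$ to obtain the joint $p^t(\z^t,\xm^t,w^t\mid\hist^{t-1})$. Detailed balance is then applied \emph{across} the two branches---swapping the roles of $\tz$ and $\z^{t-1}$ in the reject term---so that the accept and reject contributions combine into a product $\pi(\z^t,\xm^t)\cdot g(w^t)$ for some $g$ depending only on $w^t$ and $\hist^{t-1}$. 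This factorisation is precisely the independence of $(\z^t,\xm^t)$ from $w^t$, after which conditioning on $\hist^t=\hist^{t-1}\cup w^t$ leaves the marginal at $\pi$. The ingredient your plan is missing is this explicit joint computation and the cross-branch use of detailed balance; the ``filtration bookkeeping'' you invoke does not by itself produce the needed factorisation.
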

\begin{proof}

Let us denote $w^t = \hist^{t} \setminus \hist^{t-1}$ the new variables made available in the history $\hist^{t}$ after iteration $t$ of the algorithm. Note that $w^t$ is a random variable since it depends on the accept/reject decision in \cref{alg-line:acmwg-accept-hist-update,alg-line:acmwg-reject-hist-update} of \cref{alg:ac-mwg}. In the proof we will show that by construction of the algorithm $w^t$ and the new state $(\z^t, \xm^t)$ are independent, and that the statement in the theorem then follows.

Following \cref{alg:ac-mwg} we now work out what are the new historical values of $w^t$ at each iteration $t$. 
If a proposal $\tz$ is \emph{rejected} then \cref{alg-line:acmwg-reject-hist-update} of \cref{alg:ac-mwg} corresponds to setting $w = \varnothing$. 
More generally, we allow adding to the history variables that depend on the rejected state $\tz$ but not on the current state $\z^{t-1}$. %
If a proposal $\tz$ is \emph{accepted} then \cref{alg-line:acmwg-accept-hist-update} of \cref{alg:ac-mwg} corresponds to setting $w$ to be the set of imputations that were generated using the previous value of $\z = \z^{t-1}$. 
For instance, if new proposals were rejected for the last $r$ iterations, then $\z^{t-1-r}=\z^{t-1-r+1}=\ldots=\z^{t-1}$, and hence $\xm^{t-1-r}, \xm^{t-1-r+1}, \ldots, \xm^{t-1}$ would all depend on $\z^{t-1}$, \ie $\xm^{t-1-r}, \xm^{t-1-r+1}, \ldots, \xm^{t-1} \sim \pi(\xm \mid \z^{t-1})$.
Thus, in the case of proposal acceptance, the variable $w^t$ will contain the set of imputations $\{\xm^\tau\}_{\tau=t-1-r}^{t-1}$ that were drawn from $\pi(\xm \mid \z^{t-1})$ in the past iterations.
We define the conditional distribution of $w^t$ as $\pi(w^t \mid \hz) \overset{!}{=} \prod_{\tau=t-1-r}^{t-1} \pi(\xm^\tau \mid \hz)$ where $\hz$ is $\z^{t-1}$ if a new proposal was accepted, or $\tz$ if a proposal was rejected.
This construction of the history ensures that the proposal distribution $\tqe(\tz \mid \xo, \txm)$ in \cref{alg-line:acmwg-choose-historical-xm,alg-line:acmwg-sample-proposal} of \cref{alg:ac-mwg} is \emph{independent of the current $\z^{t-1}$}, and hence is a key ingredient to the proof.

We denote the transition kernel of AC-MWG as $k(\z^t, \xm^t, w^t \mid \z^{t-1}; \hist^{t-1})$.\footnote{Note that the kernel does not depend on the current $\xm^{t-1}$, since the new state only depends on the new $\z^t$, \ie $\xm^t \sim \pi(\xm \mid \z^t)$.} 
The kernel, which depends on the history $\hist^{t-1}$, takes the current state of $\z^{t-1}$ and produces the new state $(\z^t, \xm^t)$ and the new historical variable $w^t$. 
We further use $f^{t-1}_\histnom(\txm)$ to denote the probability of sampling a historical imputation $\txm$ from the available history $\hist^{t-1}$ in \cref{alg-line:acmwg-choose-historical-xm} of \cref{alg:ac-mwg}. 
The kernel of AC-MWG is then defined as follows
\begin{align*}
    k(\z^t, &\xm^t, w^t \mid \z^{t-1}; \hist^{t-1}) && \\
    &= \pi(\xm^t \mid \z^t) \sum_{\txm \in \hist^t} f^{t-1}_\histnom(\txm) \int \bigg(
    \tqe(\tz \mid \xo, \txm) \rho_t(\tz, \z^{t-1}; \txm) \delta(\z^t, \tz) \pi(w^t \mid \z^{t-1}) %
    \\
    &\phantom{= \pi(\xm^t \mid \z^t) \sum_{\txm \in \hist^t} f^{t-1}_\histnom(\txm) \int } + \tqe(\tz \mid \xo, \txm) \left[1 - \rho_t(\tz, \z^{t-1}; \txm) \right] \delta(\z^t, \z^{t-1}) \pi(w^t \mid \tz)
    \bigg) \dif \tz\\
    &= \pi(\xm^t \mid \z^t) \sum_{\txm \in \hist^t} f^{t-1}_\histnom(\txm) \bigg( \tqe(\z^t \mid \xo, \txm) \rho_t(\z^t, \z^{t-1}; \txm) \pi(w^t \mid \z^{t-1}) %
    \\
    &\phantom{= \pi(\xm^t \mid \z^t) \sum_{\txm \in \hist^t} f^{t-1}_\histnom(\txm) \bigg(}
    + \delta(\z^t, \z^{t-1}) \int \tqe(\tz \mid \xo, \txm) \left[1 - \rho_t(\tz, \z^{t-1}; \txm) \right]  \pi(w^t \mid \tz)
    \dif \tz \bigg)
\end{align*}
The term in the parentheses corresponds to the standard Metropolis--Hastings kernel \citep[see \eg][Section~27.4.2]{Barber2017} with the addition of $w^t$ to denote the new variables to be appended to the history at iteration $t$.

Assuming that at iteration $t-1$ the sampler is already at the stationary distribution $\pi(\z^{t-1}, \xm^{t-1})$, we now integrate the kernel with respect to the distribution of the current state $(\z^{t-1}, \xm^{t-1})$ to obtain the marginal over $\z^t$, $\xm^t$, and $w^t$
\begin{align*}
    p^t(\z^t, &\xm^t, w^t \mid \hist^{t-1}) = \int k(\z^t, \xm^t, w^t \mid \z^{t-1}; \hist^{t-1}) \pi(\z^{t-1}, \xm^{t-1}) \dif \z^{t-1} \dif \xm^{t-1}\\
    \shortintertext{Marginalising the $\xm^{t-1}$}
    &= \int k(\z^t, \xm^t, w^t \mid \z^{t-1}; \hist^{t-1}) \pi(\z^{t-1}) \dif \z^{t-1}\\
    \shortintertext{Inserting the definition of the kernel $k$ and pushing the integral w.r.t.\ $\z^{t-1}$ inside the sum over $\txm$}
    &= \pi(\xm^t \mid \z^t) \sum_{\txm \in \hist^t} f^{t-1}_\histnom(\txm) \bigg( \\
    &\phantom{= \pi}\int \tqe(\z^t \mid \xo, \txm) \rho_t(\z^t, \z^{t-1}; \txm) \pi(w^t \mid \z^{t-1}) \pi(\z^{t-1}) \dif \z^{t-1}\\
    &\phantom{= \pi}
    + \int \delta(\z^t, \z^{t-1}) \int \tqe(\tz \mid \xo, \txm) \left[1 - \rho_t(\tz, \z^{t-1}; \txm) \right]  \pi(w^t \mid \tz)
    \dif \tz \pi(\z^{t-1}) \dif \z^{t-1} \bigg)\\
    \shortintertext{Marginalising the $\z^{t-1}$ in the second integral}
    &= \pi(\xm^t \mid \z^t) \sum_{\txm \in \hist^t} f^{t-1}_\histnom(\txm) \bigg( \\
    &\phantom{= \pi}\int \tqe(\z^t \mid \xo, \txm) \rho_t(\z^t, \z^{t-1}; \txm) \pi(w^t \mid \z^{t-1}) \pi(\z^{t-1}) \dif \z^{t-1}\\
    &\phantom{= \pi}
    + \int \tqe(\tz \mid \xo, \txm) \left[1 - \rho_t(\tz, \z^t; \txm) \right]  \pi(w^t \mid \tz) \pi(\z^t) \dif \tz \bigg)\\
    \shortintertext{Expanding the second summand}
    &= \pi(\xm^t \mid \z^t) \sum_{\txm \in \hist^t} f^{t-1}_\histnom(\txm) \bigg( \\
    &\phantom{= \pi} \int\tqe(\z^t \mid \xo, \txm) \rho_t(\z^t, \z^{t-1}; \txm) \pi(w^t \mid \z^{t-1}) \pi(\z^{t-1}) \dif \z^{t-1}\\
    &\phantom{= \pi}
    - \int \tqe(\tz \mid \xo, \txm) \rho_t(\tz, \z^{t}; \txm) \pi(w^t \mid \tz) \pi(\z^{t}) 
    \dif \tz  \\
    &\phantom{= \pi} + \pi(\z^{t}) \int \tqe(\tz \mid \xo, \txm)\pi(w^t \mid \tz)
    \dif \tz \bigg)\\
    \shortintertext{Using detailed balance $\tqe(\tz \mid \xo, \txm) \rho_t(\tz, \z^{t}; \txm) \pi(\z^{t}) = \tqe(\z^t \mid \xo, \txm) \rho_t(\z^t, \tz; \txm) \pi(\tz)$ on the second summand above to obtain two identical integrals that cancel}
    &= \pi(\xm^t \mid \z^t) \sum_{\txm \in \hist^t} f^{t-1}_\histnom(\txm) \bigg( \\
    &\phantom{= \pi} \int\tqe(\z^t \mid \xo, \txm) \rho_t(\z^t, \z^{t-1}; \txm) \pi(w^t \mid \z^{t-1}) \pi(\z^{t-1}) \dif \z^{t-1}\\
    &\phantom{= \pi}
    - \int \tqe(\z^t \mid \xo, \txm) \rho_t(\z^t, \tz; \txm) \pi(w^t \mid \tz) \pi(\tz) 
    \dif \tz  \\
    &\phantom{= \pi} + \pi(\z^{t}) \int \tqe(\tz \mid \xo, \txm)\pi(w^t \mid \tz)
    \dif \tz \bigg)\\
    \intertext{Cancelling the integral terms and rearranging we obtain the marginal distribution}
    p^t(\z^t, &\xm^t, w^t \mid \hist^{t-1}) = \pi(\xm^t, \z^t) \sum_{\txm \in \hist^t} f^{t-1}_\histnom(\txm) \int \tqe(\tz \mid \xo, \txm)\pi(w^t \mid \tz)
    \dif \tz.
\end{align*}
Importantly, the factorisation shows that $(\xm^t, \z^t)$ and $w^t$ are independent, and hence
\begin{align*}
    p^t(w^t \mid \hist^{t-1}) = \int p^t(\z^t, \xm^t, w^t \mid \hist^{t-1}) \dif \z^t \dif \xm^t = \sum_{\txm \in \hist^t} f^{t-1}_\histnom(\txm) \int \tqe(\tz \mid \xo, \txm)\pi(w^t \mid \tz) \dif \tz.
\end{align*}
Therefore it immediately follows that
\begin{align*}
    p^t(\z^t, \xm^t \mid \hist^{t} = \hist^{t-1} \cup \{w^t\}) &= 
    \frac{p^t(\z^t, \xm^t, w^t \mid \hist^{t-1})}{p^t(w^t \mid \hist^{t-1})} = \pi(\xm^t, \z^t),
\end{align*}
which validates that the algorithm remains in the stationary distribution once it has reached it. 
\end{proof}

Given that the sampler remains in the stationary distribution as shown in the above proof, we now show that the sampler can reach it.
As discussed in \cref{sec:ac-mwg} the AC-MWG sampler corresponds to an ancestral sampler, which draws samples from $p(\z \mid \xo)$ using non-Markovian adaptive Metropolis--Hastings, and then draws from $p(\xm \mid \xo, \z)$ to obtain joint samples $(\z, \xm) \sim p(\z, \xm \mid \xo)$.
Therefore, to prove that the sampler reaches the stationary distribution (question~(\myhyperlink{apx:ac-mwg-proofs-question-ergodicity}) from the start of the section) we only need to show that the Metropolis--Hastings sampler reaches $p(\z \mid \xo)$.
Let $\pi(\z) = p(\z \mid \xo)$ denote the target distribution, $a^t(\txm^t) \in [0, 1]$ a function that depends on the historical sample $\txm^t \sim f_{\histnom}^{t-1}(\txm^{t-1})$ re-sampled from the history $\hist^{t-1}$ in \cref{alg-line:acmwg-choose-historical-xm} of \cref{alg:ac-mwg} at iteration $t$, and $\tXm^t = (\txm^1, \ldots, \txm^t)$ which denotes all those $\txm$ drawn up to iteration $t$, whose distribution we denote with $p_\histnom^t(\tXm^t)$.
We formalise the answer to question~(\myhyperlink{apx:ac-mwg-proofs-question-ergodicity}) in the following theorem.\footnote{Our theorem here is analogous to Theorem~2 by \citet{holdenAdaptiveIndependentMetropolis2009} but we extend the proof to the case where the proposal distribution is sampled stochastically using the history.}

\begin{theorem}
    If the likelihood of the model is bounded and the prior--variational mixture proposal in \cref{eq:ac-mwg-mixture-proposal} uses an $\epsilon > 0$, then there is a function $a^\tau(\txm^\tau)  \in (0, 1]$ that satisfies the strong Doeblin condition
    \begin{align}
        \tqe(\tz \mid \xo, \txm^\tau) \geq a^\tau(\txm^\tau) \pi(\tz), \quad\text{ for } \forall \tz \text{ and } \forall \txm^t, \label{eq:ac-mwg-theorem-strong-doeblin-condition}
    \end{align}
    and the total variation distance is bounded from above
    \begin{align}
        \| p^t&(\z^t) - \pi(\z^t) \|_{\mathrm{TV}} 
        \leq \Expect{p_\histnom^t(\tXm^t)}{\prod_{\tau=1}^t ( 1 - a^\tau(\txm^\tau))}.
        \label{eq:ac-mwg-theorem-total-variance}
    \end{align}
    Hence the algorithm samples the target distribution within a finite number of iterations with a probability arbitrarily close to 1.
\end{theorem}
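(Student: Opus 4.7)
The plan is to address the two displayed claims in turn; the closing sentence then follows by an elementary limit argument. The overall strategy mirrors the approach of \citet{holdenAdaptiveIndependentMetropolis2009} for adaptive independent Metropolis--Hastings, adapted to our setting in which the proposal at each step is randomised through the history-conditioned sample $\txm^\tau$.

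I would first establish the Doeblin condition by exploiting the mixture form of $\tqe$ in \cref{eq:ac-mwg-mixture-proposal}: pointwise in $\tz$ and uniformly in $\txm^\tau$, $\tqe(\tz \mid \xo, \txm^\tau) \geq \epsilon\, p(\tz)$. Using Bayes' rule to write $\pi(\tz) = p(\xo \mid \tz)\, p(\tz) / p(\xo)$ together with the boundedness hypothesis $p(\xo \mid \tz) \leq M$, one obtains $p(\tz) \geq \pi(\tz)\, p(\xo)/M$, hence $\tqe(\tz \mid \xo, \txm^\tau) \geq \bigl(\epsilon\, p(\xo)/M\bigr) \pi(\tz)$. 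Thus the constant $a^\tau(\txm^\tau) \equiv \epsilon\, p(\xo)/M \in (0,1]$ satisfies \cref{eq:ac-mwg-theorem-strong-doeblin-condition}; tighter $\txm^\tau$-dependent choices may also be extracted by additionally exploiting the $(1-\epsilon)\, q(\z \mid \xo, \txm^\tau)$ component of the mixture.

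For the total variation bound, I would promote the proposal minorization to a minorization of the collapsed MH kernel $P^\tau(\z^{\tau-1}, \cdot\,;\txm^\tau)$. Writing out the kernel, discarding the holding (rejection) mass as a nonnegative contribution, substituting the explicit acceptance probability from \cref{eq:ac-mwg-acceptance-probability}, and simplifying using the definition of $\pi$ yields an integrand of the form $\min\{\tqe(\tz \mid \xo, \txm^\tau),\, \pi(\tz)\, \tqe(\z^{\tau-1} \mid \xo, \txm^\tau)/\pi(\z^{\tau-1})\}$; applying the Doeblin bound to \emph{both} arguments of this min then gives $P^\tau(\z^{\tau-1}, A;\txm^\tau) \geq a^\tau(\txm^\tau)\, \pi(A)$ for every starting state and every measurable $A$. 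Conditional on the entire sequence $\tXm^t$, the chain $(\z^\tau)_{\tau\le t}$ is an inhomogeneous Markov chain whose kernels each admit $\pi$ as a common small measure. A standard coupling then constructs a companion chain that is $\pi$-distributed at every step; the two chains merge at the first step where the minorization component is drawn, and once merged they stay merged because $\pi$ is invariant for each $P^\tau$ (by \cref{theorem:ac-mwgm-stationary-distribution-invariance}). The conditional probability of not yet having merged by step $t$ is exactly $\prod_{\tau=1}^t (1 - a^\tau(\txm^\tau))$, which dominates the conditional TV distance between $\z^t$ and $\pi$; integrating against $p_\histnom^t(\tXm^t)$ and using Fubini together with the triangle inequality for $\|\cdot\|_{\mathrm{TV}}$ yields \cref{eq:ac-mwg-theorem-total-variance}.

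The closing statement is then immediate: the uniform lower bound $a^\tau(\txm^\tau) \geq \epsilon\, p(\xo)/M > 0$ forces the right-hand side of \cref{eq:ac-mwg-theorem-total-variance} to be at most $(1 - \epsilon\, p(\xo)/M)^t$, which can be made smaller than any prescribed $\eta > 0$ by choosing $t$ sufficiently large. The main obstacle, in my view, is justifying the conditional coupling rigorously: the historical sample $\txm^\tau$ drawn in \cref{alg-line:acmwg-choose-historical-xm} depends on the past trajectory through the growing history $\hist^{\tau-1}$, so care is needed to set up the coupling pathwise and to ensure the minorization is invoked with the correct conditioning on $\txm^\tau$ rather than on variables that entangle it with $\z^{\tau-1}$. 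Once this is done cleanly, the remaining steps reduce to familiar Doeblin-style manipulations.
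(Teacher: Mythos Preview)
Your proposal is correct and follows essentially the same route as the paper: establish the Doeblin minorization from the $\epsilon\,p(\z)$ component of the mixture together with bounded likelihood, then use that minorization to obtain the product-form TV bound conditional on $\tXm^t$ before integrating over the history. The only difference is presentational---the paper frames the minorization via a rejection-sampler analogy (lower-bounding $\rho^t$ by $a^t\pi(\tz)/\tqe(\tz)$ and showing the resulting ``accepted'' samples are exactly $\pi$-distributed) and then writes $p^t(\z^t\mid\tXm^t)$ as an explicit mixture $\pi(1-b^t)+\nu^t b^t$, whereas you phrase the same step as kernel minorization $P^\tau\geq a^\tau\pi$ followed by a standard Nummelin-style coupling; both arrive at the identical bound, and the technical concern you flag about conditioning on $\tXm^t$ is handled in the paper exactly as you suggest, by working pathwise in $\tXm^t$ and then averaging.
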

\begin{proof}
    The key observation for this proof is to note that, conditionally on the history $\hist^{t-1}$%
    , each iteration $t$ of the sampler corresponds to one iteration of a (generalised) rejection sampler %
    \citep[see \eg][Section~3.1.1]{liangAdvancedMarkovChain2010}.
    Let us denote $\alpha^t \in \{0, 1\}$ a Bernoulli random variable with probability distribution $p(\alpha^t \mid \tz, \txm^t) = \mathcal{B}(\alpha^t; s^t(\tz, \txm^t))$ that signifies acceptance or rejection of a proposal $\tz$ with a success probability $s^t(\tz, \txm^t)$ of a rejection sampler.
    We obtain $s^t$ by lower-bounding the MH acceptance probability $\rho^t$ in \cref{eq:ac-mwg-acceptance-probability}. We first rewrite the MH acceptance probability
    \begin{align*}
        \rho^t(\tz, \z^{t-1}; \txm^t) &= \min\left\{1, \frac{\pi(\tz)}{\pi(\z^{t-1})} \frac{\tqe(\z^{t-1} \mid \xo, \txm^t)}{\tqe(\tz \mid \xo, \txm^t)}\right\}\\
        &= \frac{\pi(\tz)}{\tqe(\tz \mid \xo, \txm^t)}\min\left\{\frac{\tqe(\tz \mid \xo, \txm^t)}{\pi(\tz)}, \frac{\tqe(\z^{t-1} \mid \xo, \txm^t)}{\pi(\z^{t-1})}\right\},
    \end{align*}
    Lower-bounding the second term above to get $a^t(\txm^t)$
    \begin{align*}
        a^t(\txm^t) = \min_{\tz} \min\left\{\frac{\tqe(\tz \mid \xo, \txm^t)}{\pi(\tz)}, \frac{\tqe(\z^{t-1} \mid \xo, \txm^t)}{\pi(\z^{t-1})}\right\} = \min_{\tz} \frac{\tqe(\tz \mid \xo, \txm^t)}{\pi(\tz)},\footnotemark
    \end{align*}\footnotetext{Note that taking the $\min$ over $\tz$ makes $a^t(\txm^t)$ independent of the current state $\z^{t-1}$!}\!\!
    We finally obtain a lower-bounded acceptance probability $s^t$ to the MH acceptance probability $\rho^t$
    \begin{align*}
        \rho^t(\tz, \z^{t-1}; \txm^t) \geq a^t(\txm^t) \frac{\pi(\tz)}{\tqe(\tz \mid \xo, \txm^t)} = s^t(\tz, \txm^t).\footnotemark
    \end{align*}\footnotetext{Note that due to $\rho^t \in [0, 1]$ we also have that the Bernoulli success probability $s^t \in [0, 1]$.}\!\!
    We will now show that with a probability of at least $a^t(\txm^t)$ the sampler can jump to the stationary distribution $\pi(\z)$ at any iteration $t$. 
    
    The conditional distribution of accepted samples of a rejection sampler is
    \begin{align*}
        p(\tz \mid \txm^t, \alpha^t=1) &= \frac{\tqe(\tz \mid \xo, \txm^t) p(\alpha^t=1 \mid \tz, \txm^t)}{p(\alpha^t=1 \mid \txm^t)} 
        \propto \tqe(\tz \mid \xo, \txm^t) p(\alpha^t=1 \mid \tz, \txm^t).
    \end{align*}
    Inserting $p(\alpha^t=1 \mid \tz, \txm^t) = s^t(\tz, \txm^t)$ we obtain
    \begin{align*}
        p(\tz \mid \txm^t, \alpha^t=1) &\propto  \tqe(\tz \mid \xo, \txm^t) a^t(\txm^t) \frac{\pi(\tz)}{\tqe(\tz \mid \xo, \txm^t)}
        = a^t(\txm^t) \pi(\tz)\\
        p(\alpha^t=1 \mid \txm^t) &= \int \tqe(\tz \mid \xo, \txm^t) p(\alpha^t=1 \mid \tz, \txm^t) \dif \tz = \int a^t(\txm^t) \pi(\tz) \dif \tz = a^t(\txm^t)
    \end{align*}
    Hence it follows that the accepted samples follow the target distribution
    \begin{align*}
        p(\tz \mid \txm^t, \alpha^t=1) = \frac{a^t(\txm^t) \pi(\tz)}{\int a^t(\txm^t) \pi(\hat \z) \dif \hat \z} = \pi(\tz).
    \end{align*}
    The analogy between AC-MWG and rejection sampling allows us to conclude that the conditional probability to jump to the stationary distribution at any iteration $t$ is (at least) $a^t(\txm^t)$. This conditional probability depends on the historical sample $\txm \sim p^{t-1}_\histnom(\txm)$ but is independent of the current distribution of $\z^{t-1}$.
    
    We can now show that the probability to be in the stationary distribution within a finite number of iterations $t$ can be made arbitrarily close to 1.
    Let $b^t$ be the probability that the sampler \emph{does not} jump to the stationary distribution in $t$ iterations
    \begin{align*}
        b^t(\tXm^t) &= \prod_{\tau=1}^t \left( 1 - a^\tau(\txm^\tau) \right),
    \end{align*}
    where $\tXm^t = (\txm^1, \ldots, \txm^t)$, and let $p^t(\z^t \mid \tXm^t)$ denote the conditional distribution of $\z^t$ after $t$ iterations
    \begin{align*}
        p^t(\z^t \mid \tXm^t) = \pi(\z^t) (1 - b^t(\tXm^t)) + \nu^t(\z^t \mid \tXm^t) b^t(\tXm^t),
    \end{align*}
    which can be seen as a mixture of the stationary distribution $\pi(\cdot)$ with probability $(1 - b^t(\tXm^t))$ and non-stationary distribution $\nu^t(\cdot)$ with probability $b^t(\tXm^t)$.
    The marginal distribution at the $t$-th iteration is then 
    \begin{align*}
        p^t(\z^t) = \int p^t(\z^t \mid \tXm^t) p_\histnom^t(\tXm^t) \dif \tXm^t
    \end{align*}
    
    We now derive a bound on the total variation distance
    \begin{align*}
        \| p^t&(\z^t) - \pi(\z^t) \|_{\mathrm{TV}} \\
        &= \int \left| \int  p^t(\z^t \mid \tXm^t) p_\histnom^t(\tXm^t) \dif \tXm^t - \pi(\z^t)  \right| \dif \z^t\\
        \intertext{Inserting the definition of $p^t(\z^t \mid \tXm^t)$ and using linearity of expectation to take $\pi(\z^t)$ into the expectation over $\tXm^t$}
        &= \int \Bigg| \int 
        \left(\pi(\z^t) (1 - b^t(\tXm^t)) + \nu^t(\z^t \mid \tXm^t) b^t(\tXm^t) - \pi(\z^t) \right) 
        p_\histnom^t(\tXm^t) \dif \tXm^t \Bigg| \dif \z^t\\
        \shortintertext{Expanding $\pi(\z^t) (1 - b^t(\tXm^t))$ and cancelling terms}
        &= \int \Bigg| \int 
        \left(-\pi(\z^t) + \nu^t(\z^t \mid \tXm^t) \right) b^t(\tXm^t) 
        p_\histnom^t(\tXm^t) \dif \tXm^t \Bigg| \dif \z^t
        \shortintertext{Applying Jensen's inequality to the (convex) norm function}
        &\leq \iint 
        \left|-\pi(\z^t) + \nu^t(\z^t \mid \tXm^t) \right| \dif \z^t b^t(\tXm^t) 
        p_\histnom^t(\tXm^t) \dif \tXm^t \\
        \shortintertext{Applying triangle inequality $\int \left| \nu(\z^t) - \pi(\z^t) \right| \dif \z^t \leq \int |\nu(\z^t)| \dif \z^t + \int | \minus\pi(\z^t) | \dif \z^t =  2$}
        &\leq 2 \int b^t(\tXm^t) 
        p_\histnom^t(\tXm^t) \dif \tXm^t 
        = 2 \Expect{p_\histnom^t(\tXm^t)}{\prod_{\tau=1}^t ( 1 - a^\tau(\txm^\tau))}
    \end{align*}
    Hence, the algorithm converges almost everywhere if the product goes to zero with $t \rightarrow \infty$. Therefore, if $a^\tau(\txm^\tau)) > 0$ infinitely often then the sampler samples the target distribution $\pi(\z)$ with probability arbitrarily close to 1.
    
    To complete the proof we now show that the strong Doeblin condition \citep{holdenConvergenceMarkovChains2000,holdenAdaptiveIndependentMetropolis2009} in \cref{eq:ac-mwg-theorem-strong-doeblin-condition} holds, which requires that there exists $a^t(\txm^t) > 0$ for all $\tz$ and $\txm^t$.
    Informally, the condition requires that the proposal distribution has heavier tails than the target distribution.
    We rewrite the condition in \cref{eq:ac-mwg-theorem-strong-doeblin-condition} in its equivalent form as follows
    \begin{align}
        \frac{\pi(\tz)}{\tqe(\tz \mid \xo, \txm^t)} \leq \frac{1}{a^t(\txm^t)}. \label{eq:doeblin-condition2}
    \end{align}
    Inserting the definition of $\pi(\tz) = p(\tz \mid \xo)$ and $\tqe(\tz \mid \xo, \txm^t)$ from \cref{eq:ac-mwg-mixture-proposal} to the left side we obtain
    \begin{align*}
        \frac{\pi(\tz)}{\tqe(\tz \mid \xo, \txm^t)} &= \frac{p(\tz \mid \xo)}{(1-\epsilon)q(\tz \mid \xo, \txm^t) + \epsilon p(\tz)}\\
        &= \frac{p(\tz, \xo)}{p(\xo)\left((1-\epsilon)q(\tz \mid \xo, \txm^t) + \epsilon p(\tz)\right)} \\
        &= \frac{p(\xo \mid \tz)}{p(\xo)\left((1-\epsilon)\frac{q(\tz \mid \xo, \txm^t)}{p(\tz)} + \epsilon \right)}\\
        &= \frac{p(\xo \mid \tz)}{p(\xo)}{\left((1-\epsilon)\frac{q(\tz \mid \xo, \txm^t)}{p(\tz)} + \epsilon \right)}^{-1}.
    \end{align*}
    Hence the ratio is bounded if $\epsilon > 0$ and if the likelihood is bounded, which we can safely assume since this is already a necessary condition to well learn the model.
    Since the left hand side of \cref{eq:doeblin-condition2} is bounded it follows that $a^t(\txm^t) > 0$, which completes the proof.

\end{proof}

\section{Background: Importance resampling}
\label{apx:background-importance-resampling}

We can generate samples following \cref{eq:missing-given-observed-conditional} by using importance resampling \citep[IR, \eg,][]{chopinIntroductionSequentialMonte2020} to (approximately) sample $p(\z \mid \xo)$ and then sampling $p(\xm \mid \xo, \z)$ as in standard ancestral sampling.
We start with the standard importance sampling formulation for approximating the marginal $p(\xo)$:
\begin{align}
    p(\xo) = \int p(\xo, \z) \dif \z = \int q(\z)\frac{p(\xo, \z)}{q(\z)} \dif \z = \Expect{q(\z)}{w(\z)},
\end{align}
where $q(\z)$ is a proposal distribution that is assumed easy to sample and evaluate, and $w(\z) = p(\xo, \z) / q(\z)$ are the (unnormalised) importance weights, which are also computationally tractable.%

The importance weight function $w(\cdot)$ can then be used to re-weigh the samples from the proposal distribution $q(\z)$ to follow the model posterior $p(\z \mid \xo)$.
We denote $\bw(\tz) = w(\tz) / \Expect{q(\bar \z)}{w(\bar \z)}$ to be the (self-)normalised importance weights, and show that samples from the proposal can be re-weighted to follow the target distribution
\begin{align}
    \pi(\z) &= \Expect{q(\tz)}{\bw(\tz)\delta_{\tz}(\z)} = \Expect{q(\tz)}{ \frac{w(\tz)}{\Expect{q(\bar \z)}{w(\bar \z)}}\delta_{\tz}(\z)} = \Expect{q(\tz)}{ \frac{\frac{p(\xo, \tz)}{q(\tz)}}{p(\xo)}\delta_{\tz}(\z)}
    = p(\z \mid \xo), \label{eq:asymptotic-IR-distribution}
\end{align}
where $\delta_{\tz}(\cdot)$ is the Dirac delta distribution centred at point $\tz$.

In practice, self-normalised importance resampling is generally implemented in four steps:
\begin{enumerate}
    \item Draw M samples from a proposal $\tz^1, \ldots, \tz^M \sim q(\z)$.
    \item Compute the (unnormalised) importance weights $w(\tz^m) = \frac{p(\xo, \tz^m)}{q(\tz^m)}$ for all $\forall m \in [1, M]$.
    \item Self-normalise the weights $\bw(\tz^{m}) = \frac{w(\tz^{m})}{\sum_{l=1}^M w(\tz^{l})}$ for all $\forall m \in [1, M]$.
    \item Resample $\z^m$ with replacement from the set $\{\tz^m\}_{m=1}^M$ using the normalised probabilities $\bw(\tz^m)$.
\end{enumerate}
Self-normalised importance sampling is consistent in the number $M$ of proposed samples and hence samples $p(\z \mid \xo)$ exactly as $M \rightarrow \infty$ but has a bias of the order of $\mathcal{O}(1/M)$ \citep{owenMonteCarloTheory2013, paananenImplicitlyAdaptiveImportance2021}.
Samples $\xm \sim p(\xm \mid \xo)$ can then be obtained by sampling $p(\xm \mid \xo, \z)$.

In standard importance sampling applications the proposal distribution $q(\z)$ is traditionally chosen heuristically using the domain knowledge of the target distribution. However, in the context of VAEs specifying a good proposal can be difficult due to poor prior knowledge about the latent space of the model. 
Moreover, the efficiency of the sampler depends on the quality of the proposal distribution $q(\z)$ and a poor proposal distribution can cause weight degeneracy in the non-asymptotic regime ($M \ll \infty$), where only a few of the proposed samples have non-zero weights, and hence poorly approximate the target distribution.

\section{Experiment details}

In this appendix we provide additional details on the experiments.

\subsection{Synthetic 2D VAE}
\label{apx:exp-details-synthetic-vae}

To investigate and illustrate the pitfalls of MWG we constructed a simple synthetic VAE model that approximates mixture-of-Gaussians data, see \cref{fig:apx:synthetic-vae-pdf-and-expected-latents}. The visibles $\x$ are 2-dimensional and parametrised with a diagonal Gaussian decoder $p(\x \mid z)$, the latents $z$ are 1-dimensional with a uniform prior $p(z) = \mathrm{Uniform}(0, 1)$, and the variational proposal $q(z \mid \x)$ is a Beta distribution amortised with a neural network. 
The low-dimensional example lets us compute, via numerical integration, and visualise the conditional distributions $p(\xm \mid \xo)$, $p(\xm, \z \mid \xo)$, and $p(\z \mid \xo)$.
As demonstrated in the two right-most panels of \cref{fig:apx:synthetic-vae-pdf-and-expected-latents} mixing in the joint space of the missing variable and the latent $(x_0, z)$ may be poor due to low probability valleys between the modes (third panel), but could be easier in the marginal space of $z$ (last panel).

\begin{figure}[htbp]
    \begin{center}
    \includegraphics[width=1.\linewidth]{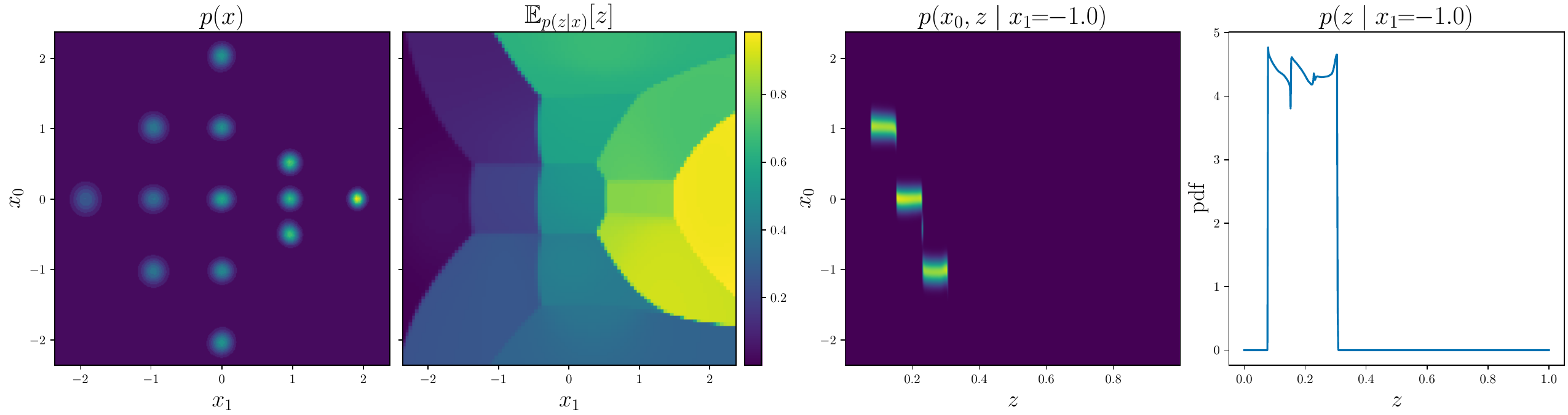}
    \end{center}
    \caption{Left-to-right: the marginal distribution of the visibles $p(\x)$ of the VAE; the posterior expected value of the latents $z$, \ie $\Expect{p(z \mid \x)}{z}$; joint conditional distribution of $x_0$ and $z$ for an observed $x_1$; conditional distribution of $z$ for an observed $x_1$.}
    \label{fig:apx:synthetic-vae-pdf-and-expected-latents}
\end{figure}

For pseudo-Gibbs and MWG in \cref{fig:2dgridmog4_mwg_failures,fig:2dgridmog4_ac_mwg,fig:2dgridmog4_lair} we perform a single run of each algorithm for 50k iterations, with both methods initialised at the same location using a sample drawn from the marginal distribution $p(\xm)$ of the VAE. 
Similarly, in \cref{fig:2dgridmog4_ac_mwg} the proposed method AC-MWG performs a single run of the algorithm for 50k iterations with mixture coefficient $\epsilon=0.01$, initialised at the same location as pseudo-Gibbs and MWG.
Finally, in \cref{fig:2dgridmog4_lair} the proposed method LAIR performs a single run of the algorithm for 2.5k iterations using 19 imputation particles ($K=19$) and 1 replenishing mixture component ($R=1$), the algorithm is initialised with $K=19$ samples from the marginal distribution $p(\xm)$ of the VAE. 

\subsection{Mixture-of-Gaussians MNIST}
\label{apx:exp-details-mog-mnist}

We construct a mixture-of-Gaussians (MoG) ground truth model with 10 multivariate Gaussian components and uniform component probability $\pi(c) = \frac{1}{10}$. Each Gaussian component is fitted on all samples from the MNIST data set (downsampled to 14x14 and transformed with a logit transformation) with a particular label $c \in [1, 10]$.
We then generated a semi-synthetic data set of 18k samples and fit a VAE model with a latent space dimensionality of 25. For the VAE, we have used a diagonal Gaussian decoder using ConvResNet architecture with 4 convolutional residual blocks of feature map depths of 128, 64, 32, and 32, and a dropout of 0.2. The prior distribution over the latents is a standard normal distribution. The variational distribution is parametrised with a diagonal Gaussian encoder using ConvResNet architecture with 4 convolutional residual blocks of feature map depths 32, 64, 128, and 256, and dropout of 0.2.
To optimise the VAE model we have used the sticking-the-landing gradients \citep{roederStickingLandingSimple2017} and fit the model using batch size of 200 for 6000 epochs using Adam optimiser \citep{Kingma2014Adam} with a learning rate of $10^{-4}$.

For pseudo-Gibbs we ran 5 independent chains for 10k iterations each, and to stabilise the sampler, the imputations were clipped to the minimum and maximum values of the data set for each dimension multiplied by 2.
For MWG we have initialised 5 independent chains by running pseudo-Gibbs for 120 iterations with clipping and then run the MWG sampler for 9880 iterations on each chain.
For MWG$'$ we have initialised 5 independent chains by running LAIR with K=4 and R=1 for 120 iterations for each chain, and then run the MWG sampler for 9880 iterations on each chain.
For AC-MWG we have initialised 5 independent chains from the marginal distribution of the VAE and then run AC-MWG with $\epsilon=0.05$ for 10k iterations.
For AC-MWG$'$ we have initialised 5 independent chains by running LAIR with K=4 and R=1 for 120 iterations for each chain, and then run the AC-MWG sampler for 9880 iterations on each chain with $\epsilon=0.05$. 
For LAIR we have initialised $K=4$ particles from the marginal distribution of the VAE and then run the sampler with $K=4$ and $R=1$ for 10k iterations.

\subsection{UCI data sets}
\label{apx:exp-details-uci}

We fit VAEs on four data sets from the UCI repository \citep{duaUCIMachineLearning2017} with the preprocessing of \citep{papamakariosMaskedAutoregressiveFlow2017}. 
For all models, the variational and the generator (decoder) distributions were fitted to be in the diagonal Gaussian family.
For the encoder and decoder networks of the VAEs we fit MLP neural networks with residual block architecture using Adam optimiser \citep{Kingma2014Adam} with learning rate of $10^{-3}$ for a total of 200k stochastic gradient ascent steps (except for Miniboone where 22k steps were used) using batch size of 512 (except for Miniboone where batch size of 1024 was used), while using 8 Monte Carlo samples in each iteration to approximate the variational ELBO and sticking-the-landing gradients to reduce variance \citep{roederStickingLandingSimple2017}.
For Gas, Power, and Hepmass data the encoder and decoder networks used 2 residual blocks each with hidden dimensionality of 256, ReLU activation functions, and a latent space of 16.
In addition, for Power data we add small Gaussian noise to each batch with a standard deviation of 0.001.
For Miniboone data the encoder used 5 residual blocks with hidden dimensionality of 256 and decoder networks used 2 residual blocks with hidden dimensionality of 256, ReLU activation functions, a latent space of 32, and dropout of 0.5.

For pseudo-Gibbs we ran 5 independent chains for 3k iterations each, and to stabilise the sampler on Gas and Hepmass data sets imputations were clipped to the minimum and maximum values of the data set for each dimension multiplied by 2.
For MWG we have initialised 5 independent chains by running LAIR with K=4 and R=1 for 100 iterations for each chain, and then run the MWG sampler for 2900 iterations on each chain.
For AC-MWG we have initialised 5 independent chains by running LAIR with K=4 and R=1 for 100 iterations for each chain, and then run the AC-MWG sampler for 2900 iterations on each chain with $\epsilon=0.3$. 
For LAIR we have initialised $K=4$ particles from the marginal distribution of the VAE and then run the sampler with $K=4$ and $R=1$ for 3k iterations.
Each method evaluations were repeated with 5 different seeds, and the uncertainty reported in the figures reflects the uncertainty over different runs. 

\subsection{Handwritten character Omniglot data set}
\label{apx:exp-details-omniglot}

We fit a VAE on a statically binarised Omniglot data set \citep{lakeHumanlevelConceptLearning2015} downsampled to $28 \times 28$ pixels. We have used a fixed standard Gaussian prior distribution over the latents $p(\z)$ with a dimensionality of 50, an encoder distribution $q(\z \mid \x)$ in the diagonal Gaussian family, and a decoder distribution $p(\x \mid \z)$ in a Bernoulli family. 
For the encoder and decoder networks we have used convolution neural networks with ReLU activations, dropout probability of 0.2, and residual block architecture with 4 residual blocks in each networks.
For the encoder the residual block hidden dimensionalities were $32, 64, 128,$ and $256$, and for the decoder they were $128, 64, 32,$ and $32$. 
We used Adam optimiser \citep{Kingma2014Adam} with a learning rate of $10^{-4}$ and a cosine annealing schedule, for a total of 3k stochastic gradient ascent steps using a batch size of 200.
Moreover sticking-the-landing gradients were used to reduce encoder network gradient variance \citep{roederStickingLandingSimple2017}.

For pseudo-Gibbs we ran 5 independent chains for 5k iterations each. For MWG we have initialised 5 independent chains by running pseudo-Gibbs for 120 iterations, and then running the MWG sampler for 4880 iterations on each chain.
For MWG$'$ we have initialised 5 independent chains by running LAIR with K=4 and R=1 for 120 iterations for each chain, and then run the MWG sampler for 4880 iterations on each chain.
For AC-MWG$'$ we have initialised 5 independent chains by running LAIR with K=4 and R=1 for 120 iterations for each chain, and then run AC-MWG for 4880 iterations on each chain with $\epsilon=0.05$.
For LAIR we have initialised $K=4$ particles from the marginal distribution of the VAE and then run the sampler with $K=4$ and $R=1$ for 5k iterations.
The above evaluations were repeated with 5 different seeds, and the uncertainty reported in the figures reflects the uncertainty over different runs.

\section{Additional figures}
\label{apx:additional-figures}

In this appendix we provide additional figures for the experiments in this paper.

\subsection{Synthetic 2D VAE}
\label{apx:synthetic-2d-vae-figures}

To aid with the understanding of the pitfalls in \cref{sec:pitfalls} and our remedies in \cref{sec:remedies}, we here include additional figures on the synthetic VAE model (see details in \cref{apx:exp-details-synthetic-vae}). Specifically, in the top row of \cref{fig:apx:2dgridmog4_extra_figures_pz_given_xo} we plot the marginal distributions of the latents $p(\z \mid \xo)$ that provide an additional perspective of the failure modes described in \cref{sec:pitfalls}:
A method that is able to sample the joint distribution $p(\xm, \z \mid \xo)$ must also be able to effectively sample the marginal $p(\z \mid \xo)$, and if it is able to do so, then the joint $p(\xm, \z \mid \xo)$ and the marginal of the missing variables $p(\xm \mid \xo)$ are recovered via ancestral sampling of \cref{eq:missing-given-observed-conditional}.

In the left-most column (pitfall~\myhyperlink{pitfalls:relationship-between-latents-and-visibles}) we can see that MWG fails to explore the unimodal posterior $p(\z \mid \xo)$. As described in \cref{sec:pitfalls} this is because the decoder distribution $p(\xo, \xm \mid \tz)$ places little density/mass on the \emph{previous} value of $\xm = \xm^{t-1}$, which in turn gets such latent proposals $\tz$ rejected. As a result, the MWG sampler remains ``stuck'' in a small part of the (marginal) posterior.
The middle column provides an additional view of pitfall~\myhyperlink{pitfalls:latent-multimodality}. In particular, we see that the posterior distribution $p(\z \mid \xo)$ in this case is multi-modal. However, an encoder $q(\z \mid \xo, \xm)$ conditioned on a specific completed data-point $\xo \cup \xm$ is unlikely to propose a latent value $\tz$ that would reach one of the alternative modes. As a result, the pseudo-Gibbs and MWG samplers never reach the alternative modes of the posterior $p(\z \mid \xo)$, and remain stuck in a single mode.
Finally, the right-most column reinforces the understanding of pitfall~\myhyperlink{pitfalls:poor-initialisation}. Specifically, we see that if MWG is initialised in a low-probability location of $p(\z \mid \xo)$, it may fail to reach the high-probability mode.

The second and third rows of \cref{fig:apx:2dgridmog4_extra_figures_pz_given_xo} show the posterior approximations obtained using AC-MWG (\cref{sec:ac-mwg}) and LAIR (\cref{sec:lair}). As we can see, similar to the results in \cref{sec:ac-mwg-2d-vae,sec:lair-2d-vae} the proposed methods are able to avoid the pitfalls of pseudo-Gibbs and MWG. 
The proposed methods remedy pitfall~\myhyperlink{pitfalls:relationship-between-latents-and-visibles} by targeting the marginal $p(\z \mid \xo)$ instead of the joint $p(\xm, \z \mid \xo)$. Once approximate samples from the marginal $p(\z \mid \xo)$ are obtained then the methods use this approximation to perform ancestral sampling of the joint $p(\xm, \z \mid \xo)$. 
Moreover, the methods address pitfall~\myhyperlink{pitfalls:latent-multimodality} by using the prior--variational mixture proposals in \cref{eq:ac-mwg-mixture-proposal,eq:irwg-mixture-proposal-with-prior-reparametrised}, which enable exploration of the latent space.
The remedy to pitfall~\myhyperlink{pitfalls:poor-initialisation} is related to the remedies for pitfalls~\myhyperlink{pitfalls:relationship-between-latents-and-visibles} and \myhyperlink{pitfalls:latent-multimodality}: the prior--variational mixture proposal enables a search of the latent space and targeting the marginal distribution $p(\z \mid \xo)$ allows the sampler to move from the poor initial location to a better one by \emph{not} conditioning on the previous imputation value $\xm = \xm^{t-1}$.

\begin{figure}[h]
    \begin{center}
    \begin{subfigure}[b]{1\textwidth}
        \includegraphics[width=1.\linewidth]{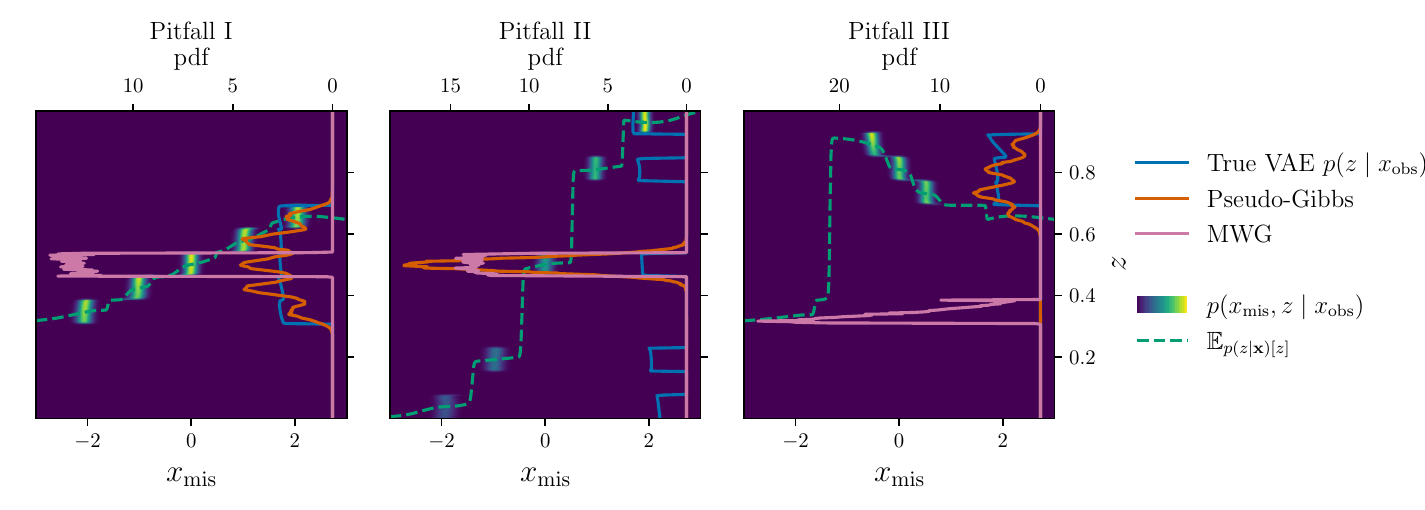}
    \end{subfigure}
    \begin{subfigure}[b]{1\textwidth}
        \includegraphics[width=1.\linewidth]{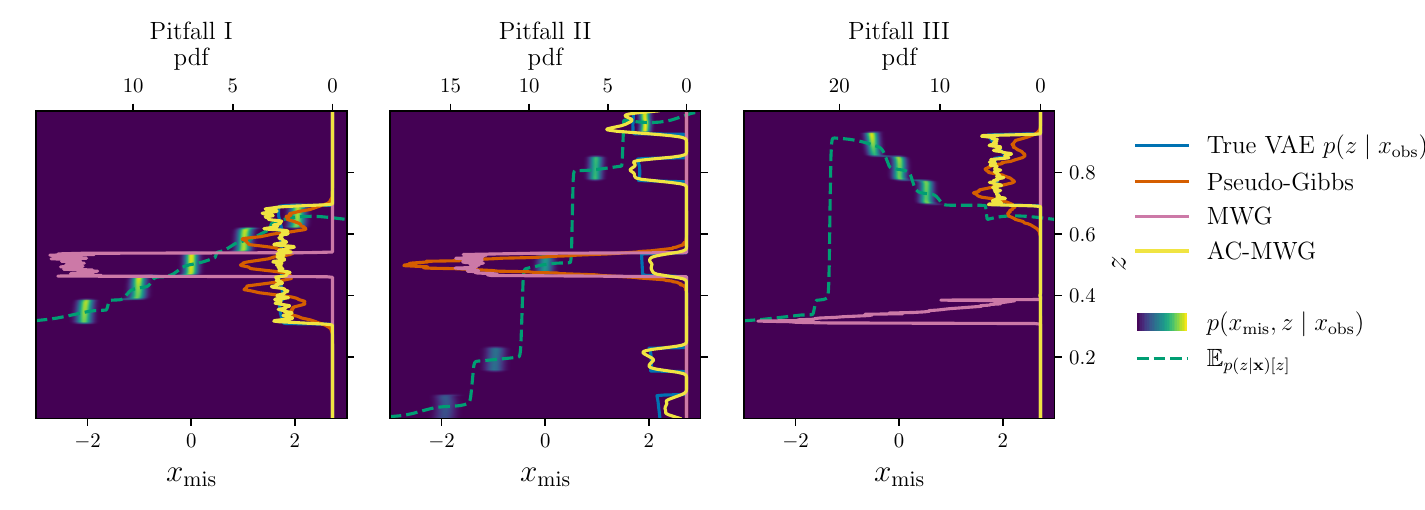}
    \end{subfigure}
    \begin{subfigure}[b]{1\textwidth}
        \includegraphics[width=1.\linewidth]{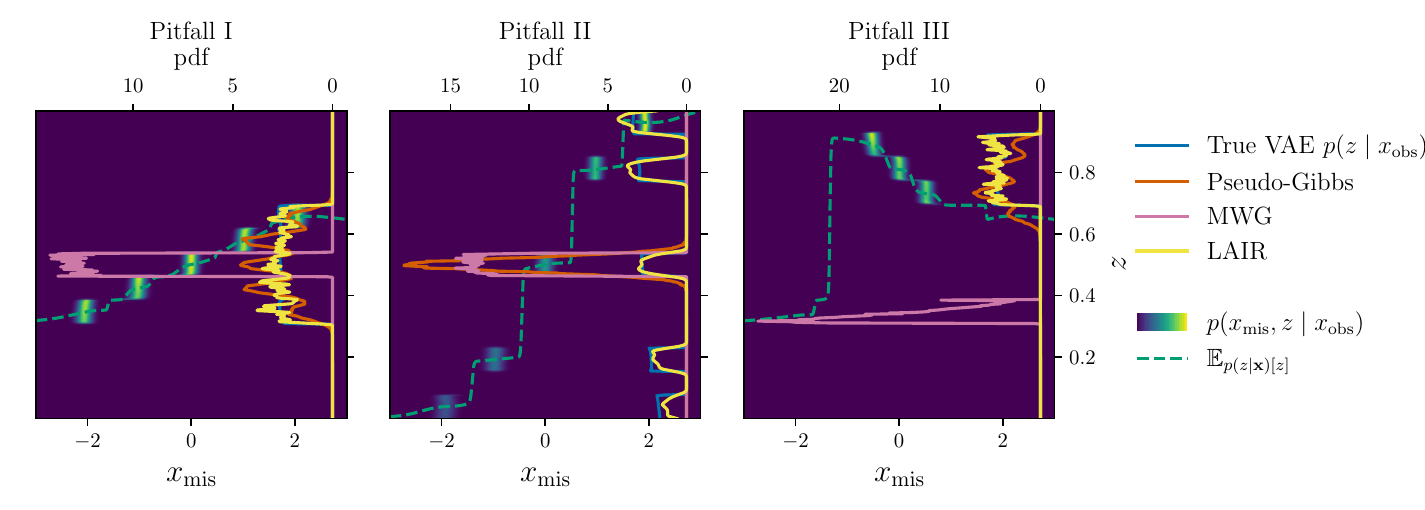}
    \end{subfigure}
    \end{center}
    \caption{\emph{Additional figures on the synthetic VAE, showing the marginals $p(z \mid x_\obs)$.} (The figure is best viewed in colour.)
    \emph{Top:} showing only the true marginal $p(z \mid x_\obs)$ in blue colour and the marginals of pseudo-Gibbs (orange) and MWG (pink). 
    \emph{Middle:} showing the marginal of AC-MWG (yellow).
    \emph{Bottom:} showing the marginal of LAIR (yellow). }
    \label{fig:apx:2dgridmog4_extra_figures_pz_given_xo}
\end{figure}

\clearpage

\subsection{Ablation study: Synthetic 2D VAE}
\label{apx:synthetic-2d-vae-ablations}

In this section we perform an ablation study of AC-MWG and LAIR that supplements the results in \cref{sec:ac-mwg-2d-vae,sec:lair-2d-vae}.

\begin{figure}[h]
    \begin{center}
    \begin{subfigure}[b]{1\textwidth}
        \includegraphics[width=1.\linewidth]{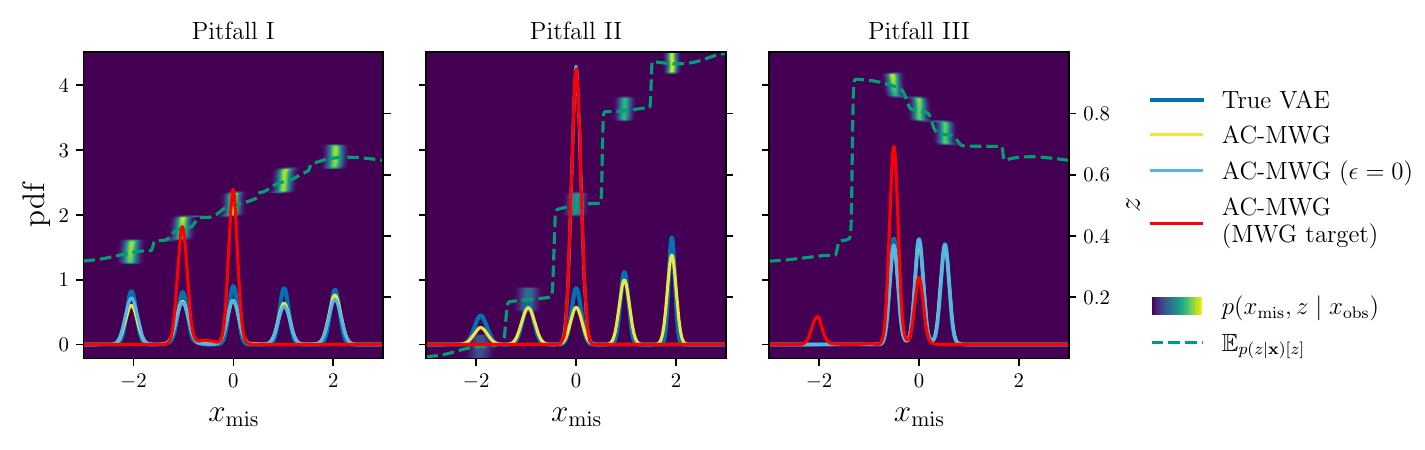}
    \end{subfigure}
    \begin{subfigure}[b]{1\textwidth}
        \includegraphics[width=1.\linewidth]{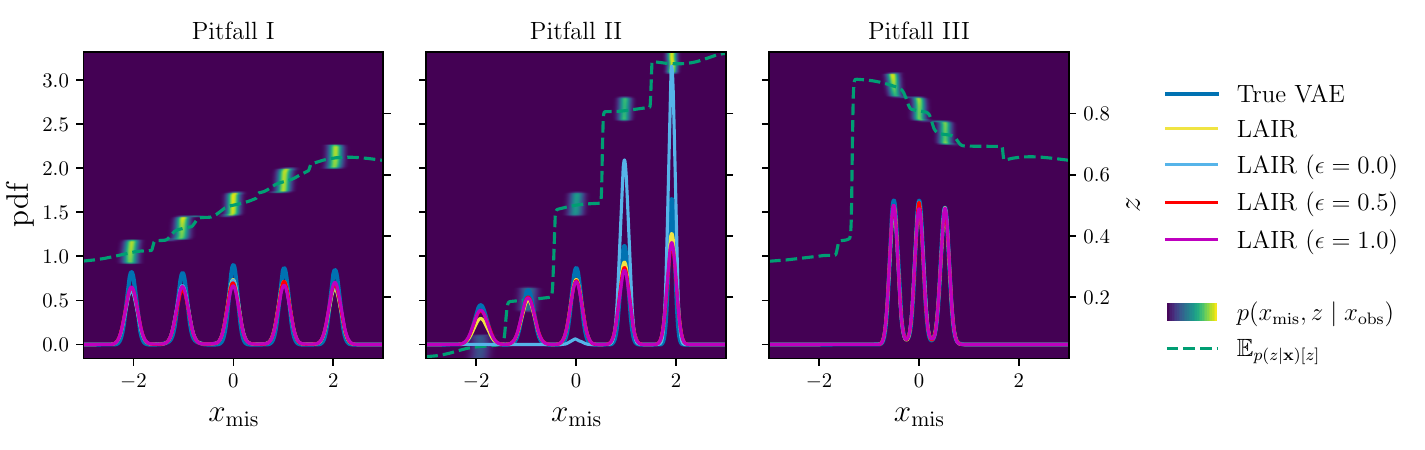}
    \end{subfigure}
    \end{center}
    \caption{\emph{Ablation studies on the 2D VAE sampling problems, same as in \cref{fig:2dgridmog4_mwg_failures,fig:2dgridmog4_ac_mwg,fig:2dgridmog4_lair}.} 
    (The figure is best viewed in colour.) 
    \emph{Top:} the same AC-MWG as before with $\epsilon=0.01$ (yellow), AC-MWG with $\epsilon=0.0$ (light blue) that does not ``explore'' the latent space via samples from the prior, and AC-MWG with $\epsilon=0.01$ but the Metropolis--Hastings target of $p(\z \mid \xo, \xm)$ (red), same as MWG. 
    \emph{Bottom:} the same LAIR as before with $K=19$ and $R=1$ (\ie $\epsilon=0.05$, in yellow), LAIR with $K=20$ and $R=0$ (\ie $\epsilon=0.0$, in light blue) that does not ``explore'' the latent space using samples from the prior, LAIR with $K=10$ and $R=10$ (\ie $\epsilon=0.5$, in red), and LAIR with $K=0$ and $R=20$ (\ie $\epsilon=1.0$, in purple), which corresponds to standard (non-adaptive) importance resampling using the prior distribution as proposal.}
    \label{fig:apx:2dgridmog4_ablations}
\end{figure}

In the top row of \cref{fig:apx:2dgridmog4_ablations} we show two ablation cases of AC-MWG. In the first case (light blue) we set $\epsilon=0.0$ in the prior--variational mixture proposal in \cref{eq:ac-mwg-mixture-proposal}. Without the prior component the sampler fails to ``explore'' the latent space (see the middle panel in the figure, where the light blue and red curves overlap) due to insufficiently exploratory proposal distribution (\ie pitfall~\myhyperlink{pitfalls:latent-multimodality}).
In the second case (red) we change the target distribution of the Metropolis--Hastings step from $p(\z \mid \xo)$ in \cref{eq:ac-mwg-acceptance-probability} to $p(\z \mid \xo, \xm)$ used in standard MWG, that is, using the acceptance probability in \cref{eq:mwg-acceptance-probability}.
We observe that with the MH target changed to $p(\z \mid \xo, \xm)$ the sampler also fails to mix between nearby modes in the latent space in the left-most panel (\ie pitfall~\myhyperlink{pitfalls:relationship-between-latents-and-visibles}), similar to MWG, which also affects the other two cases (middle and right panels).
We therefore validate that the two modifications (the mixture proposal and the collapsed-Gibbs MH target $p(\z \mid \xo)$) introduced in \cref{sec:ac-mwg} are key components of the AC-MWG sampler.

In the bottom row of \cref{fig:apx:2dgridmog4_ablations} we show three ablation cases of LAIR by varying the prior probability $\epsilon=\frac{R}{K+R}$ in the mixture proposal in \cref{eq:irwg-mixture-proposal-with-prior-reparametrised}.
The first case (light blue) corresponds to LAIR with $\epsilon=0.0$ (or $R=0$) and performs sub-optimally (see the middle panel) due to lack of exploration in the latent space (see pitfall~\myhyperlink{pitfalls:latent-multimodality}).
The second case (red) corresponds to LAIR with $\epsilon=0.5$ (or $R=K=10$) and performs similarly to our base LAIR case (yellow). 
The third case (purple) is LAIR with $\epsilon=1.0$ and corresponds to a standard non-adaptive importance resampling with the prior distribution as the proposal. 
The standard importance sampling (purple) performs equally-well because of the simplicity and low-dimensionality of the latent space, however as the latent space gets more complex and higher dimensional the adaptive LAIR sampler will perform better (see results in \cref{apx:mog-mnist-ablations}).

\subsection{Mixture-of-Gaussians MNIST}
\label{apx:mog-mnist-figures}

\begin{figure}[h]
    \begin{center}
    \begin{subfigure}[b]{.49\textwidth}
        \includegraphics[width=1.\linewidth]{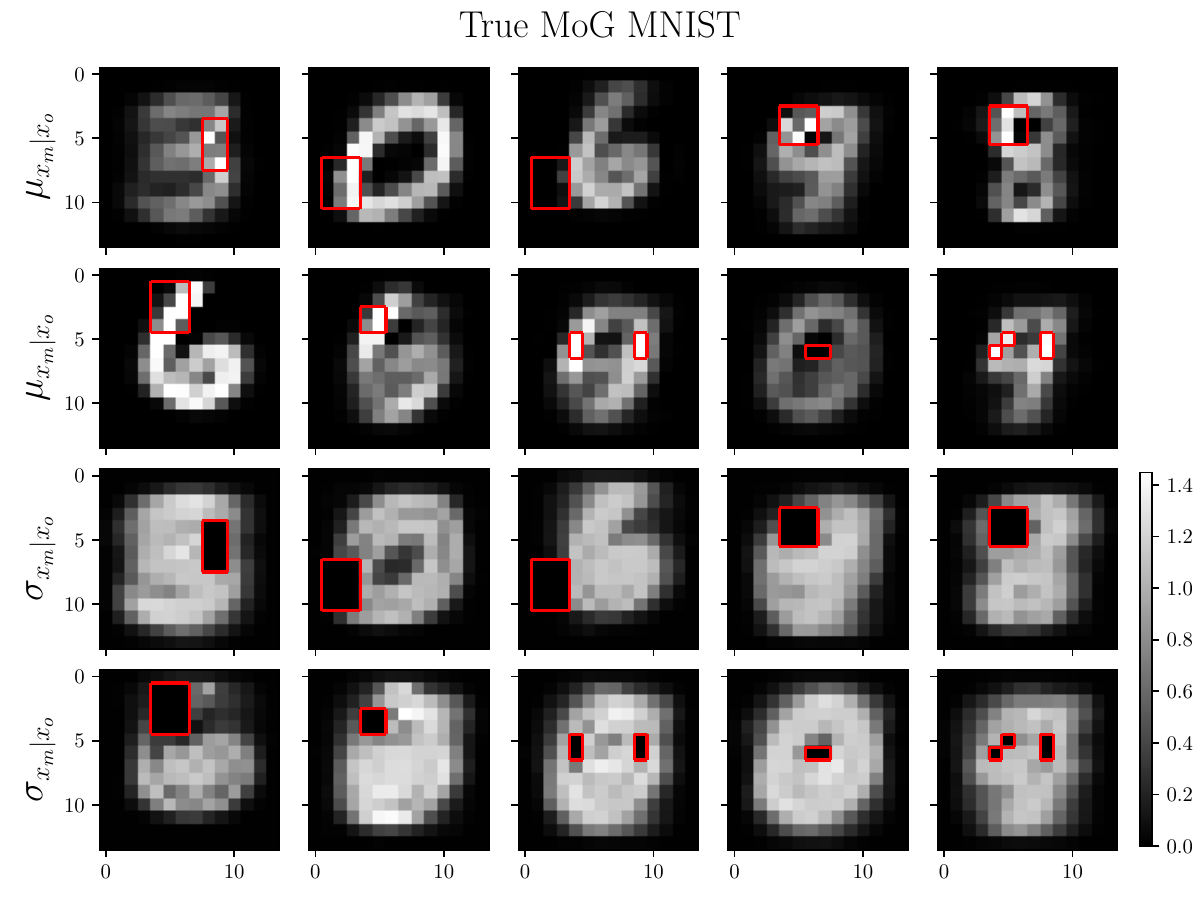}
    \end{subfigure}
    \begin{subfigure}[b]{.49\textwidth}
        \includegraphics[width=1.\linewidth]{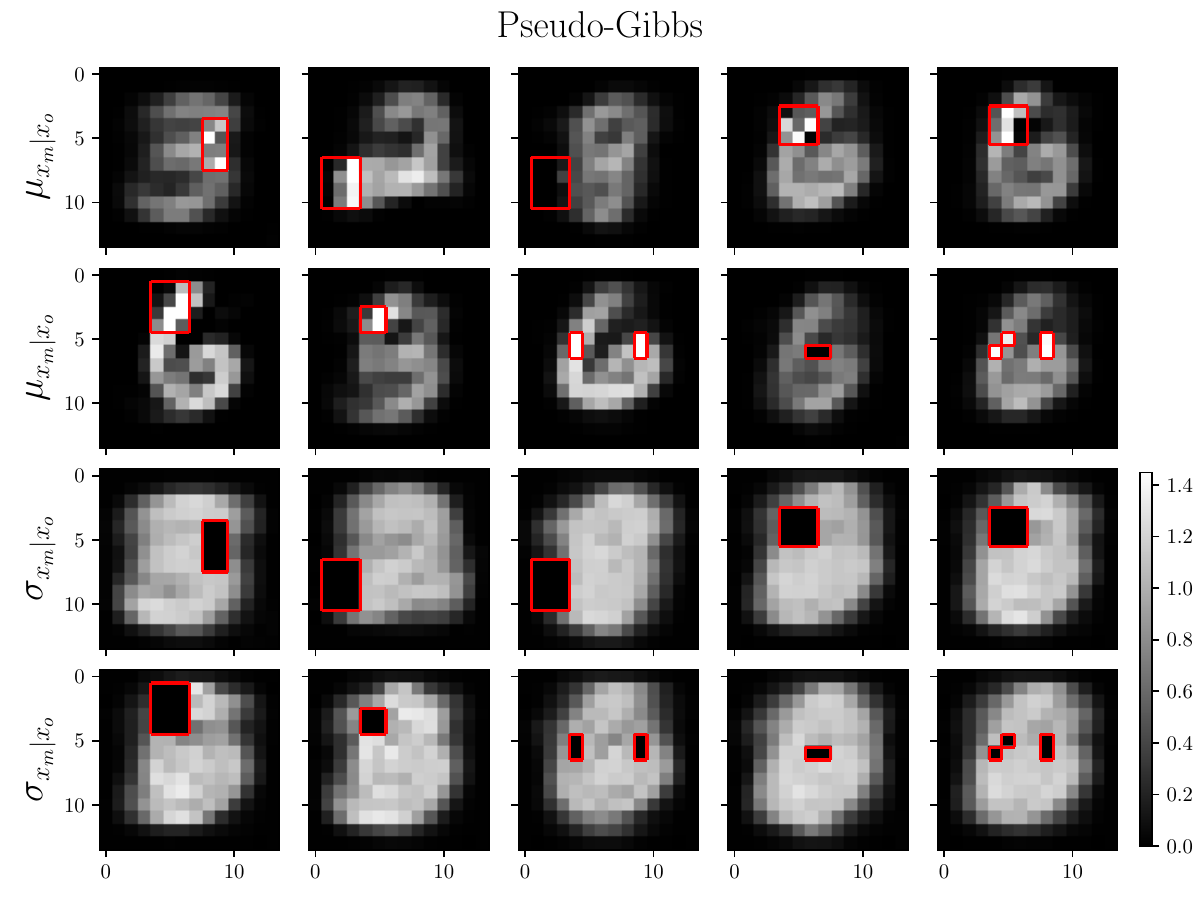}
    \end{subfigure}
    \begin{subfigure}[b]{.49\textwidth}
        \includegraphics[width=1.\linewidth]{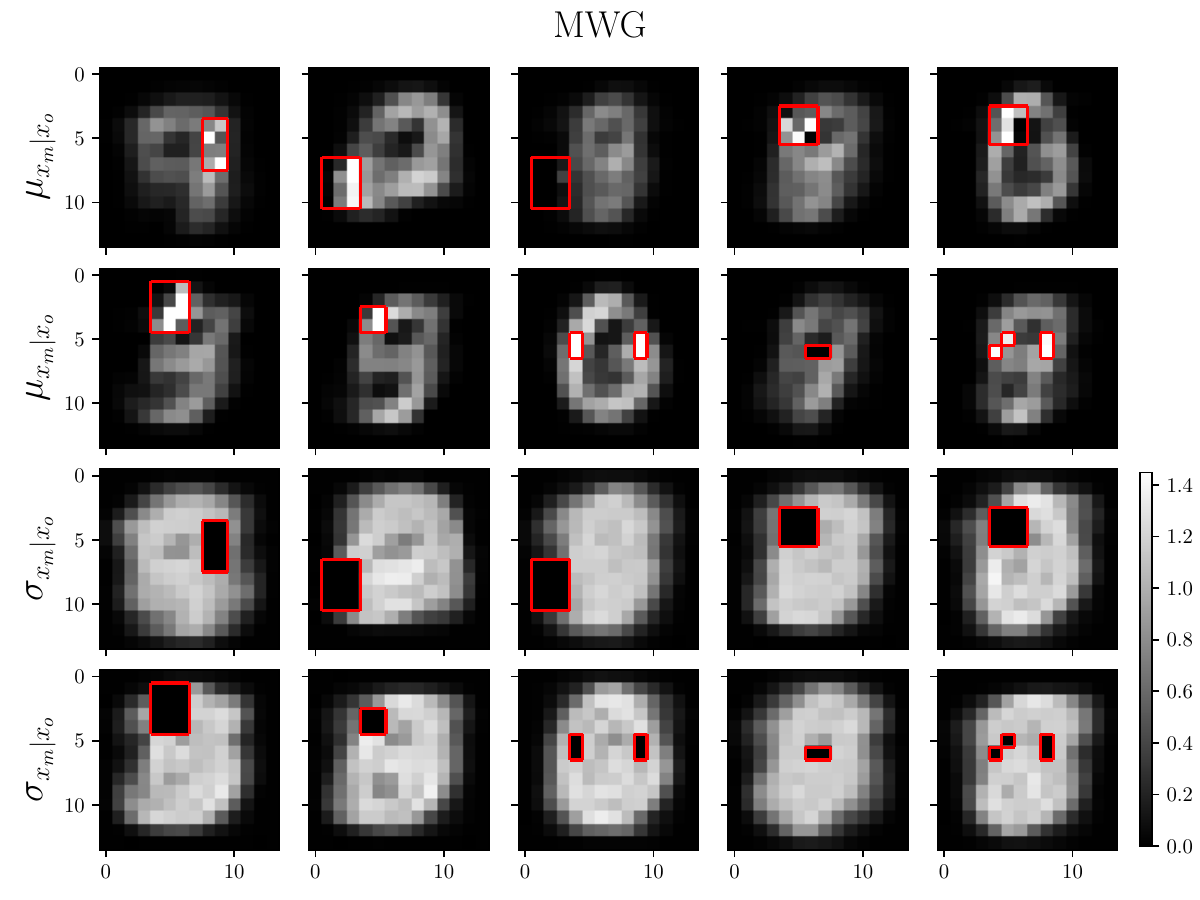}
    \end{subfigure}
    \begin{subfigure}[b]{.49\textwidth}
        \includegraphics[width=1.\linewidth]{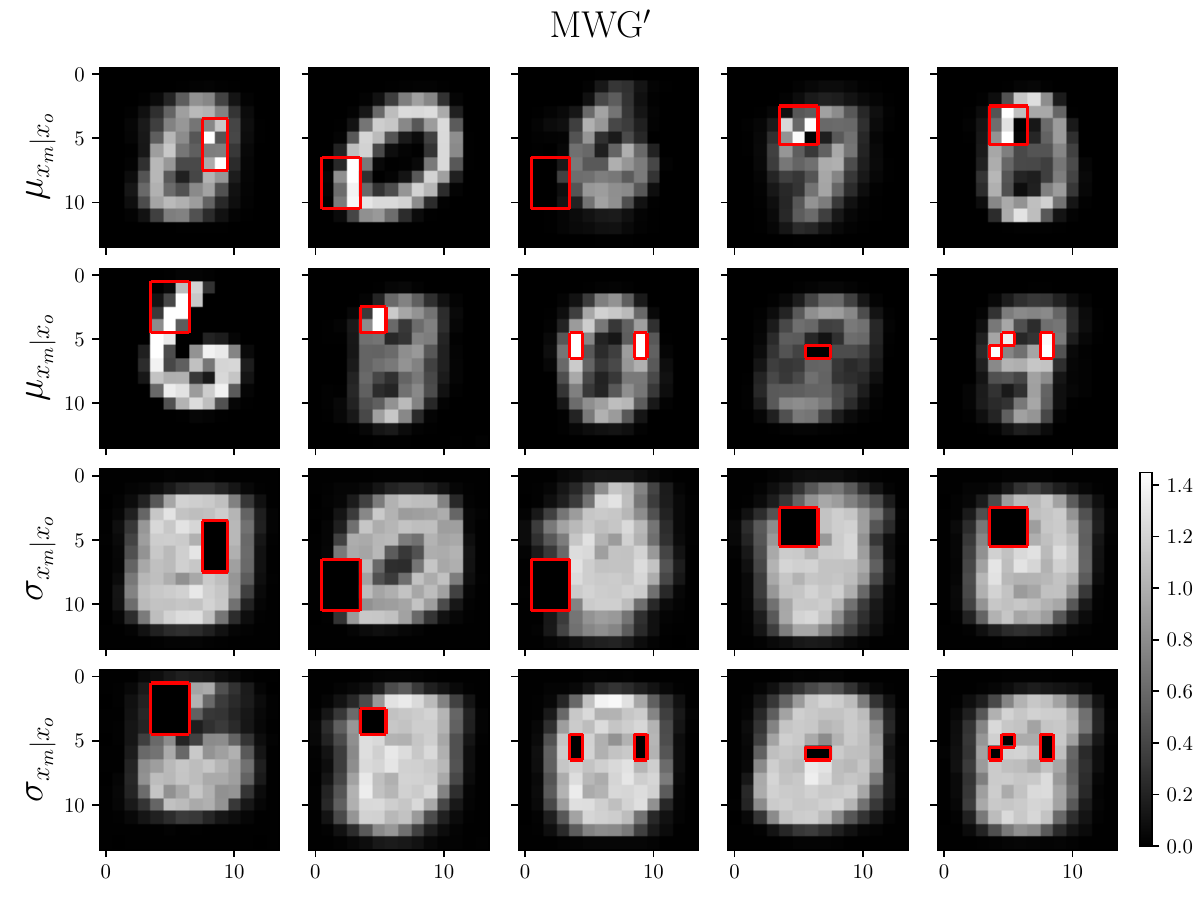}
    \end{subfigure}
    \begin{subfigure}[b]{.49\textwidth}
        \includegraphics[width=1.\linewidth]{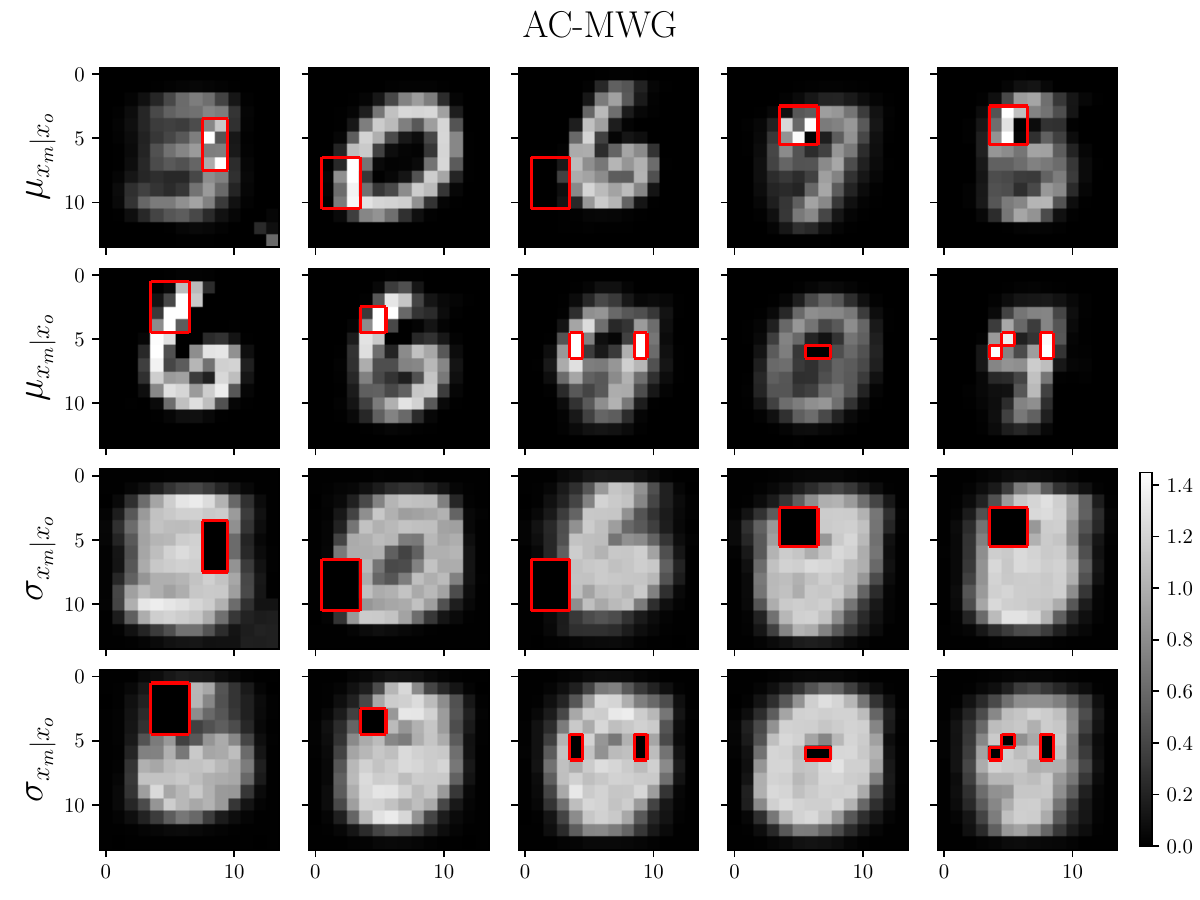}
    \end{subfigure}
    \begin{subfigure}[b]{.49\textwidth}
        \includegraphics[width=1.\linewidth]{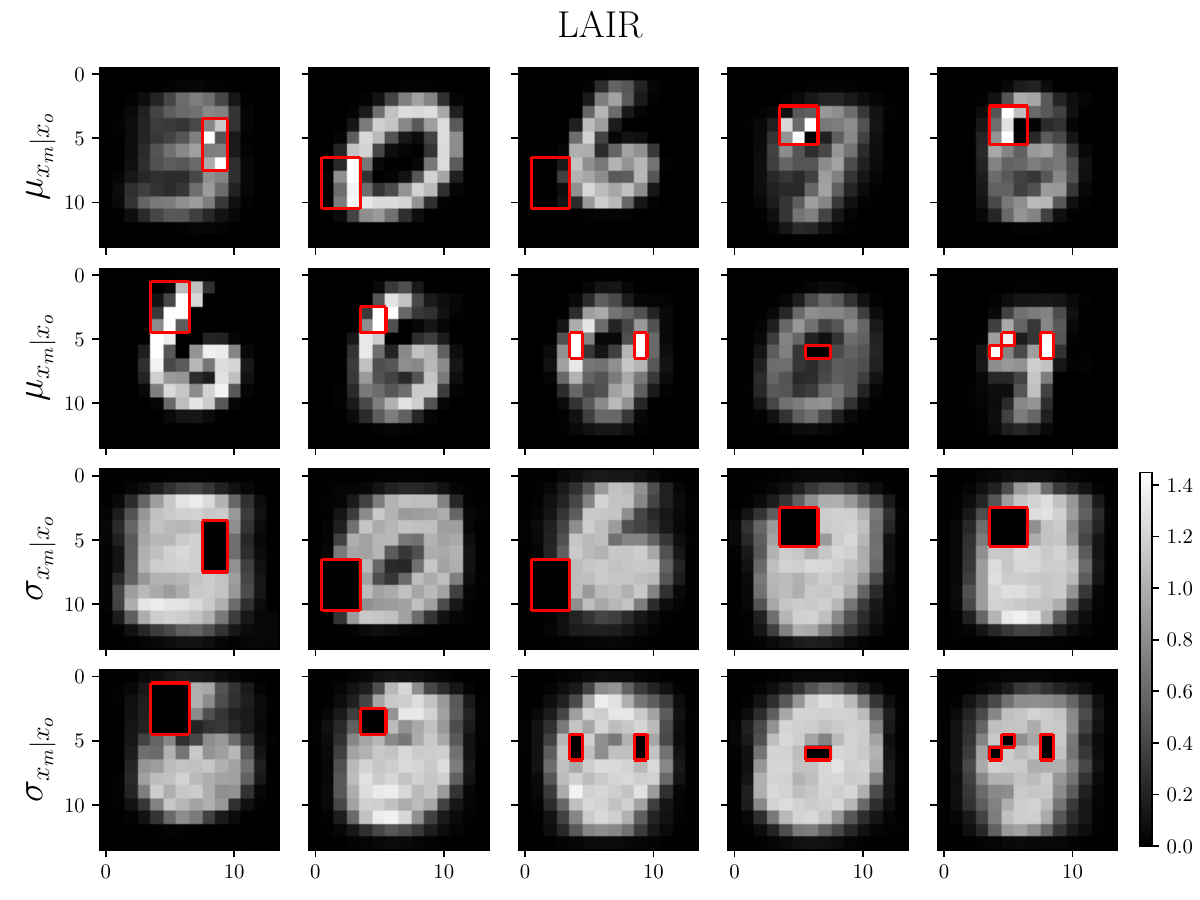}
    \end{subfigure}
    \end{center}
    \caption{\emph{Conditional mean $\mu_{\xm\mid \xo}$ and standard deviation $\sigma_{\xm \mid \xo}$ on the mixture-of-Gaussians MNIST.} The top-left panel shows the ground-truth values, and the other panels show estimates from imputations generated by the evaluated samplers. The pixels surrounded by a red border are the observed values $\xo$.}
    \label{fig:apx:mnist_gmm_per_pixel_cond_mean_and_std}
\end{figure}

\begin{figure}[h]
    \begin{center}
    \begin{subfigure}[b]{.49\textwidth}
        \hfill
    \end{subfigure}
    \begin{subfigure}[b]{.49\textwidth}
        \includegraphics[width=1.\linewidth]{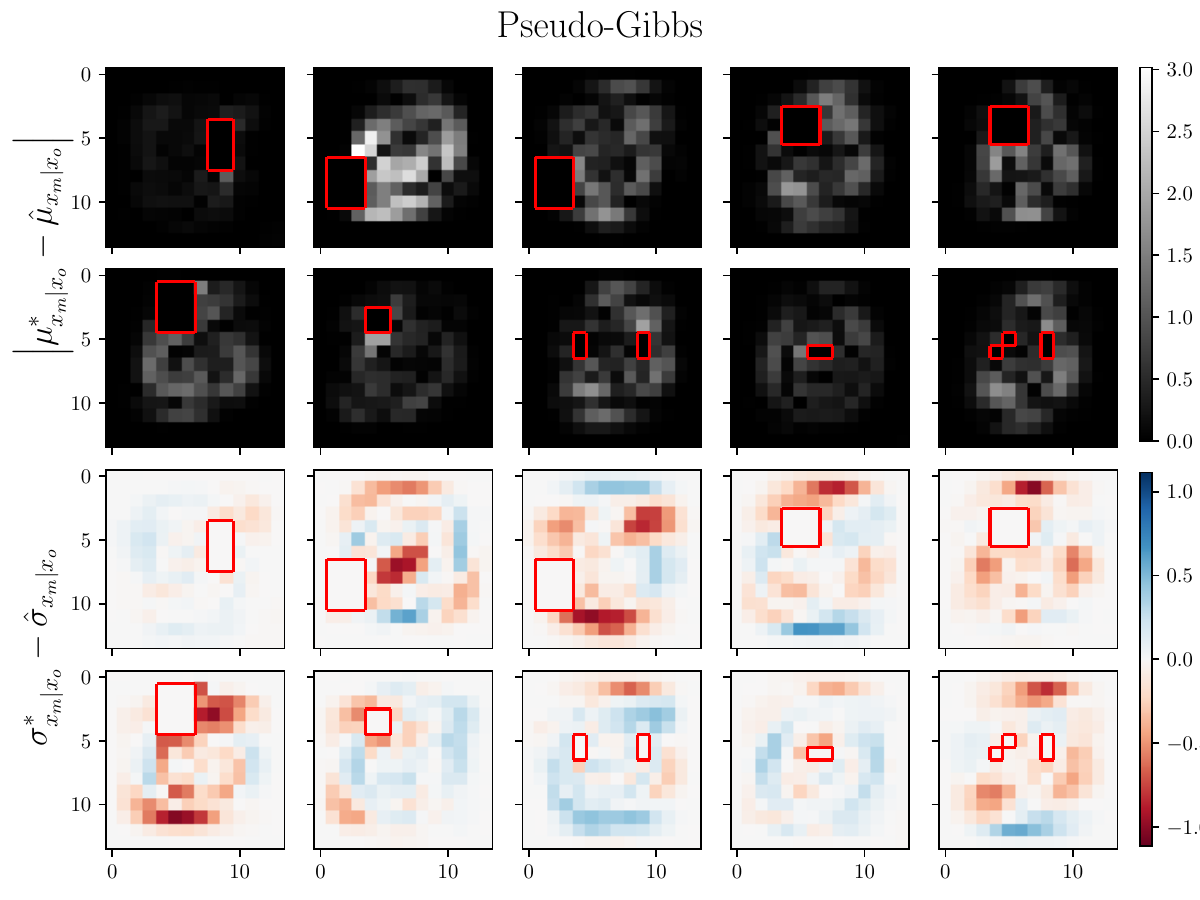}
    \end{subfigure}
    \begin{subfigure}[b]{.49\textwidth}
        \includegraphics[width=1.\linewidth]{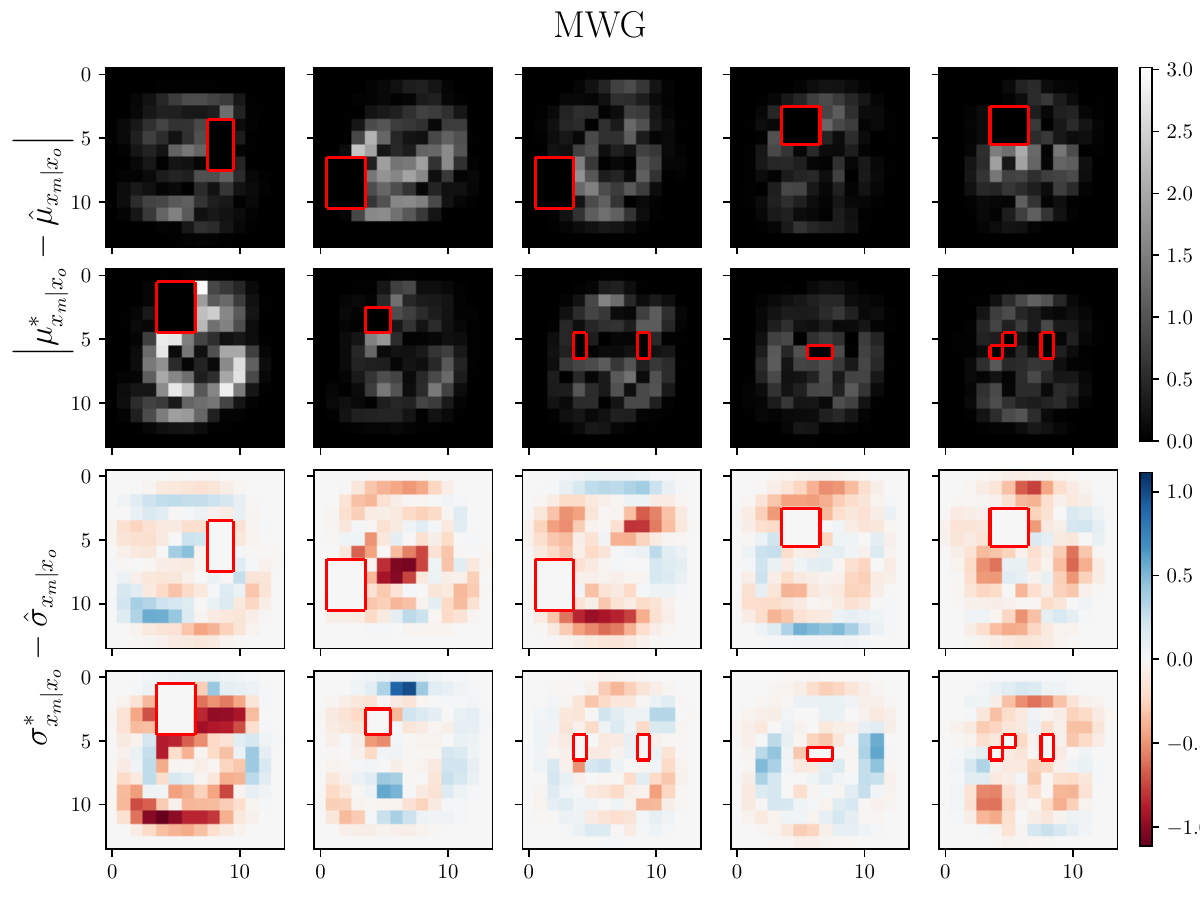}
    \end{subfigure}
    \begin{subfigure}[b]{.49\textwidth}
        \includegraphics[width=1.\linewidth]{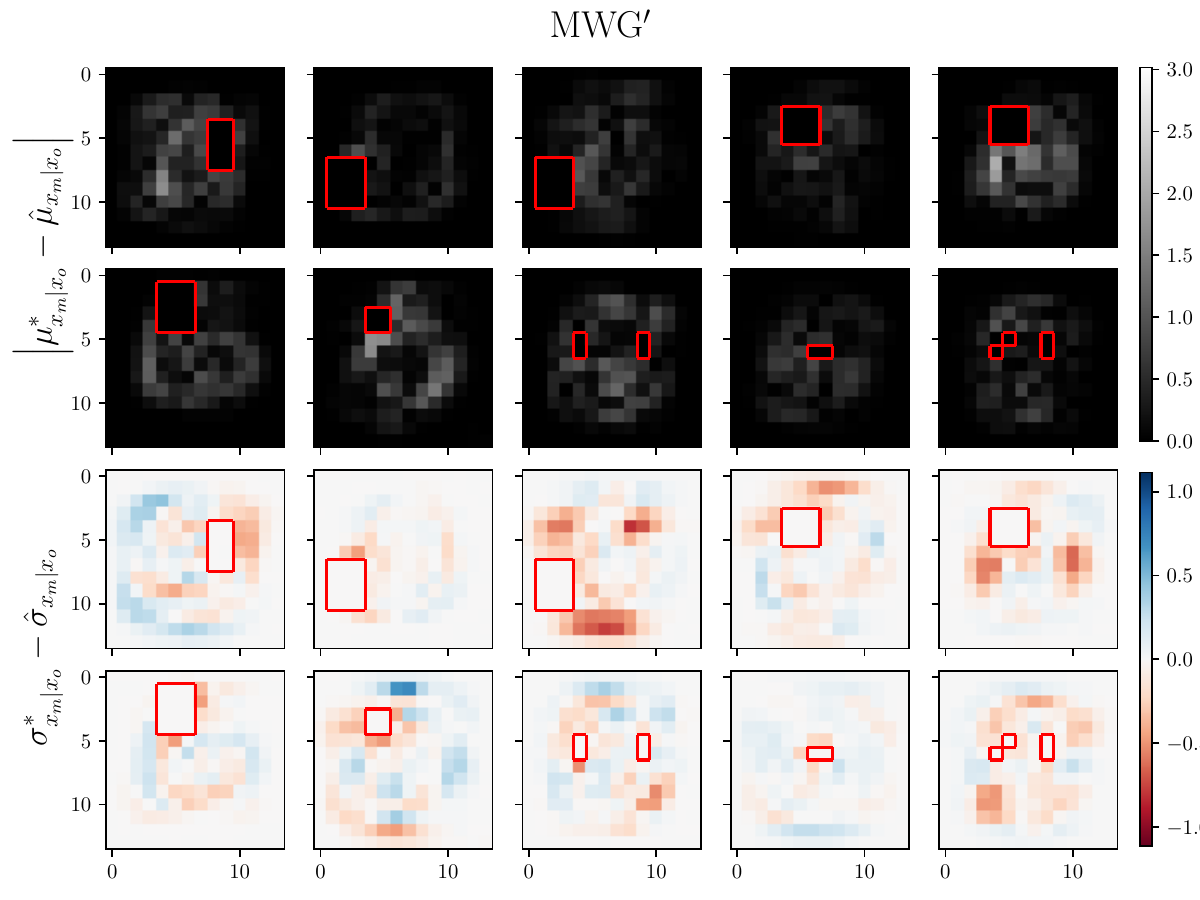}
    \end{subfigure}
    \begin{subfigure}[b]{.49\textwidth}
        \includegraphics[width=1.\linewidth]{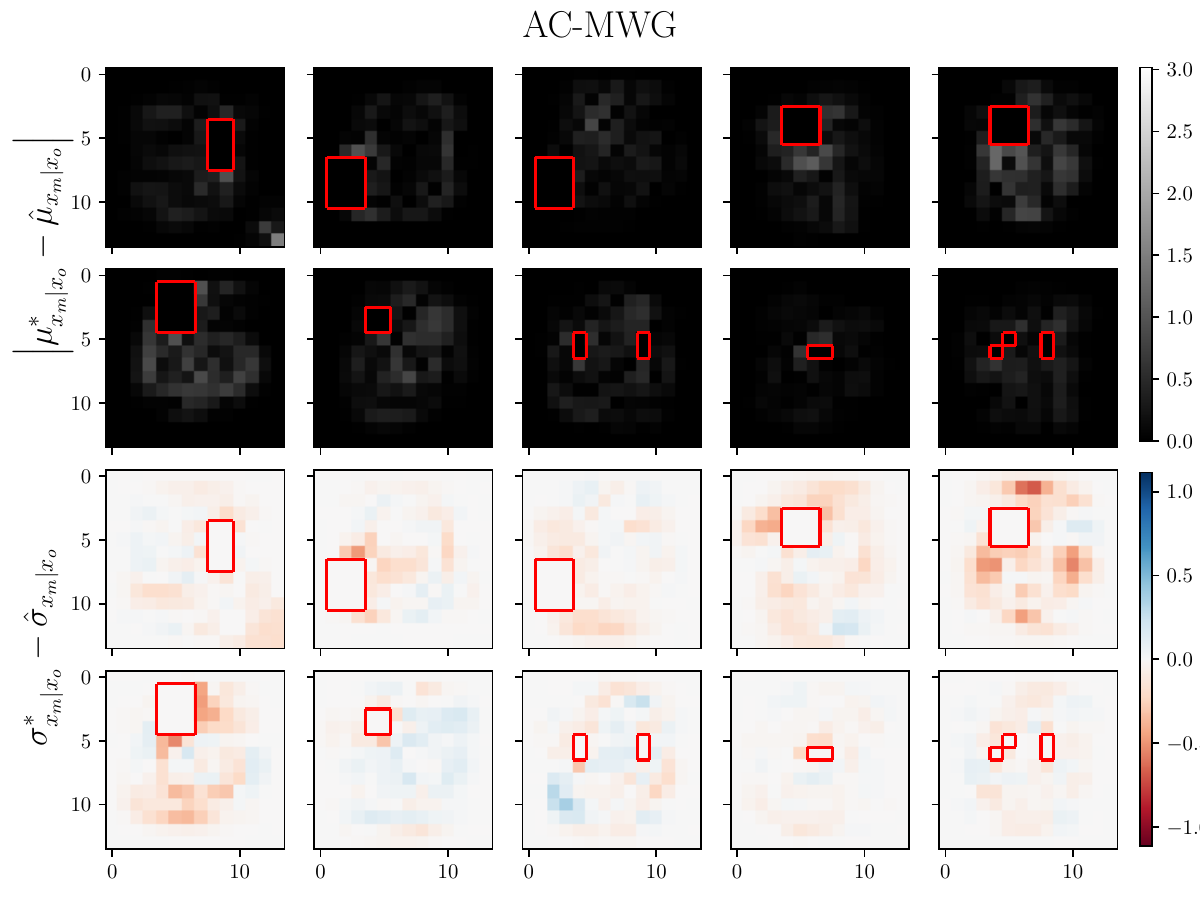}
    \end{subfigure}
    \begin{subfigure}[b]{.49\textwidth}
        \includegraphics[width=1.\linewidth]{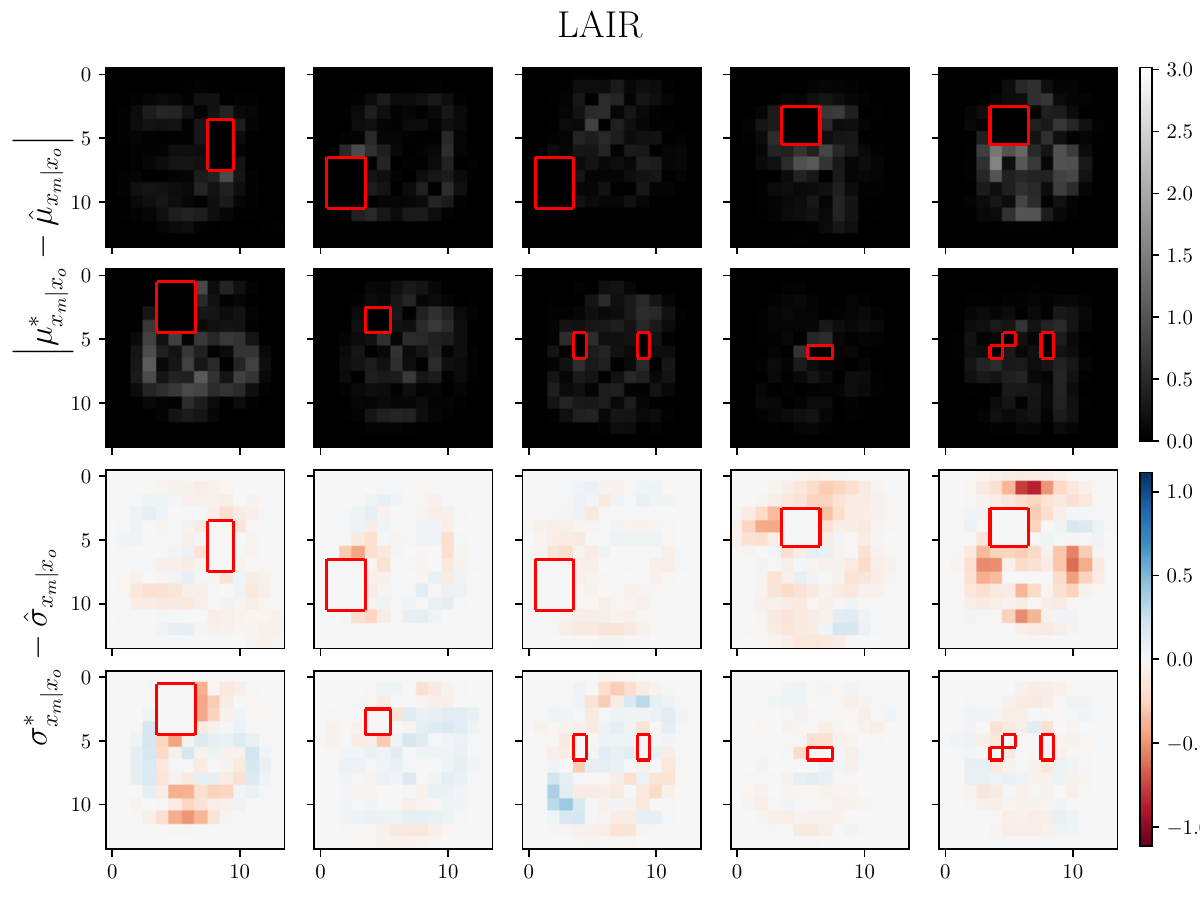}
    \end{subfigure}
    \end{center}
    \caption{\emph{The absolute error on the conditional mean $\mu_{\xm\mid \xo}$ and the signed error on the standard deviation $\sigma_{\xm \mid \xo}$ on the mixture-of-Gaussians MNIST.}
    We can clearly see that the proposed methods (bottom row) outperform the existing samplers.}
    \label{fig:apx:mnist_gmm_per_pixel_cond_mean_and_std_errors}
\end{figure}

\begin{figure}[h]
    \begin{center}
         \includegraphics[width=1.\linewidth]{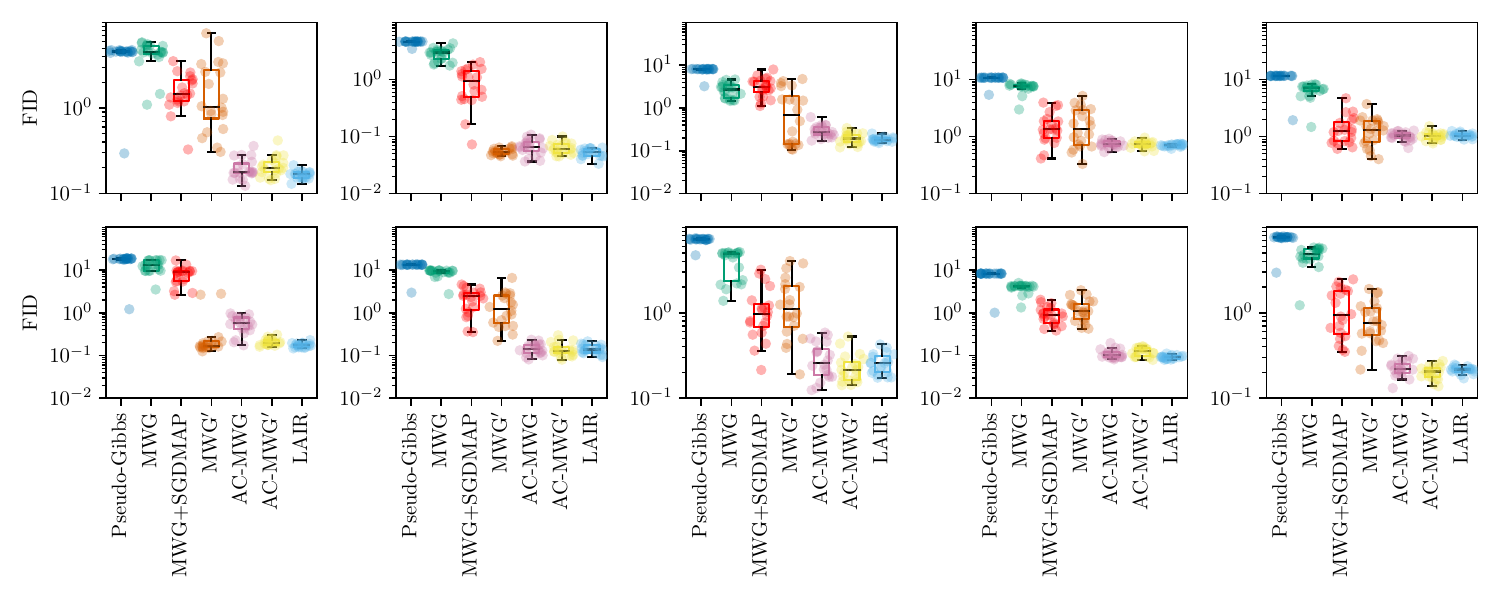}
    \end{center}
    \caption{Same as \cref{fig:mnist_gmm_fid_encoder}, but with additional method included, MWG+SGDMAP (red), which initialises MWG using stochastic gradient ascent on the log-likelihood (with 5 restarts).}
    \label{fig:apx:mnist_gmm_fid_encoder_appendix}
\end{figure}

\Cref{fig:apx:mnist_gmm_per_pixel_cond_mean_and_std} shows the conditional mean and standard deviation at each ``missing'' pixel of the image, conditional on the ``observed'' pixels surrounded by a red border. Top-left shows the ground truth values, and the rest show values estimated from samples produced using the VAE and the (approximate) samplers. 
Furthermore, \cref{fig:apx:mnist_gmm_per_pixel_cond_mean_and_std_errors} shows the absolute error in the conditional means (black is better) and signed error on the standard deviations (blue is underestimated, red is overestimated, white is perfect). 
The figures show a complementary view of the results in \cref{sec:results-mog-mnist}. 
Interestingly, we can see that pseudo-Gibbs and MWG can overestimate the variance at some pixels while at the same time underestimating it at other pixels. The proposed methods, AC-MWG and LAIR, are less affected by this issue.

\Cref{fig:apx:mnist_gmm_fid_encoder_appendix} corresponds to \cref{fig:mnist_gmm_fid_encoder} in the main text but we additionally show MWG with MAP initialisation using stochastic gradient ascent with 5 random restarts (red). 
Furthermore, \cref{fig:apx:mnist_gmm_metrics} shows the experiment results using additional metrics. The additional metrics mirror the results in the main text.

\begin{figure}[h]
    \begin{center}
    \begin{subfigure}[b]{1.\textwidth}
         \centering
         \includegraphics[width=1.\linewidth]{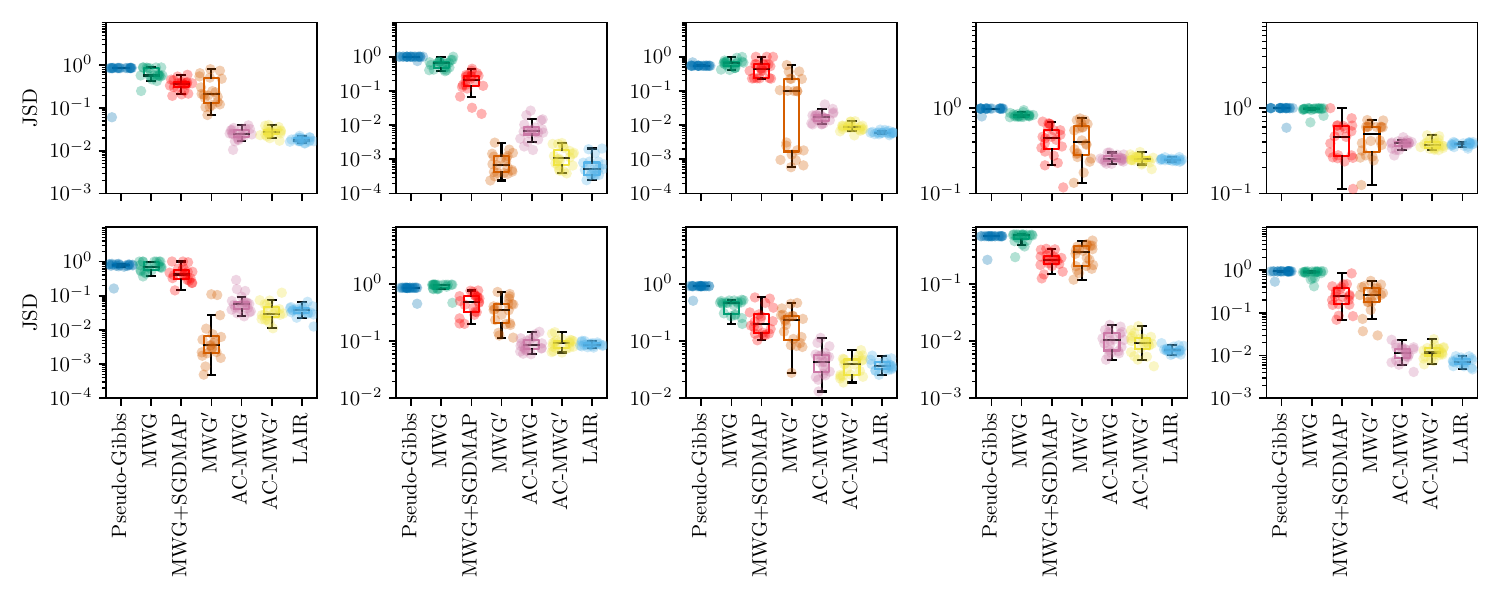}
         \caption{Jensen--Shannon divergence (JSD) between the ground truth conditional $p(\z \mid \xo)$, and estimator $\hat p(\z \mid \xo) = \frac{1}{N} \sum_{i=1}^N p(\z \mid \xo, \xm^i)$, where $\xm^i$ come from the imputation methods and $N$ is the total number of imputations.}
         \label{fig:apx:mnist_gmm_fid_pzgivenxo_jsd}
    \end{subfigure}
    \begin{subfigure}[b]{1.\textwidth}
         \centering
         \includegraphics[width=1.\linewidth]{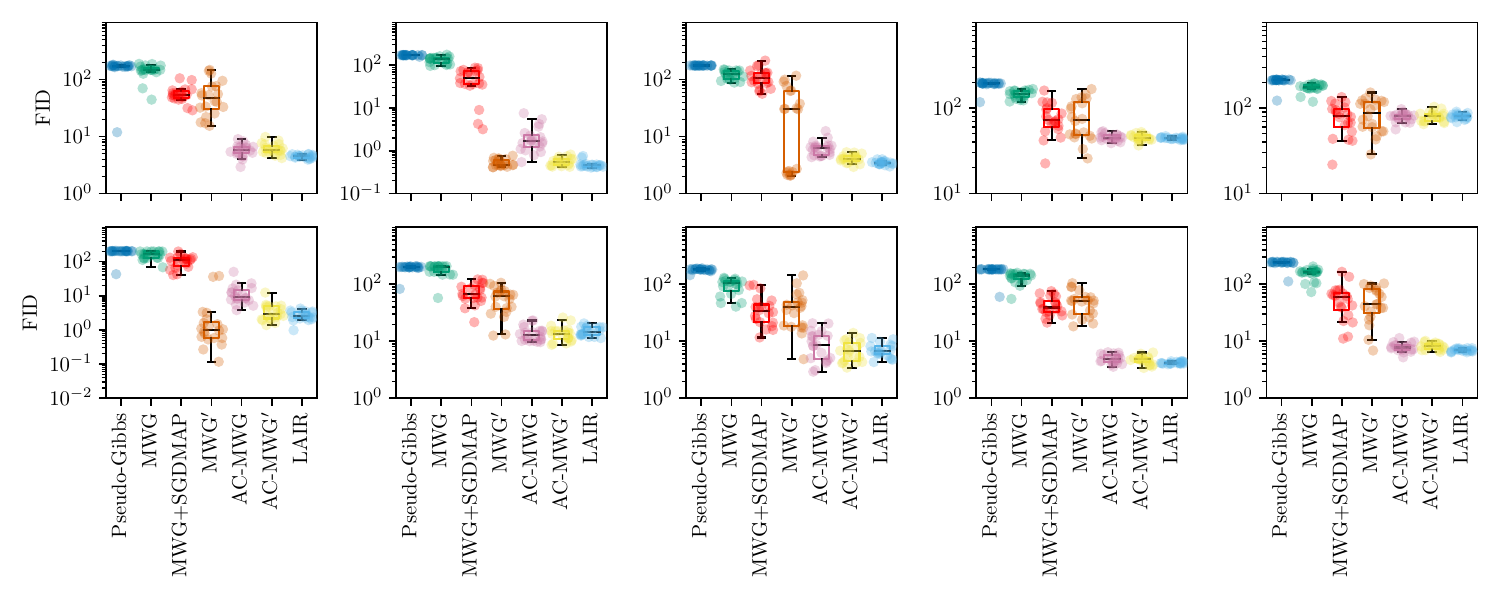}
         \caption{Fr\'echet inception distance (FID) between samples from the ground truth conditional $p(\z \mid \xo)$, and samples obtained from the imputation methods. The inception model features used in FID computation are the final layer outputs of a classifier neural network.}
         \label{fig:apx:mnist_gmm_fid_classifier}
    \end{subfigure}
    \end{center}
    \caption{\emph{Additional metrics on the mixture-of-Gaussians MNIST.} Each panel in the subfigures corresponds to a different conditional sampling problem $p(\xm \mid \xo)$. Each evaluation is repeated 20 times, and the box-plot represents the inter-quartile range, including the median, and the whiskers show the overall range of the results.}
    \label{fig:apx:mnist_gmm_metrics}
\end{figure}

\clearpage

\subsection{Ablation study: Mixture-of-Gaussians MNIST}
\label{apx:mog-mnist-ablations}

This section shows an ablation study of AC-MWG and LAIR on the mixture-of-Gaussians MNIST data set that supplements the results in \cref{sec:results-mog-mnist}.

\begin{figure}[h]
    \begin{center}
         \includegraphics[width=1.\linewidth]{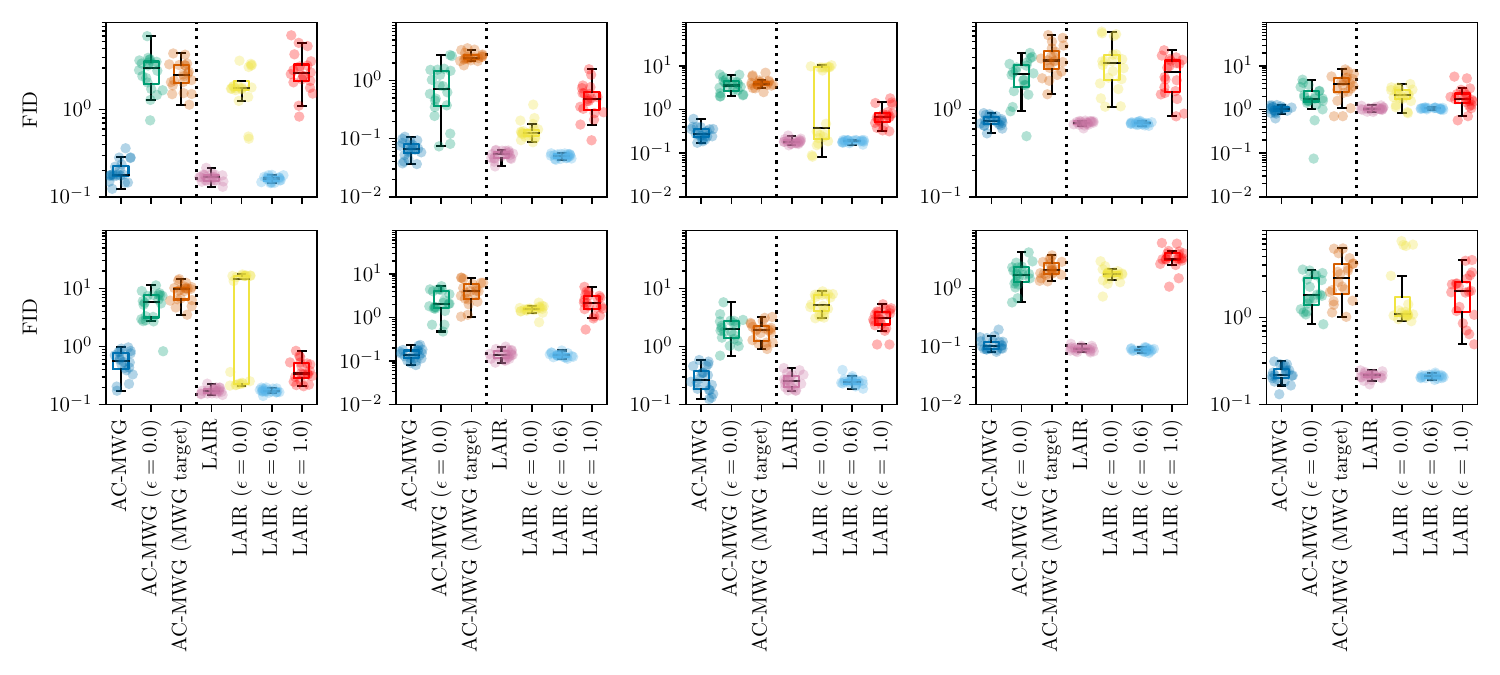}
    \end{center}
    \caption{\emph{Ablation studies on the MoG MNIST sampling problems, same as in \cref{fig:mnist_gmm_fid_encoder}.} The FID score is computed using the final layer outputs of the encoder network as the inception features. \emph{Left part of each panel:} AC-MWG (deep blue) is the same AC-MWG as in \cref{fig:mnist_gmm_fid_encoder} with $\epsilon=0.05$, AC-MWG with $\epsilon=0.0$ (green) corresponding to no prior component in the proposal distribution in \cref{eq:ac-mwg-mixture-proposal}, and AC-MWG with $\epsilon=0.05$ but the Metropolis--Hastings target of $p(\z \mid \xo, \xm)$ (orange), same as MWG. 
    \emph{Right part of each panel:} LAIR (pink) is the same LAIR as in \cref{fig:mnist_gmm_fid_encoder} with $K=4$ and $R=1$ (\ie $\epsilon=0.2$), LAIR with $K=5$ and $R=0$ (\ie $\epsilon=0.0$, yellow), LAIR with $K=2$ and $R=3$ (\ie $\epsilon=0.6$, light blue), and LAIR with $K=0$ and $R=5$ (\ie $\epsilon=1.0$, red) corresponding to standard (non-adaptive) importance resampling with the prior distribution as the proposal. }
    \label{fig:apx:mnist_gmm_fid_encoder_ablations}
\end{figure}

The left part of each panel in \cref{fig:apx:mnist_gmm_fid_encoder_ablations} shows two ablation cases of AC-MWG. 
In the first case (green) we set $\epsilon=0.0$ in the prior--variational mixture proposal in \cref{eq:ac-mwg-mixture-proposal}. As explained in pitfall~\myhyperlink{pitfalls:latent-multimodality} the sampler fails to explore the latent space and hence exhibits degraded performance compared to AC-MWG with $\epsilon > 0$ (deep blue). 
In the second case (orange) we change the target distribution of the Metropolis--Hastings step from $p(\z \mid \xo)$ in \cref{eq:ac-mwg-acceptance-probability} to $p(\z \mid \xo, \xm)$ used in standard MWG, that is, using the acceptance probability in \cref{eq:mwg-acceptance-probability}.
Similarly, we see that this ablation significantly reduces the performance of the sampler (orange versus deep blue). 
With this evaluation, similar to the results in \cref{apx:synthetic-2d-vae-ablations}, we validate that the proposed components (mixture proposal and collapsed-Gibbs MH target) are key to the performance of the AC-MWG method. 

The right-hand part of each panel in \cref{fig:apx:mnist_gmm_fid_encoder_ablations} shows three ablation cases of LAIR with varying prior probabilities $\epsilon = \frac{R}{K+R} \in \{0.0, 0.6, 1.0\}$ (or equivalently, varying $K$ and $R$) in the mixture proposal in \cref{eq:irwg-mixture-proposal-with-prior-reparametrised}.
The first case (yellow) is LAIR with $\epsilon=0.0$ (\ie $R=0$), which corresponds to not using the prior distribution in the mixture proposal in \cref{eq:irwg-mixture-proposal-with-prior-reparametrised}, and exhibits a significantly downgraded performance over LAIR with $\epsilon=0.2$ (or $K=4$ and $R=1$, pink). 
The second case (light blue) is LAIR with $\epsilon=0.6$ and performs similarly to LAIR with $\epsilon=0.2$ (pink), hence showing that the method is not highly sensitive to the choice of $\epsilon$ as long as the edge cases ($\epsilon=0$ and $\epsilon=1$) are avoided. 
The third case (red) is LAIR with $\epsilon=1.0$ and corresponds to a standard non-adaptive importance resampling with the prior distribution as the proposal. 
As we see here, the non-adaptive importance resampling (red) performs sub-optimally and hence validates that the adaptation in LAIR is important for good performance of the method.

\newpage

\subsection{UCI data sets}
\label{apx:uci-figures}

In \cref{fig:apx:uci_energy_laplacian_mae} we show additional metrics of the experiments in \cref{sec:results-uci}. We also include MWG with pseudo-Gibbs initialisation (red) as originally proposed in \citet{Mattei2018}.
The first two rows show energy-distance MMD and Laplacian MMD between the imputed data sets and the ground truth data. We observe a similar behaviour to the results in the main text. The main exception is the Hepmass data where $\text{MWG}'$ (orange) seems to be preferred. However, we note that part of the good performance of $\text{MWG}'$ (orange) on Hepmass data is due to the use of LAIR initialisation, while using pseudo-Gibbs initialisation (red) performs similarly to LAIR (yellow).
Moreover, the final row shows the average mean absolute error, and the proposed methods, AC-MWG (pink) and LAIR (yellow), are preferred over the existing methods on all data sets.

\begin{figure}[h]
    \begin{center}
        \includegraphics[width=1.\linewidth]{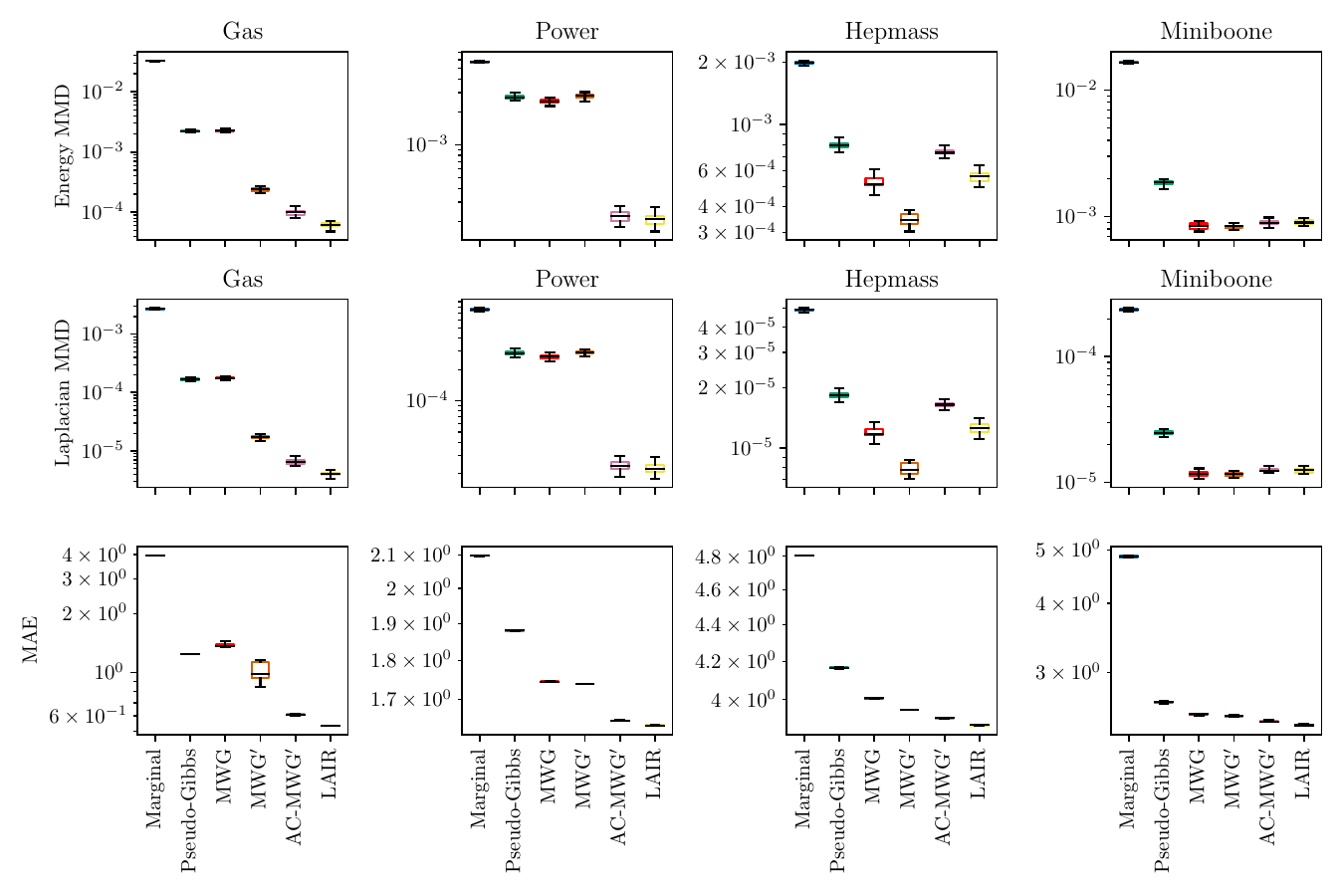}
    \end{center}
    \caption{\emph{Additional metrics on sampling performance on four real-world UCI data sets.} \emph{Top:} energy MMD. \emph{Middle:} Laplacian MMD. \emph{Bottom:} average MAE of the imputations. The divergences are evaluated on a 50k data-point subset of test data (except for Miniboone where the full test data set was used), and the MAE is averaged over the full test data set. In all rows imputations from the final iteration of the corresponding algorithms are used and uncertainty is shown over different runs. }
    \label{fig:apx:uci_energy_laplacian_mae}
\end{figure}

\end{document}